\newcommand{\e}{\mathbf{e}}
\newtheorem{theorem}{Theorem}
\newtheorem{lemma}[theorem]{Lemma}
\newtheorem{assumption}{Assumption}
\newtheorem{remark}{Remark}
\DeclareMathOperator*{\argmax}{argmax}
\DeclareMathOperator*{\spn}{sp}
\definecolor{Green}{rgb}{0.13, 0.65, 0.3}
\newcommand{\de}{\mathrm{d}}
\newcommand{\MU}{\bm{\mu}}
\newcommand{\THE}{\bm{\theta}}
\newcommand{\PHI}{\bm{\phi}}
\newcommand{\spv}{\spn(v^*)}
\newcommand{\Holder}{{H{\"o}lder's inequality}\xspace}
\newcommand{\term}{\textbf{term}}
\newcommand{\algARfull}{\text{Fixed-point OPtimization with Optimism}\xspace}
\newcommand{\algAR}{\textsc{FOPO}\xspace}
\newcommand{\algGamma}{\textsc{OLSVI.FH}\xspace}
\newcommand{\algExp}{\textsc{MDP-Exp2}\xspace}
\newcommand{\exptwo}{\textsc{Exp2}\xspace}
\newcommand{\LSVI}{\textsc{LSVI}\xspace}
\newcommand{\algDis}{\algGamma}
\newcommand{\LineComment}[1]{\hfill$\rhd\ $\text{#1}}
\newcommand{\TV}{\text{\rm TV}}
\newcommand{\PP}{\mathbb{P}}
\newcommand{\Reg}{\text{\rm Reg}}
\newcommand{\tmix}{t_{\text{mix}}}
\newcommand{\trace}[1]{\textsc{tr}\left({#1}\right)}
\newcommand{\one}{\mathbf{1}}
\newcommand{\mix}{t_{\text{mix}}}
\newcommand{\calA}{{\mathcal{A}}}
\newcommand{\calX}{{\mathcal{X}}}
\newcommand{\calF}{\mathcal{F}}
\newcommand{\calV}{\mathcal{V}}
\newcommand{\calN}{\mathcal{N}}
\newcommand{\E}{\mathbb{E}}
\newcommand{\order}{\mathcal{O}}
\newcommand{\otil}{\widetilde{\mathcal{O}}}
\newcommand{\hatQ}{\widehat{Q}}
\newcommand{\thetanew}{\theta_{\text{new}}}
\newcommand{\xaprime}{x_{h'}^{k'}, a_{h'}^{k'}}
\newcommand{\xa}{x_{h}^{k}, a_{h}^{k}}
\newcommand{\Phiprime}{\Phi_{h'}^{k'}}
\newcommand{\rprime}{r_{h'}^{k'}}
\newcommand{\calK}{\mathcal{K}}
\newcommand{\calM}{\mathcal{M}}
\newcommand{\pref}[1]{\prettyref{#1}}
\newcommand{\savehyperref}[2]{\texorpdfstring{\hyperref[#1]{#2}}{#2}}
\begin{document}
\twocolumn[

\aistatstitle{Learning Infinite-horizon Average-reward MDPs \\ with Linear Function Approximation}

\aistatsauthor{ Chen-Yu Wei \And Mehdi Jafarnia-Jahromi \And Haipeng Luo \And Rahul Jain }

\aistatsaddress{\texttt{chenyu.wei@usc.edu} \And \texttt{mjafarni@usc.edu} \And \texttt{haipengl@usc.edu} \And \texttt{rahul.jain@usc.edu}} 
\vspace*{-15pt}
\aistatsaddress{University of Southern California}

]
%\maketitle

\begin{abstract}
We develop several new algorithms for learning Markov Decision Processes in an infinite-horizon average-reward setting with linear function approximation.
Using the optimism principle and assuming that the MDP has a linear structure,
we first propose a computationally inefficient algorithm with optimal $\otil(\sqrt{T})$ regret and another computationally efficient variant with $\otil(T^{\frac{3}{4}})$ regret, where $T$ is the number of interactions.
Next, taking inspiration from adversarial linear bandits, we develop yet another efficient algorithm with $\otil(\sqrt{T})$ regret under a different set of assumptions, improving the best existing result by~\citet{hao2020provably} with $\otil(T^{\frac{2}{3}})$ regret.
Moreover, we draw a connection between this algorithm and the Natural Policy Gradient algorithm proposed by~\citet{kakade2002natural}, and show that our analysis improves the sample complexity bound recently given by \citet{agarwal2019optimality}.
\end{abstract}

%!TEX root=main-neurips-linearmdp.tex

\section{Introduction}\label{sec:intro}

Reinforcement learning with value function approximation has gained significant empirical success in many applications. However, the theoretical understanding of these methods is still quite limited.
Recently, some progress has been made for Markov Decision Processes (MDPs) with a transition kernel and a reward function that are both linear in a fixed state-action feature representation (or more generally with a value function that is linear in such a feature representation).
For example, \citet{jin2019provably} develop an optimistic variant of the Least-squares Value Iteration (\LSVI) algorithm~\citep{bradtke1996linear, osband2016generalization} for the finite-horizon episodic setting with regret $\otil(\sqrt{d^3T})$, where $d$ is the dimension of the features and $T$ is the number of interactions.
Importantly, the bound has no dependence on the number of states or actions.

However, the understanding of function approximation for the {\it infinite-horizon average-reward} setting, even under the aforementioned linear conditions, remains underexplored.
Compared to the finite-horizon setting, the infinite-horizon model is often a better fit for real-world problems such as server operation optimization or stock market decision making which last for a long time or essentially never end.
On the other hand, compared to the discounted-reward model, maximizing the long-term average reward also has its advantage in the sense that the transient behavior of the learner does not really matter for the latter case.
Indeed, the infinite-horizon average-reward setting for the tabular case (that is, no function approximation) is a heavily-studied topic in the literature.
Several recent works start to investigate function approximation for this setting, albeit under strong assumptions~\citep{abbasi2019politex, abbasi2019exploration, hao2020provably}.

\renewcommand{\arraystretch}{1.5}
\begin{table*}[t]
\centering
\caption{Summary of our results and comparisons to prior work.
Our first two algorithms are the first results for infinite-horizon average-reward MDPs under Assumptions~\ref{assump: bellman} and~\ref{assump: Linear MDP},
while our third algorithm improves over the best existing results in a setting with a different set of assumptions.
These two set of assumptions are incomparable, in the sense that Assumption~\ref{assump: bellman} is weaker than Assumptions~\ref{ass: uniform mixing} and~\ref{ass: assump A4}, while Assumption~\ref{assump: Linear MDP} is stronger than Assumption~\ref{ass: linear policy value}.
}
\label{tab:main}
\vspace{10pt}
\begin{tabular}{|c|c|c|c|}
\hline
\multirow{2}{*}{\textbf{Algorithm}} & \multirow{2}{*}{\textbf{Regret}}  &  \multicolumn{2}{c|}{\textbf{Assumptions}} \\
\cline{3-4} 
&  &  Explorability  & Structure  \\
\hline
\algAR (\pref{alg: optimistic qlearning}) & $\otil(\sqrt{T})$  & \multirow{2}{*}{\makecell{Bellman optimality equation \\
     (\pref{assump: bellman})}} &  \multirow{2}{*}{\makecell{linear MDP \\ (\pref{assump: Linear MDP})} } \\  
\cline{1-2} 
{\algGamma} (\pref{alg: optimistic qlearning disc})    & $\otil(T^{\frac{3}{4}})$ &  &  \\
\hline
{\algExp} (\pref{alg: mdpexp2})& $\otil(\sqrt{T})$ & \multirow{3}{*}{\makecell{
uniform mixing \\
(\pref{ass: uniform mixing}) \\
uniformly excited features \\
(\pref{ass: assump A4}) 
}}  &  \multirow{3}{*}{\makecell{linear bias function \\
     (\pref{ass: linear policy value})
}} \\
\cline{1-2} 
Politex \citep{abbasi2019politex} & $\otil(T^{\frac{3}{4}})$ & & \\
\cline{1-2}
AAPI \citep{hao2020provably} & $\otil(T^{\frac{2}{3}})$ &  &  \\
%\cline{1-3}
%EE-Politex  \citep{abbasi2019exploration} & $\otil(T^{\frac{4}{5}})$ &  UM + $\pi_e$  &   \\
\hline 
\end{tabular}
\end{table*}

Motivated by this fact, in this work we significantly expand the understanding of learning MDPs in the infinite-horizon average-reward setting with linear function approximation.
We develop three new algorithms, each with different pros and cons.
Our first two algorithms provably ensure low regret for MDPs with linear transition and reward, which are the {\it first} for this setting to the best of our knowledge.
More specifically, the first algorithm \algARfull (\algAR) is based on the principle of ``optimism in the face of uncertainty'' applied in a novel way.
\algAR aims to find a weight vector (parametrizing the estimated value function) that maximizes the average reward under a fixed-point constraint akin to the \LSVI update involving the observed data and an optimistic term.
The constraint is non-convex and we do not know of a way to efficiently solve it.
\algAR also relies on a lazy update schedule similar to~\citep{abbasi2011improved} for stochastic linear bandits, which is only for the purpose of saving computation in their work but critical for our regret guarantee.
We prove that \algAR enjoys $\otil(\sqrt{d^3T})$ regret with high probability, which is optimal in $T$. (\pref{sec: preliminaries})

Our second algorithm \algGamma addresses the computational inefficiency issue of \algAR with the price of having larger regret.
Specifically, it combines two ideas:
1) solving an infinite-horizon problem via an artificially constructed finite-horizon problem, which is new as far as we know, and
2) the optimistic \LSVI algorithm of~\citet{jin2019provably} for the finite-horizon setting.
%and 2) the idea of solving an average-reward problem with an artificial discount factor from~\citep{wei2020model}.
\algGamma can be implemented efficiently and is shown to achieve $\otil((dT)^{\frac{3}{4}})$ regret. (\pref{sec:optimism})

Our third algorithm \algExp takes a very different approach and is inspired by another algorithm called MDP-OOMD from~\citet{wei2020model}.
MDP-OOMD runs a particular adversarial multi-armed bandit algorithm for each state to obtain $\otil(\sqrt{T})$ regret (ignoring dependence on other parameters) for the tabular case under an ergodic assumption.
We generalize the idea and apply a particular {\it adversarial linear bandit} algorithm known as \exptwo~\citep{dani2008price, bubeck2012towards} for each state (only conceptually --- the algorithm can still be implemented efficiently).
Under the same set of assumptions made in~\citet{hao2020provably} (which does not necessarily require linear transition and reward), we improve their regret bound from $\otil(T^{\frac{2}{3}})$ to $\otil(\sqrt{T})$. 
In \pref{app: NPG}, we also describe the connection of this algorithm with the Natural Policy Gradient algorithm proposed by~\citet{kakade2002natural}, whose sample complexity bound is recently formalized by~\citet{agarwal2019optimality}. We argue that under the setting considered in \pref{sec: mdpexp2}, their analysis translates to a sub-optimal regret bound of $\otil(T^{\frac{3}{4}})$, and that our improvement over theirs comes from the way we construct the gradient estimates. 
%We show that our analysis could be more favorable in some cases. (\pref{sec: mdpexp2}) 

%Finally, we also provide experimental results to support our theory and compare our algorithms with existing ones (See \pref{app: experiment} due to space limit.)

We summarize our results and the comparisons to previous work in \pref{tab:main}.

\paragraph{Related work.}
For the tabular case with finite state and action space in the infinite-horizon average-reward setting, 
the works~\citep{bartlett2009regal, jaksch2010near} are among the first to develop algorithms with provable sublinear regret.
Over the years, numerous improvements have been proposed, see for example~\citep{ortner2018regret, fruit2018efficient, talebi2018variance, fruit2019improved, zhang2019regret, wei2020model}.
In particular, the recent work of~\citet{wei2020model} develops two model-free algorithms for this problem.
We refer the reader to~\citep[Table~1]{wei2020model} for comparisons of existing algorithms. 
As mentioned, our algorithm \algExp is inspired by the MDP-OOMD algorithm of~\citet{wei2020model}.
Also note that their Optimistic Q-learning algorithm reduces an infinite-horizon average-reward problem to a discounted-reward problem.
For technical reasons, we are not able to generalize this idea to the linear function approximation setting (see \pref{sec:FH}). 
Instead, our \algGamma reduces the problem to a finite-horizon version, which is new to the best of our knowledge and might be of independent interest.

The work of \cite{chen2018scalable} considers learning in infinite-horizon average-reward MDPs with linear function approximation, under the assumption that the learner has access to a sampling oracle from which the learner can sample states and actions under any given distribution. The assumptions they make for the MDP is similar to the ones in our \pref{sec: mdpexp2}, and the sample complexity bound they obtain is $\otil\left(1/\epsilon^2 \right)$. However, since the oracle assumption is rather strong, it is not clear how to extend their algorithm to the online setting.

The works of~\citet{abbasi2019politex, abbasi2019exploration, hao2020provably} are among the first to consider the infinite-horizon average-reward setting with function approximation and provable regret guarantees in the online setting.
Their results all depend on some uniformly mixing and uniformly excited feature conditions.
As mentioned, under the same assumption, our \algExp algorithm with $\otil(\sqrt{T})$ regret improves the best existing result by~\citet{hao2020provably} with $\otil(T^{\frac{2}{3}})$ regret.
Moreover, our other two algorithms ensure low regret for linear MDPs without these extra assumptions, which do not appear before.
%\HLcomment{I never really understand these works, not even the setting. So please check this.}

Provable function approximation has gained growing research interest in other settings as well (finite-horizon or discounted-reward). See recent works~\citep{liu2019neural, wang2019neural, yang2019reinforcement, jin2019provably, zanette2020learning, dong2000sqrt, wang2020provably} for example.
In particular, our \algAR algorithm shares some similarity with the algorithm of~\citet{zanette2020learning}, which also relies on solving an optimization problem under a constraint akin to \LSVI, with no efficient implementation.

Adversarial linear bandit is also known as bandit linear optimization.
The \exptwo algorithm~\citep{bubeck2012towards}, on top of which our \algExp algorithm is built, is also known as Geometric Hedge~\citep{dani2008price} or ComBand~\citep{cesa2012combinatorial} in the literature. A concurrent work by~\citet{neu2020online} proposes an algorithm called \textsc{MDP-LinExp3} for the linear function approximation setting that is also based on the adversarial linear bandit framework. However, their result is incomparable to ours because they focus on finite-horizon MDPs with adversarial reward, and they assume that the learner has access to a sampling oracle. 
% !TEX root = main-neurips-linearmdp.tex

\section{Preliminaries}
\label{sec: preliminaries}

We consider infinite-horizon average-reward Markov Decision Processes (MDPs) described by $(\calX, \calA, r, p)$ where $\calX$ is a Borel state space with possibly infinite number of elements, $\calA$ is a finite action set, $r: \calX \times \calA \to [-1, 1]$ is the (unknown) reward function, and $p(\cdot | x, a)$ is the (unknown) transition kernel induced by $x,a$, satisfying $\int_{\calX} p(\de x' | x,a) = 1$ (following integral notation from~\citet{hernandez2012adaptive}). 

The learning protocol is as follows.
A learner interacts with the MDP through $T$ steps, starting from an arbitrary initial state $x_1 \in \calX$.
At each step $t$, the learner decides an action $a_t$, and then observes the reward $r(x_t, a_t)$ as well as the next state $x_{t+1}$ which is a sample drawn from $p(\cdot | x_t, a_t)$. 
The goal of the learner is to be competitive against any fixed stationary policy.
Specifically, a stationary policy is a mapping 
$\pi: \calX\rightarrow \Delta_\calA$ with $\pi(a|x)$ specifying the probability of selecting action $a$ at state $x$.
The long-term average reward of a stationary policy $\pi$ starting from state $x\in\calX$ is naturally defined as: 
\begin{align*}
J^\pi(x) \triangleq \liminf_{T \to \infty} \frac{1}{T} \E \Bigg[
&\sum_{t=1}^T r(x_t, a_t) \Bigm\vert x_1 = x, \;\;\forall t \geq 1, \\
&  a_t\sim \pi(\cdot|x_t), \;  x_{t+1}\sim p(\cdot|x_t, a_t)  \Bigg].
\end{align*}
The performance measure of the learner, known as regret, is then defined as $\Reg_T := \max_{\pi} \sum_{t=1}^T (J^\pi(x_1) - r(x_t, a_t))$, which is the difference between the total rewards of the best stationary policy and that of the learner.

However, in contrast to the finite-horizon episodic setting where ensuring sublinear regret is always possible, it is known that in our setting a necessary condition is that the optimal policy has a long-term average reward that is independent of the initial state~\citep{bartlett2009regal}.
To this end, throughout the paper we only consider a broad subclass of MDPs where a certain form of Bellman optimality equation holds~\citep{hernandez2012adaptive}: 
\begin{assumption}[Bellman optimality equation]
     \label{assump: bellman}
    %There exists $J^*\in\mathbb{R}$, and a bounded measurable function $v^*: \calX\rightarrow \mathbb{R}$ such that 
%\begin{align}
%    J^* + v^*(x) = \max_{a\in\calA} \left\{r(x,a) + \E_{x'\sim p(\cdot|x,a)}[v^*(x')]\right\}.   \label{eqn:OE}
%\end{align}
There exist $J^*\in\mathbb{R}$ and bounded measurable functions $v^*:\calX\rightarrow \mathbb{R}$ and $q^*: \calX\times \calA\rightarrow \mathbb{R}$ such that the following holds for all $x \in \calX$ and $a\in\calA$:
\begin{equation}\label{eq: bellman}
\begin{split}
     J^* + q^*(x,a) &= r(x,a) + \E_{x'\sim p(\cdot|x,a)} [v^*(x')], \\
     %\text{and}\quad
     v^*(x) &= \max_{a\in\calA}q^*(x,a).  
\end{split}     
\end{equation}
\end{assumption}

Indeed, under this assumption, the claim is that a policy $\pi^*$ that deterministically selects an action from $\argmax_a q^*(x,a)$ at each state $x$ is the optimal policy, with $J^{\pi^*}(x) = J^*$ for all $x$.
To see this, note that for any policy $\pi$, using the Bellman optimality equation we have
\begin{align*}
    &J^{\pi}(x)=\liminf_{T\rightarrow \infty}\frac{1}{T}\E\Bigg[\sum_{t=1}^T \Bigg(J^* + \sum_{a\in\calA} q^*(x_t, a)\cdot \pi(a|x_t) \\
    &\qquad\qquad\qquad\qquad\qquad\qquad - v^*(x_{t+1})\Bigg)\Bigg]  \\
    &\leq \liminf_{T\rightarrow \infty}\frac{1}{T}\E\Bigg[\sum_{t=1}^T \left(J^* + v^*(x_t) - v^*(x_{t+1})\right)\Bigg]=J^*,
\end{align*}
with equality attained by $\pi^*$, proving the claim.
Consequently, under \pref{assump: bellman} we simply write the regret as $\Reg_T := \sum_{t=1}^T (J^* - r(x_t, a_t))$.

%$J^{\pi^*}=\lim_{T\rightarrow \infty}\frac{1}{T}\E\left[\sum_{t=1}^T \left(J^* + v^*(x_t) - v^*(x_{t+1})\right)\right] = J^*$, and for any policy $\pi$, $J^{\pi}=\lim_{T\rightarrow \infty}\frac{1}{T}\E\left[\sum_{t=1}^T \left(J^* + q^*(x_t, \cdot)\cdot \pi(\cdot|x_t) - v^*(x_{t+1})\right)\right] \leq \lim_{T\rightarrow \infty}\frac{1}{T}\E\left[\sum_{t=1}^T \left(J^* + v^*(x_t) - v^*(x_{t+1})\right)\right]=J^*$. 

%Let $J^*(x) = \sup_\pi J^\pi(x)$. A policy $\pi^*$ is said to be optimal if it satisfies $J^{\pi^*}(x) = J^*(x)$ for all $x \in \calX$. %Unlike MDPs with finite state spaces, the average-optimal policies may not exist, and if they do, they may not be stationary (see examples 8.10.1 and 8.10.2 of \cite{puterman2014markov}). 

All existing works on regret minimization for infinite-horizon average-reward MDPs make this assumption, either explicitly or through even stronger assumptions which imply this one.
In the tabular case with a finite state space, weakly communicating MDPs is the broadest class to study regret minimization in the literature, and is known to satisfy \pref{assump: bellman} (see~\citep{puterman2014markov}). 
More generally, \pref{assump: bellman} holds under many other common conditions; see~\citep[Section~3.3]{hernandez2012adaptive}.

Note that $v^*(x)$ and $q^*(x,a)$ quantify the \emph{relative advantage} of starting with $x$ and starting with $(x,a)$ respectively and then acting optimally in the MDP. 
Therefore, $v^*$ is sometimes called the \emph{state bias function} and $q^*$ is called the \emph{state-action bias function}. 

For a bounded function $v: \calX\rightarrow \mathbb{R}$, we define its span as $\spn(v) \triangleq \sup_{x,x'\in\calX} |v(x)-v(x')|$. 
Notice that if $(v^*, q^*)$ is a solution of \pref{eq: bellman}, then a translated version $(v^*-c, q^*-c)$ for any constant $c$ is also a solution.  In the remaining of the paper,
we let $(v^*, q^*)$ be an arbitrary solution pair of \pref{eq: bellman} with a small span $\spv$ in the sense that $\spv \leq 2\spn(v')$ for any other solution $(v', q')$.  We also assume without loss of generality $|v^*(x)|\leq \frac{1}{2}\spv$ for any $x$ because we can perform the above translation and center the values of $v^*$ around zero. 
Similarly to previous works (e.g.~\citep{wei2020model}), $\spv$ is assumed to be known to the learner.

\section{Optimism-based Algorithms}
\label{sec:optimism}

In this section, we present two optimism-based algorithms with sublinear regret, under only one extra assumption that the MDP is {\it linear} (also known as low-rank MDPs).
We emphasize that earlier works for linear MDPs in the finite-horizon average-reward setting all require extra strong assumptions~\citep{abbasi2019politex, abbasi2019exploration, hao2020provably}.

Specifically, a linear MDP has a transition kernel and a reward function both linear in some state-action feature representation, formally summarized as:

\begin{assumption}[Linear MDP]
    \label{assump: Linear MDP}
    There exist a known $d$-dimensional feature mapping $\Phi: \calX\times \calA\rightarrow \mathbb{R}^d$, $d$ unknown measures $\MU=(\mu_1, \mu_2, \ldots, \mu_d)$ over $\calX$, and an unknown vector $\THE \in \mathbb{R}^d$ such that for all $x,x'\in\calX$ and $a\in\calA$,
    \begin{align*}
         p(x'~|~x,a) = \Phi(x,a)^\top \MU(x'), \qquad r(x,a) = \Phi(x,a)^\top \THE. 
    \end{align*}
Without loss of generality, we further assume that for all $x\in\calX$ and $a\in\calA$, $\|\Phi(x,a)\|\leq \sqrt{2}$, the first coordinate of $\Phi(x,a)$ is fixed to $1$, and that $\|\MU(\calX)\|\leq \sqrt{d}$, $\|\THE\|\leq \sqrt{d}$, where we use $\MU(\calX)$ to denote the vector $(\mu_1(\calX), \ldots, \mu_d(\calX))$ and $\mu_i(\calX)\triangleq \int_{\calX} \de\mu_i(x)$ is the total measure of $\calX$ under $\mu_i$. (All norms  are 2-norm.)
% for all $x,a,x'$. Furthermore, we make the following mild assumption: for all $(x,a)\in\calX\times \calA$, \ $\e_1^\top \Phi(x,a)=\frac{1}{2}$, where $\e_1=(1,0,\ldots, 0)\in\mathbb{R}^d$.      
\end{assumption}

In \citep{jin2019provably}, the same assumption is made except for a different rescaling:  $\|\Phi(x,a)\|\leq 1$, $\|\MU(\calX)\|\leq \sqrt{d}$, and $\|\THE\|\leq \sqrt{d}$.
The reason that this is without loss of generality is not justified in \citep{jin2019provably}, and for completeness we prove this in \pref{app: MVEE section}.
With this scaling, clearly one can augment the feature $\Phi(x,a)$ with a constant coordinate of value $1$ and augment $\MU(x)$ and $\THE$ with a constant coordinate of value $0$, such that the linear structure is preserved while the scaling specified in \pref{assump: Linear MDP} holds.

%Clearly, if we have a feature set such that $\|\Phi(x,a)\|\leq 1$, and $p(x'~|~x,a) = \Phi(x,a)^\top \MU(x'), \ \ r(x,a) = \Phi(x,a)^\top \THE$ with $\|\MU(\calX)\|\leq \sqrt{d}$, $\|\THE\|\leq \sqrt{d}$ (which is the standard set of assumptions made by \citep{jin2019provably} for linear MDPs), then simply by augmenting the feature vector by an additional coordinate with the constant value $1$, our assumptions can be satisfied. 

%It is not justified in \citep{jin2019provably} why the assumptions $\|\MU(\calX)\|\leq \sqrt{d}$, $\|\THE\|\leq \sqrt{d}$ are ``without loss of generality.'' For completeness, in \pref{app: MVEE section}, we show that this is indeed the case if we first perform a form of \emph{feature normalization} before learning.  
%}

%Linear MDPs are studied before in many works such as~\citep{bradtke1996linear, melo2007q, jin2019provably}.
%The exact same normalization on the norm of $\Phi, \MU$, and $\THE$ is considered in~\citep{jin2019provably} as well.
%We append a constant $1$ to all feature vectors as the first coordinate, which is also standard in machine learning and eliminates the need to add a separate bias term for linear models.
Under \pref{assump: Linear MDP}, one can show that the state-action bias function $q^*$ is in fact also linear in the features.
\begin{lemma}\label{lem: w* existence}
     Under \pref{assump: bellman} and \pref{assump: Linear MDP}, there exists a fixed weight vector $w^* \in \mathbb{R}^d$ such that $q^*(x,a) = \Phi(x,a)^\top w^*$ for all $x\in \calX$ and $a\in \calA$, and furthermore, $\|w^*\|\leq (2 + \spv)\sqrt{d}$.  
\end{lemma}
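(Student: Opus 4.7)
The strategy is constructive: I will write down a candidate $w^*$ obtained by pushing the linear structure through the Bellman optimality equation, and then verify the norm bound by the triangle inequality.

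First, I would substitute \pref{assump: Linear MDP} into \pref{eq: bellman}. Because $p(\de x'\mid x,a) = \Phi(x,a)^\top \MU(\de x')$, the expectation on the right-hand side factors cleanly as
\begin{equation*}
\E_{x' \sim p(\cdot|x,a)}[v^*(x')] \;=\; \Phi(x,a)^\top \int_\calX v^*(x')\, \MU(\de x').
\end{equation*}
Combining this with $r(x,a) = \Phi(x,a)^\top \THE$, \pref{eq: bellman} becomes
\begin{equation*}
q^*(x,a) \;=\; \Phi(x,a)^\top\! \Big(\THE + \int_\calX v^*(x')\, \MU(\de x')\Big) - J^*.
\end{equation*}

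The only remaining piece is the scalar $-J^*$. Here I would use the convention in \pref{assump: Linear MDP} that the first coordinate of $\Phi(x,a)$ equals $1$, which allows writing $-J^* = \Phi(x,a)^\top(-J^* \e_1)$ where $\e_1$ is the first standard basis vector. This motivates defining
\begin{equation*}
w^* \;\triangleq\; \THE - J^* \e_1 + \int_\calX v^*(x')\, \MU(\de x'),
\end{equation*}
which by construction satisfies $q^*(x,a) = \Phi(x,a)^\top w^*$ for every $(x,a)$, establishing the linearity claim.

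It then remains to bound $\|w^*\|$. Applying the triangle inequality to the three pieces: (i) $\|\THE\| \leq \sqrt d$ is assumed directly; (ii) $\|J^* \e_1\| = |J^*| \leq 1 \leq \sqrt d$ follows from $r \in [-1,1]$, which forces the long-run average reward $J^*$ to lie in $[-1,1]$; and (iii) for the integral term, the centering $|v^*(x')| \leq \tfrac12 \spv$ gives the coordinatewise bound $\big|\int v^*(x')\,\mu_i(\de x')\big| \leq \tfrac12 \spv \cdot \mu_i(\calX)$, so taking $\ell_2$ norms and using $\|\MU(\calX)\| \leq \sqrt d$ yields $\big\|\int v^*(x')\,\MU(\de x')\big\| \leq \tfrac12 \spv \sqrt d$. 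Summing gives $\|w^*\| \leq (2 + \tfrac12 \spv)\sqrt d \leq (2+\spv)\sqrt d$.

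I do not anticipate a serious obstacle: the argument is essentially algebraic once the right guess for $w^*$ is made. The only genuinely non-trivial steps are recognizing that the augmented-feature convention (first coordinate identically $1$) is exactly what is needed to absorb $-J^*$, and being a little careful with the measure-theoretic interpretation of $\int v^* \de \mu_i$ when $\mu_i$ is signed—both of which are standard conventions inherited from the linear MDP framework.
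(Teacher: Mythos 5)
Your proposal is correct and follows essentially the same route as the paper: both define $w^* = \THE - J^*\e_1 + \int_\calX v^*(x')\,\de\MU(x')$, use the first-coordinate-equals-one convention to absorb $-J^*$, and bound the norm by the triangle inequality with $\|\THE\|\le\sqrt d$, $|J^*|\le 1$, and $|v^*|\le\tfrac12\spv$ together with $\|\MU(\calX)\|\le\sqrt d$. The only cosmetic difference is that the paper bounds the $J^*\e_1$ term by $1$ rather than $\sqrt d$; both choices land within $(2+\spv)\sqrt d$.
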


Based on this lemma, a natural idea emerges: at time $t$, build an estimator $w_t$ of $w^*$ using observed data, then act according to the estimated long-term reward of each action given by $\Phi(x_t,a)^\top w_t$.
While the idea is intuitive, how to construct the estimator and, perhaps more importantly, how to incorporate the optimism principle well known to be important for learning with partial information, are highly non-trivial.
In the next two subsections, we describe two different ways of doing so, leading to our two algorithms \algAR and \algGamma.

\subsection{\algARfull (\algAR)}
\label{sec: optimistic q-learning with function approximation}

\begin{algorithm}[t]

\caption{\algARfull (\algAR)}
\label{alg: optimistic qlearning}
\textbf{Parameters}: $0<\delta<1$, $\lambda=1$, $\beta=20(2+\spv)d\sqrt{\log(T/\delta)}$\\
\textbf{Initialize}: 
$\Lambda_1 = \lambda I$ where $I \in \mathbb{R}^{d\times d}$ is the identity matrix \\ %$Q_1 (\cdot, \cdot) = 0$ \\
%$s_0 = 1$ \\
 \For{$t=1, \ldots, T$}{
   \If{$t=1$ or $\det(\Lambda_t)\geq 2\det(\Lambda_{s_{t-1}})$}{ %\label{line:update condition} 
    Set $s_t= t$ \LineComment{$s_t$ records the most recent update} \\
    Let $w_t$ be the solution of the optimization problem:  %\label{eq: the form of wt}
    \begin{align}
         &\max_{w_t, b_t \in \mathbb{R}^d,   J_t \in \mathbb{R}} \;\; J_t \notag \\
         \textit{s.t.}\ \ & w_t = \Lambda_{t}^{-1} \sum_{\tau=1}^{t-1}\Big(\Phi(x_\tau, a_\tau)(r(x_\tau, a_\tau)  \label{eq:fixed_point}\\
         &\qquad\qquad\qquad\qquad - J_t + v_t(x_{\tau+1}) ) + b_t\Big) \notag \\
         & q_t(x,a) = \Phi(x,a)^\top w_t, \quad v_t(x) = \max_a q_t(x,a) \notag \\
         & \|b_t\|_{\Lambda_t} \leq \beta, \quad \|w_t\|\leq (2+\spv)\sqrt{d} \notag
    \end{align} 
    }
    \Else{
             \vspace{-0.8cm}
         \begin{align*}
             &(w_t, J_t, b_t, v_t, q_t, s_t) \\
             &= (w_{t-1}, J_{t-1}, b_{t-1}, v_{t-1}, q_{t-1}, s_{t-1})
             \end{align*} 
    }
    Play $a_t=\argmax_{a} q_{t}(x_t, a)$ \\
    Observe $r(x_t, a_t)$ and $x_{t+1}$ \\
    Update $\Lambda_{t+1} =  \Lambda_t + \Phi(x_{t}, a_{t})\Phi(x_{t}, a_{t})^\top$ 
} 
\end{algorithm}

We present our first algorithm \algAR which is computationally inefficient but achieves regret $\otil(\spv\sqrt{d^3T})$.
This is optimal in $T$ since even in the tabular case $\order(\sqrt{T})$ is unimprovable~\citep{jaksch2010near}.
See \pref{alg: optimistic qlearning} for the complete pseudocode.

As mentioned, the key part lies in how the estimator $w_t$ is constructed.
In \pref{alg: optimistic qlearning}, this is done by solving an optimization problem over certain constraints.
To understand the first constraint Eq.~\eqref{eq:fixed_point},
recall that $q^*(x,a)=\Phi(x,a)^\top w^*$ satisfies the Bellman optimality equation: 
\begin{align*}
     &\Phi(x,a)^\top w^* = r(x,a) - J^* + \int_\calX v^*(x')p(\de x'~|~x,a)  \\
      &= r(x,a) - J^*  + \int_\calX \left(\max_{a'} \Phi(x',a')^\top w^*\right) p(\de x'~|~x,a) . 
\end{align*}
While $p$ and $r$ are unknown, we do observe samples $x_1, \ldots, x_{t-1}$ and $r(x_1, a_1), \ldots, r(x_{t-1}, a_{t-1})$.
If for a moment we assume $J^*$ was known,
then it is natural to try to find $w_t$ such that $\forall \tau = 1, \ldots, t-1$,
\begin{align}
     \Phi(x_\tau,a_\tau)^\top w_t \approx r(x_\tau, a_\tau) - J^* + \max_{a'}\Phi(x_{\tau+1},a')^\top w_t. \label{eq:MSE formula}
\end{align}
In common variants of Least-squares Value Iteration (\LSVI) update, 
the $w_t$ on the right hand side of \pref{eq:MSE formula} would be replaced with another already computed weight vector $w_t'$ that is either from the last iteration (i.e, $w_{t-1}$) or from the next layer in the case of episodic MDPs.
Then solving a least-squares problem with regularization $\lambda\|w_t\|^2$ gives a natural estimate of $w_t$:
\[
\Lambda_{t}^{-1} \sum_{\tau=1}^{t-1}\Phi(x_\tau, a_\tau)\left(r(x_\tau, a_\tau) - J^* + \max_{a'}\Phi(x_{\tau+1},a')^\top w_t' \right) 
\]
where $\Lambda_t = \lambda I + \sum_{\tau<t} \Phi(x_{\tau}, a_{\tau})\Phi(x_{\tau}, a_{\tau})^\top$ is the empirical covariance matrix.
Based on this formula, what we propose in \pref{alg: optimistic qlearning} are the following three modifications.
First, instead of using an already computed weight $w_t'$, we directly set it back to $w_t$ (and thus $\max_{a'}\Phi(x_{\tau+1},a')^\top w_t' = v_t(x_{\tau+1})$), making the formula a fixed-point equation now.
Second, to incorporate uncertainty, we introduce a slack variable $b_t$ with a bounded quadratic norm $\|b_t\|_{\Lambda_t} \triangleq \sqrt{b_t^\top\Lambda_t b_t} \leq \beta$ (for a parameter $\beta$) that controls the amount of uncertainty.
Last, to deal with the fact that $J^*$ is unknown, we replace it with a variable $J_t$ (arriving at \pref{eq:fixed_point} finally), and apply the well-known principle of \emph{optimism in the face of uncertainty} --- we maximize the long-term average reward $J_t$ (over $w_t, b_t$ and $J_t$) under the aforementioned constraints and also $\|w_t\|\leq (2+\spv)\sqrt{d}$ in light of \pref{lem: w* existence}.

With the vector $w_t$ and the corresponding bias function $q_t$, the algorithm simply plays $a_t=\argmax_{a} q_{t}(x_t, a)$ greedily.
Note that $w_t$ is only updated when the determinant of $\Lambda_t$ doubles compared to that of $\Lambda_{s_{t-1}}$ where $s_{t-1}$ is the time step with the most recent update before time $t$. %(\pref{line:update condition}).
This can happen at most $\order(d\log T)$ times.
Similar ideas are used in e.g.,~\citep{abbasi2011improved} for stochastic linear bandits.
However, while they use this lazy update only to save computation, here we use it to make sure that $w_t$ does not change too often, which is critical for our regret analysis.

We point out that the closest existing algorithm we are aware of is the one from a recent work~\citep{zanette2020learning} for the finite-horizon setting.
Just like theirs, our algorithm also does not admit an efficient implementation due to the complicated nature of the optimization problem.
However, it can be shown that the constraint set is non-empty with $(w_t, b_t, J_t) = (w^*, b, J^*)$ for some $b$ being a feasible solution (with high probability).
This fact also immediately implies that $J_t$ is indeed an optimistic estimator of $J^*$ in the following sense:
\begin{lemma}
    \label{lemma: J optimistic}
    With probability at least $1-\delta$,  \pref{alg: optimistic qlearning} ensures $J_t \geq J^*$ for all $t$.
\end{lemma}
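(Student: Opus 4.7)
The plan is to prove optimism by exhibiting a feasible point of the program \pref{eq:fixed_point} that matches the true $(w^*, J^*)$; since the program maximizes $J_t$, this will immediately yield $J_t \geq J^*$.

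Concretely, at any update time $t$, consider the candidate $(w_t, J_t) = (w^*, J^*)$. By \pref{lem: w* existence} we have $q_t(x,a) = \Phi(x,a)^\top w^* = q^*(x,a)$, hence $v_t = v^*$, and the norm constraint $\|w_t\| \leq (2+\spv)\sqrt{d}$ holds. Using the Bellman optimality equation \pref{eq: bellman}, each target in the fixed-point constraint decomposes as
\begin{align*}
r(x_\tau,a_\tau) - J^* + v^*(x_{\tau+1}) = \Phi(x_\tau,a_\tau)^\top w^* + \eta_\tau,
\end{align*}
where $\eta_\tau := v^*(x_{\tau+1}) - \E_{x'\sim p(\cdot\mid x_\tau,a_\tau)}[v^*(x')]$ is a martingale difference sequence bounded by $\spv$ (using $|v^*|\leq \tfrac{1}{2}\spv$). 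Solving \pref{eq:fixed_point} for the required slack, and using $\Lambda_t - \sum_{\tau<t}\Phi(x_\tau,a_\tau)\Phi(x_\tau,a_\tau)^\top = \lambda I$, the candidate slack comes out to $b_t^* = \Lambda_t^{-1}\bigl(\lambda w^* - \sum_{\tau<t}\Phi(x_\tau,a_\tau)\eta_\tau\bigr)$.

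The central remaining task is to verify the slack constraint $\|b_t^*\|_{\Lambda_t} \leq \beta$. By the identity $\|\Lambda_t^{-1}u\|_{\Lambda_t} = \|u\|_{\Lambda_t^{-1}}$ and the triangle inequality, this reduces to bounding
\begin{align*}
\lambda \|w^*\|_{\Lambda_t^{-1}} + \Big\|\sum_{\tau<t} \Phi(x_\tau,a_\tau)\eta_\tau\Big\|_{\Lambda_t^{-1}}.
\end{align*}
The first term is at most $\|w^*\| \leq (2+\spv)\sqrt{d}$ since $\Lambda_t \succeq \lambda I = I$ and $\|w^*\|$ is controlled by \pref{lem: w* existence}. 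The second term is a classical self-normalized martingale deviation, which I would control via the concentration inequality of \citet{abbasi2011improved} (their Theorem~1), applied with the $\spv$-bound on $|\eta_\tau|$; that theorem yields a uniform-in-$t$ bound of order $\spv\sqrt{d\log(T/\delta)}$ with probability at least $1-\delta$. Combining these two bounds shows that the stated $\beta = 20(2+\spv)d\sqrt{\log(T/\delta)}$ is comfortably sufficient.

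The main subtlety I anticipate is applying the self-normalized concentration with the correct filtration: this is precisely why we substitute the deterministic $w^*$ (and hence $v^*$) rather than the random, data-dependent $w_t$ into the candidate, so that $\{\eta_\tau\}$ genuinely has conditional mean zero given the past and the theorem of \citet{abbasi2011improved} applies. Once high-probability feasibility is established at every update time, the carry-over branch of \pref{alg: optimistic qlearning} simply copies $(w_t,J_t,b_t) = (w_{s_t},J_{s_t},b_{s_t})$ from the most recent update time $s_t$, so $J_t = J_{s_t} \geq J^*$ extends to every $t$, completing the proof.
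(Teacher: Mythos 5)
Your proposal is correct and follows essentially the same route as the paper: exhibit $(w^*, b, J^*)$ as a feasible point of the maximization, compute the required slack $b = \Lambda_t^{-1}\bigl(\lambda w^* - \sum_{\tau<t}\Phi(x_\tau,a_\tau)\eta_\tau\bigr)$, and bound $\|b\|_{\Lambda_t}$ by splitting into the regularization term (controlled via \pref{lem: w* existence}) and the self-normalized martingale term (controlled via the concentration of \citet{abbasi2011improved}, which is the paper's \pref{lem:self-normalized process}). Your remark that substituting the deterministic $v^*$ rather than the data-dependent $v_t$ is what makes the martingale concentration apply without a covering argument is exactly the right observation and matches the paper's reasoning.
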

With the help of this lemma, we prove the following regret bound of \algAR with optimal (in $T$) rate.

\begin{theorem}
     \label{thm: optimistic AR}
     Under Assumptions~\ref{assump: bellman} and~\ref{assump: Linear MDP}, \algAR guarantees with probability at least $1-3\delta$: \[\Reg_T = \order\left(\spv\log (T/\delta) \sqrt{d^3T }\right).\] 
\end{theorem}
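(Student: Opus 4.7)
The plan follows the standard optimism-plus-Bellman-error template of optimistic value-iteration analyses, e.g., those of \citet{jin2019provably, zanette2020learning}, adapted to our fixed-point formulation and the infinite-horizon average-reward setting. By \pref{lemma: J optimistic}, $J_t \ge J^*$ for all $t$ with probability at least $1-\delta$, so
\[
\Reg_T \;\le\; \sum_{t=1}^T \big(J_t - r(x_t,a_t)\big).
\]
The heart of the proof is a per-step Bellman-error inequality
\[
\big|q_t(x_t,a_t) - r(x_t,a_t) + J_t - \E_{x'\sim p(\cdot|x_t,a_t)}[v_t(x')]\big| \;=\; \order(\beta)\,\|\Phi(x_t,a_t)\|_{\Lambda_{s_t}^{-1}},
\]
where $s_t$ is the most recent update time before $t$.

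To derive this I would use \pref{assump: Linear MDP}: for any fixed $v_t,J_t$, the quantity $r(x,a) - J_t + \E_{x'\sim p(\cdot|x,a)}[v_t(x')]$ is linear in $\Phi(x,a)$, equal to $\Phi(x,a)^\top w_t^\dagger$ for some $w_t^\dagger$ of norm $\order(\spv\sqrt{d})$ (combining $\THE$, the integral of $v_t$ against $\MU$, and a constant shift absorbed by the fixed first coordinate of $\Phi$). Writing $r(x_\tau,a_\tau)-J_t+v_t(x_{\tau+1})=\Phi(x_\tau,a_\tau)^\top w_t^\dagger + \eta_\tau$ with $\eta_\tau:=v_t(x_{\tau+1})-\E[v_t(x_{\tau+1})\mid x_\tau,a_\tau]$, substituting into the fixed point that defines $w_t$ gives
\[
w_t - w_t^\dagger \;=\; -\lambda\Lambda_{s_t}^{-1}w_t^\dagger \;+\; \Lambda_{s_t}^{-1}\sum_{\tau<s_t}\Phi(x_\tau,a_\tau)\eta_\tau \;+\; \Lambda_{s_t}^{-1}b_t.
\]
Taking inner product with $\Phi(x_t,a_t)$ and applying Cauchy-Schwarz in the $\Lambda_{s_t}^{-1}$-geometry bounds each term by a constant times $\|\Phi(x_t,a_t)\|_{\Lambda_{s_t}^{-1}}$; the regularization and slack pieces are routine given the norm bounds on $w_t^\dagger$ and on $b_t$. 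The real technical hurdle is the noise sum: since $v_t$ depends adaptively on past data, one cannot directly apply a self-normalized martingale inequality, so I would instead union-bound over an $\epsilon$-cover of $\calV=\{x\mapsto\max_a\Phi(x,a)^\top w:\|w\|\le(2+\spv)\sqrt{d}\}$, whose log-covering number is $\otil(d)$. This yields $\|\sum_{\tau<s_t}\Phi(x_\tau,a_\tau)\eta_\tau\|_{\Lambda_{s_t}^{-1}}=\otil(\spv d)$ uniformly in $t$, matching the chosen $\beta=20(2+\spv)d\sqrt{\log(T/\delta)}$.

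Given the per-step bound, greedy action selection gives $q_t(x_t,a_t)=v_t(x_t)$, and summing yields
\[
\Reg_T \;\le\; \sum_{t=1}^T\big(v_t(x_t) - \E[v_t(x_{t+1})\mid x_t,a_t]\big) + \order(\beta)\sum_{t=1}^T\|\Phi(x_t,a_t)\|_{\Lambda_{s_t}^{-1}}.
\]
The first sum splits into a telescoping piece $\sum_t(v_t(x_t)-v_t(x_{t+1}))$ bounded by $\order(\spv\cdot d\log T)$, where the lazy update rule is essential, since $v_t$ is piecewise constant with only $\order(d\log T)$ epoch boundaries (each contributing at most $\spv$), plus a martingale-difference sum bounded by $\otil(\spv\sqrt{dT})$ via Azuma--Hoeffding (using $\|v_t\|_\infty=\order(\spv\sqrt{d})$). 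The second sum is controlled by the elliptical potential lemma combined with the standard lazy-update inequality $\|\Phi\|_{\Lambda_{s_t}^{-1}}^2\le 2\|\Phi\|_{\Lambda_t^{-1}}^2$ (which follows from the determinant-doubling rule), giving $\sum_t\|\Phi(x_t,a_t)\|_{\Lambda_{s_t}^{-1}}=\otil(\sqrt{dT})$. Putting $\beta=\otil(\spv d)$ into these estimates delivers $\Reg_T=\order(\spv\log(T/\delta)\sqrt{d^3T})$, valid with probability $1-3\delta$ after union-bounding the three failure events (optimism, the uniform concentration for the noise term, Azuma--Hoeffding). The main technical obstacle is the uniform self-normalized concentration over $\calV$ in the Bellman-error step; the rest is a textbook combination of optimism, telescoping, and the elliptical potential lemma.
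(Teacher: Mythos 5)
Your proposal is correct and follows essentially the same route as the paper's proof: optimism via \pref{lemma: J optimistic}, a covering-number-based uniform self-normalized concentration for the adaptively chosen $v_t$ (\pref{lem:concentration function class} and \pref{lem:covering number}), the lazy-update determinant argument, telescoping plus Azuma, and the elliptical potential lemma, with the only difference being that you phrase the middle step as a per-step Bellman error of the learned $q_t$ rather than comparing $w_t$ to $w^*$ and invoking the Bellman optimality equation at the end --- an equivalent reorganization. (Minor typo: the telescoping sum should read $\sum_{t}\big(\E[v_t(x_{t+1})\mid x_t,a_t]-v_t(x_t)\big)$, i.e., with the opposite sign, though this does not affect the bound.)
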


\subsection{Finite-Horizon Optimistic Least-Square Value Iteration (\algDis)}
\label{sec:FH}

%\begin{algorithm}[t]
%\setcounter{AlgoLine}{0}
%\caption{\algDis}
%\label{alg: optimistic qlearning disc}
%\textbf{Parameters}: $0<\delta<1$, \quad $\gamma = 1- \min\left\{ \left(\frac{d^3(\log (T/\delta))^2}{\spv^2 T}\right)^{\nicefrac{1}{4}}, \frac{1}{2}\right\}$, \quad $\beta=\frac{40d\sqrt{\log (T/\delta)}}{1-\gamma}$, \quad $\lambda=1$ \\
%\textbf{Initialization}:
%$\Lambda_1 = \lambda I$ where $I \in \mathbb{R}^{d\times d}$ is the identity matrix \\ % $Q_1 (\cdot, \cdot) = 0$ \\
%\nl \For{$t=1, \ldots, T$}{

%\nl       \If{$t=1$ or $\det(\Lambda_t)\geq 2\det(\Lambda_{s_{t-1}})$}{
%           Set $s_t=t$. 
%}
%\nl       Compute $w_t = \Lambda_t^{-1}\sum_{\tau=1}^{t-1}\Phi(x_\tau, a_\tau)\left(r(x_\tau, a_\tau) + \gamma V_{t-1}(x_{\tau+1})\right)$  \label{line: assignment step}  \\
%\nl       Define $Q_t(x,a) = \begin{cases}
%               w_t^\top \Phi(x,a) + \beta \sqrt{\Phi(x,a)^\top \Lambda_{t}^{-1} \Phi(x,a)} &\text{for\ }t>1\\
%               \frac{1}{1-\gamma} &\text{for\ }t=1 
%              \end{cases}
%          $ \label{line: define hatQ}\\
%\nl       Define $V_t(x)= \max_{a} Q_t(x,a)$\\
%\nl    Play $a_t=\argmax_{a}Q_{s_t}(x_t,a)$, observe $r_t(x_t,a_t)$ and $x_{t+1}$ \\
%\nl    Update $\Lambda_{t+1} =  \Lambda_t + \Phi(x_{t}, a_{t})\Phi(x_{t}, a_{t})^\top$ 
%} 
%\end{algorithm}

Next, we present another optimism-based algorithm which can be implemented efficiently, albeit with a suboptimal regret guarantee.
%See \pref{alg: optimistic qlearning disc} for the pseudocode.
%The high-level idea is still based on \LSVI, except that importantly, we introduce an artificial discount factor $\gamma$ even though we try to solve an average-reward problem, inspired by the work of~\citep{wei2020model}.
The high-level idea is still based on \LSVI. However, since we do not know how to efficiently solve a fixed-point problem as in \pref{alg: optimistic qlearning}, we ``open the loop'' by solving a finite-horizon problem instead. %The resulted algorithm (\pref{alg: optimistic qlearning disc}) essentially reduces an average-reward problem to a finite-horizon problem. 
%More specifically, consider the discounted version of the Bellman optimality equation:
%\begin{align}
%     Q^*(x,a) = r(x,a) + \gamma \E_{x'\sim p(\cdot|x,a)}[V^*(x')], \quad
%     V^*(x) = \max_a Q^*(x,a), \label{eq:discounted_bellman}
%\end{align}
%where we have used $Q$ and $V$ to distinguish from $q$ and $v$ used for the average-reward version.
More specifically, we divide the $T$ rounds into $\nicefrac{T}{H}$ episodes each with $H$ rounds, and run a finite-horizon optimistic \LSVI algorithm over the episodes as in~\citep{jin2019provably}. 

\DontPrintSemicolon 
\begin{algorithm}[t]
\setcounter{AlgoLine}{0}
\caption{\algDis}
\label{alg: optimistic qlearning disc}
\textbf{Parameters}: $0<\delta<1$,  $\lambda=1$,  $\beta=40dH\sqrt{\log (T/\delta)}$, $H =\max\left\{ \frac{\sqrt{\spv} T^{1/4}}{d^{3/4}},  \left(\frac{\spv T}{d^2}\right)^{\nicefrac{1}{3}}\right\}$  \\
\textbf{Initialize}:
$\Lambda_1 = \lambda I$ where $I \in \mathbb{R}^{d\times d}$ is the identity matrix \\ % $Q_1 (\cdot, \cdot) = 0$ \\
\textbf{Define}: $x_{h}^k = x_t$ and $a_{h}^k = a_t$, for $t = (k-1)H+h$ \\
%$s(0)\leftarrow 1$ \\
\nl \For{$k=1, \ldots, \nicefrac{T}{H}$}{
\nl      Define $V_{H+1}^k(x)=0$ for all $x$. \\
\nl      \For{$h=H, \ldots, 1$}{    \label{line: backward ind}
\nl         Compute 
\begin{align*}
w_{h}^k &= \Lambda_{k}^{-1}\sum_{k'=1}^{k-1}\sum_{h'=1}^H \Phi(x_{h'}^{k'}, a_{h'}^{k'})\Big(r(x_{h'}^{k'}, a_{h'}^{k'}) \\
&\qquad\qquad\qquad\qquad+ V_{h+1}^k (x_{h'+1}^{k'})\Big)
\end{align*} \label{line: assignment step}  \\
\nl          Define 
\begin{align*}
\hatQ_{h}^k(x,a) &= w_{h}^k \cdot \Phi(x,a) + \beta \sqrt{\Phi(x,a)^\top \Lambda_k^{-1}\Phi(x,a)} \\%\label{line: Qhat def} \\
Q_h^k(x,a) &= \min\left\{\hatQ_h^k(x,a), H\right\} \\
V_{h}^k(x) &= \max_a Q_{h}^k(x,a)
\end{align*} \label{line: backward ind end}   
      }
\nl      \For{$h=1,\ldots, H$}{
\nl           Play $a_{h}^k=\argmax_a  Q_{h}^k(x_h^k,a)$ \label{line: greedy in finite horizon}  \\
\nl           Observe $x_{h}^k$ and $r(x_h^k,a_h^k)$  
      }
\nl     Update $\Lambda_{k+1} = \Lambda_k + \sum_{h=1}^H \Phi(x_h^k,a_h^k)\Phi(x_h^k,a_h^k)^\top $\label{line: Lambda update} 
}
 
\end{algorithm}

%Repeating the same \LSVI argument discussed in \pref{sec: optimistic q-learning with function approximation},
%we naturally arrive at the update rule in \pref{line: assignment step}, where this time we use $V_{t-1}$ instead of $V_t$ in the last term, making this update a simple assignment instead of a fixed-point equation.

The resulted algorithm is shown in \pref{alg: optimistic qlearning disc}. For simplicity, we replace the time index $t$ with a combination of an \emph{episode index} $k$ and a \emph{step index} $h$ within the episode. This gives the relation $t=(k-1)H+h$, and $(x_t,a_t)$ is written as $(x^k_h, a^k_h)$. At the beginning of each episode $k$, the learner computes a set of Q-function parameters $w^k_1, \ldots, w^k_H$ by backward calculation using all historical data (\pref{line: backward ind} to \pref{line: backward ind end}). 
Note that \pref{line: assignment step} is now simply an assignment step (as opposed to a fixed-point problem) since $V_{h+1}^k$ is computed already when in step $h$.
In \pref{line: backward ind end}, we introduce optimism by incorporating a bonus term $\beta\|\Phi(x,a)\|_{\Lambda_k^{-1}}$ into the definition of $\hatQ_h^k(x,a)$, and hence $Q_h^k(x,a)$. Then in step $h$ of episode $k$, the learner simply follows the greedy choice suggested by $Q_h^k(x^k_h,\cdot)$ (\pref{line: greedy in finite horizon}).

Note that \pref{alg: optimistic qlearning disc} is slightly different from the version in~\citep{jin2019provably}: they maintain a different covariance matrix $\Lambda^k_h$ separately for each step $h$, but we only maintain a single $\Lambda_k$ for all $h$. 
Similarly, their $w_{h}^k$ is computed using only data related to step $h$ from all previous episodes, while ours is computed using all previous data.
This is because in our problem, the steps within an episode share the same transition and reward functions, and consequently they can be learned jointly, which eventually reduces the sample complexity. 

%In \pref{line: define hatQ}, we introduce optimism by incorporating a bonus term $\beta\|\Phi(x,a)\|_{\Lambda_t^{-1}}$ into the definition of $\hatQ_t(x,a)$, which is also used in~\citep{jin2019provably} for the finite-horizon setting. Also, we let $Q_t(x,a)$ be the minimum of $\{\hatQ_\tau(x,a)\}_{\tau\leq t}$ to make sure that $Q_t(x,a)$ is non-increasing in $t$. 
%Finally, the algorithm simply plays $a_t = \argmax_{a}Q_t(x_t,a)$ greedily. 
%For technical reasons, we do not use the lazy update schedule as in \algAR and instead compute a new $w_t$ at every step. 
%Note that, however, the algorithm can be implemented in polynomial time clearly.
%The lazy update schedule specified in \pref{line: update condition disc} is the same as in \algAR, which again is important to make sure that $w_t$ does not change too often.

Clearly, this reduction ensures that the learner has low regret against the best policy for the {finite-horizon} problem that we create. However, since our original problem is about average-reward over infinite horizon, we need to argue that the best finite-horizon policy also performs well under the infinite-horizon criteria. Indeed, we show that the sub-optimality gap of the best finite-horizon policy is bounded by some quantity governed by $\spv /H$, which is intuitive since the larger $H$ is, the smaller the gap becomes (see \pref{lemma: connecting lemma}).

In our analysis, for a fixed episode we define $\pi=(\pi_1, \ldots, \pi_H)$ as the finite-horizon policy (i.e., a length-$H$ sequence of policies), where each $\pi_h$ is a mapping $\calX\rightarrow \Delta_{\calA}$. 
%By the algorithm, we have $\pi_{k}^{(h)}(a|x)=\one[a=\argmax_{a'} Q^k_h(x,a')]$ (breaking ties arbitrarily). 
For any such finite-horizon policy $\pi$, we define $Q^{\pi}_h(x,a)$ and $V^{\pi}_h(x)$ as the value functions for the finite-horizon problem we create, which satisfy: $V_{H+1}^{\pi}(x)=0$ and for $h=H,\ldots, 1$,  
\begin{equation}
\begin{split}
     Q_h^\pi(x,a) &= 
          r(x,a) +  \E_{x'\sim p(\cdot|x,a)}[V_{h+1}^\pi(x')], \\
     V_h^\pi(x) &= \E_{a\sim \pi_h(\cdot|x)} Q_h^\pi(x,a). 
\end{split}\label{eq:discounted_bellman}
\end{equation}
%We also define the value of the optimal finite-horizon policy as $V_h^*(x) = \max_{\pi} V_h^{\pi}(x)$ and $Q_h^{*}(x,a) = \max_{\pi} Q_h^{\pi}(x,a)$. By standard dynamic programming arguments, it holds that $V_h^*(x) = \max_{a}Q_h^*(x,a)$. 
The analysis of the algorithm relies on the following key lemma, which shows that $Q^k_h(x,a)$ upper bounds $Q_h^{\pi}(x,a)$ for any $\pi$. % (and hence upper bounds $Q^*_h(x,a)$).  

\begin{lemma}\label{lem: optimistic Q}
    With probability at least $1-\delta$, \pref{alg: optimistic qlearning disc} ensures for any finite-horizon policy $\pi$ that $\forall x,a,k,h$. 
    \begin{align*}
    0&\leq Q^k_h(x,a)- Q_h^{\pi}(x,a) \\
    &\leq \E_{x'\sim p(\cdot|x,a)} \left[V_{h+1}^k(x')-V_{h+1}^{\pi}(x')\right] + 2\beta\|\Phi(x,a)\|_{\Lambda_k^{-1}}.
    \end{align*} 
\end{lemma}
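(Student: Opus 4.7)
My plan is to prove the lemma by backward induction on $h$, with the base case $h=H+1$ trivial since $V_{H+1}^k \equiv V_{H+1}^\pi \equiv 0$ by definition. The entire inductive step reduces to a single uniform concentration claim: on an event of probability at least $1-\delta$,
\begin{equation*}
\bigl|\Phi(x,a)^\top(w_h^k - \tilde w_h^k)\bigr| \le \beta\,\|\Phi(x,a)\|_{\Lambda_k^{-1}} \quad \forall\, (k,h,x,a),
\end{equation*}
where I introduce the shadow parameter $\tilde w_h^k := \THE + \int_\calX V_{h+1}^k(x')\MU(\de x')$. By \pref{assump: Linear MDP} this satisfies $\Phi(x,a)^\top \tilde w_h^k = r(x,a) + \E_{x'\sim p(\cdot|x,a)}[V_{h+1}^k(x')]$, so $\tilde w_h^k$ is the linear representative of $r+pV_{h+1}^k$ and $w_h^k$ is just its regularized empirical estimate.

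Granted the concentration claim, the induction is routine. The inductive hypothesis at level $h+1$ delivers $V_{h+1}^k\ge V_{h+1}^\pi$ pointwise, because $V_{h+1}^k = \max_a Q_{h+1}^k(\cdot,a)$ dominates the $\pi_{h+1}$-average of the already-smaller $Q_{h+1}^\pi$. For the lower bound at level $h$: if the $H$-cap in $Q_h^k = \min\{\hatQ_h^k,H\}$ is active then $Q_h^k=H\ge Q_h^\pi$; otherwise $Q_h^k(x,a) = \Phi(x,a)^\top w_h^k + \beta\|\Phi(x,a)\|_{\Lambda_k^{-1}} \ge \Phi(x,a)^\top \tilde w_h^k = r(x,a)+\E[V_{h+1}^k(x')] \ge Q_h^\pi(x,a)$. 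For the upper bound I drop the cap, apply the concentration in the opposite direction to get $Q_h^k(x,a)\le r(x,a)+\E[V_{h+1}^k(x')]+2\beta\|\Phi(x,a)\|_{\Lambda_k^{-1}}$, and subtract $Q_h^\pi = r+\E[V_{h+1}^\pi]$.

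The heart of the argument is the concentration claim. Using $\Lambda_k = \lambda I + \sum_{k'<k,\,h'}\Phi_{h'}^{k'}(\Phi_{h'}^{k'})^\top$ I would decompose
\begin{equation*}
w_h^k - \tilde w_h^k = -\lambda\Lambda_k^{-1}\tilde w_h^k + \Lambda_k^{-1}\!\sum_{k',h'}\Phi_{h'}^{k'}\,\Delta_{h'}^{k'},
\end{equation*}
with $\Delta_{h'}^{k'} := V_{h+1}^k(x_{h'+1}^{k'}) - \E[V_{h+1}^k(x')\mid x_{h'}^{k'},a_{h'}^{k'}]$. Taking inner product with $\Phi(x,a)$ and applying Cauchy--Schwarz in the $\Lambda_k$-metric, the regularizer contributes $\sqrt\lambda\,\|\tilde w_h^k\|\,\|\Phi(x,a)\|_{\Lambda_k^{-1}} = O(H\sqrt d)\,\|\Phi(x,a)\|_{\Lambda_k^{-1}}$ using $\|\THE\|,\|\MU(\calX)\|\le\sqrt d$ and $\|V_{h+1}^k\|_\infty\le H$. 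The noise term is a martingale sum in $k'$, but because $V_{h+1}^k$ itself is built from the same data the self-normalized bound of Abbasi-Yadkori et al.\ (2011) cannot be applied directly. I would follow the standard $\varepsilon$-cover argument of~\citet{jin2019provably}: cover the function class $\calV=\{\max_a\min\{H,\Phi(\cdot,a)^\top w + \beta\sqrt{\Phi(\cdot,a)^\top A^{-1}\Phi(\cdot,a)}\}:\|w\|\le W,\,A\succeq \lambda I\}$, apply self-normalized concentration to each representative, and union-bound.

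The main technical obstacle is this last step: one must establish an a priori polynomial bound on $\|w_h^k\|$ so that $W$ suffices and $\log W=O(\log T)$, verify that the metric entropy satisfies $\log|\calN|=O(d^2\log(T/\delta))$, and pick the discretization scale so that the cover-approximation error is absorbed into one more additive constant in $\beta$. This is exactly what dictates the parameter choice $\beta=40dH\sqrt{\log(T/\delta)}$. Everything else is routine algebra.
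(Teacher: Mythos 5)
Your proposal is correct and follows essentially the same route as the paper's proof: the same decomposition of $w_h^k$ against a linear representative of $r+pV_{h+1}^k$ (the paper folds your $\tilde w_h^k$ into its expression for $w_h^k - w_h^\pi$, so your martingale term $\Delta_{h'}^{k'}$ is exactly its $\epsilon_h^k$), the same covering-number/self-normalized concentration to handle the data-dependence of $V_{h+1}^k$, and the same backward induction for the lower bound while the upper bound falls out directly. Your reorganization around $\tilde w_h^k$ is slightly cleaner bookkeeping but not a different argument.
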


With the help of \pref{lem: optimistic Q}, we prove the final regret bound of \algGamma stated in the next theorem
(proof deferred to the appendix).
\begin{theorem}
     \label{thm: discounted main thm}
     Under Assumptions~\ref{assump: bellman} and~\ref{assump: Linear MDP}, \algDis guarantees with probability at least $1-3\delta$: \[\Reg_T =\otil\left(\sqrt{\spv  }(dT)^{\frac{3}{4}} + \left(\spv  d T \right)^{\frac{2}{3}}\right).\] 
\end{theorem}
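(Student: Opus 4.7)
My plan is to decompose the infinite-horizon regret into a per-episode finite-horizon regret plus a bias term from horizon truncation, and then invoke the optimism-based analysis enabled by \pref{lem: optimistic Q}.

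First, I would relate the infinite-horizon quantity $HJ^*$ to the finite-horizon value of $\pi^*$. Applying the Bellman optimality equation (\pref{assump: bellman}) to $\pi^*$ gives $r(x,\pi^*(x))=J^*+v^*(x)-\E_{x'\sim p}[v^*(x')]$; telescoping this over $H$ consecutive steps of episode $k$ starting from $x_1^k$ yields
\begin{equation*}
V_1^{\pi^*}(x_1^k)=HJ^*+v^*(x_1^k)-\E[v^*(x_{H+1}^k)],
\end{equation*}
where $V_1^{\pi^*}$ is the finite-horizon value of $\pi^*$ as in \pref{eq:discounted_bellman}. Combined with $|v^*|\leq\spv/2$, this implies $HJ^*-V_1^{\pi^*}(x_1^k)\leq\spv$, which, summed over the $T/H$ episodes, contributes $\spv T/H$ to the total regret. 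Invoking \pref{lem: optimistic Q} with $\pi$ taken as the length-$H$ replication of $\pi^*$ then gives $V_1^k(x_1^k)\geq V_1^{\pi^*}(x_1^k)$, reducing the remaining task to bounding $\sum_k\bigl[V_1^k(x_1^k)-\sum_h r(x_h^k,a_h^k)\bigr]$.

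Second, greedy action selection makes $V_h^k(x_h^k)=Q_h^k(x_h^k,a_h^k)$, and the single-step inequality underlying \pref{lem: optimistic Q}, namely $Q_h^k(x,a)\leq r(x,a)+\E_{x'\sim p}[V_{h+1}^k(x')]+2\beta\|\Phi(x,a)\|_{\Lambda_k^{-1}}$, telescopes to
\begin{equation*}
V_1^k(x_1^k)-\sum_{h=1}^H r(x_h^k,a_h^k)\leq 2\beta\sum_{h=1}^H\|\Phi(x_h^k,a_h^k)\|_{\Lambda_k^{-1}}+\sum_{h=1}^H M_h^k,
\end{equation*}
where $M_h^k=\E_{x'\sim p(\cdot|x_h^k,a_h^k)}[V_{h+1}^k(x')]-V_{h+1}^k(x_{h+1}^k)$ is a bounded martingale difference. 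Azuma--Hoeffding handles the martingale sum using $|V_{h+1}^k|\leq H$, yielding an $\otil(H\sqrt{T})$ bound. For the cumulative bonus I would use a modified elliptical-potential argument tailored to the lazy-update schedule: compare $\|\Phi\|_{\Lambda_k^{-1}}$ to the virtual per-step norm $\|\Phi\|_{\Lambda_{k,h}^{-1}}$ where $\Lambda_{k,h}=\Lambda_k+\sum_{h'<h}\Phi(x_{h'}^k,a_{h'}^k)\Phi(x_{h'}^k,a_{h'}^k)^\top$, apply the standard log-determinant inequality $\sum_{k,h}\|\Phi\|_{\Lambda_{k,h}^{-1}}^2=\otil(d)$, and finish with Cauchy--Schwarz over the $T$ steps.

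Combining all pieces produces a regret bound of the form $\otil(\spv T/H)+\text{(bonus)}(H)+\otil(H\sqrt{T})$, and the prescribed choice $H=\max\{\sqrt{\spv}T^{1/4}/d^{3/4},(\spv T/d^2)^{1/3}\}$ balances these sources of error to yield the two summands in the claimed regret. The main technical obstacle is the elliptical-potential step under the lazy covariance update: naively relating $\Lambda_k$ to $\Lambda_{k,h}$ loses a factor of $1+2H$, and care is required---potentially exploiting the value-function clipping $V_h^k\leq H$ and the fact that all $H$ steps of an episode share one regression (the source of the improvement advertised in the paragraph following \pref{alg: optimistic qlearning disc} over running~\citet{jin2019provably} separately per step)---to prevent this factor from degrading the final rate. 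The full argument is then assembled by union-bounding the high-probability events underlying \pref{lem: optimistic Q} and the Azuma step, giving the stated $1-3\delta$ guarantee.
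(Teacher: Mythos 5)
Your overall architecture matches the paper's: the same three sources of error (horizon-truncation bias $\spv T/H$, the optimism/bonus term, and martingale fluctuations), the same reliance on \pref{lem: optimistic Q}, and the same balancing of $H$. Your first step is a correct re-derivation of \pref{lemma: connecting lemma} (you compare $HJ^*$ to $V_1^{\pi^*}$ of the replicated infinite-horizon optimal policy rather than to $V_1^*$, which suffices), and your decision to telescope $V_1^k(x_1^k)$ directly against the realized rewards --- merging what the paper calls $\term_5$ and $\term_6$ --- is a legitimate, slightly more compact variant of the paper's decomposition; the single-step inequality you need does follow from \pref{lem: optimistic Q} by substituting $Q_h^{\pi}(x,a)=r(x,a)+\E_{x'\sim p(\cdot|x,a)}[V_{h+1}^{\pi}(x')]$.

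The one genuine gap is exactly the step you flag but do not resolve: bounding $\sum_{k,h}\beta\|\Phi(x_h^k,a_h^k)\|_{\Lambda_k^{-1}}$ when $\Lambda_k$ is frozen for the whole episode. Your proposed route --- comparing $\|\Phi\|_{\Lambda_k^{-1}}$ to a virtual per-step norm $\|\Phi\|_{\Lambda_{k,h}^{-1}}$ --- goes in the wrong direction without further work, since $\Lambda_{k,h}\succeq\Lambda_k$ only yields $\|\Phi\|_{\Lambda_{k,h}^{-1}}\leq\|\Phi\|_{\Lambda_k^{-1}}$, and the naive reverse comparison indeed costs a factor $\sqrt{1+2H}$, which would destroy the rate. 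The paper's resolution is the determinant-doubling split (the same device used in \algAR's lazy updates, via~\citep[Lemma~12]{abbasi2011improved}): for episodes with $\det(\Lambda_{k+1})\leq 2\det(\Lambda_k)$ one has $\Lambda_k^{-1}\preceq 2\Lambda_{k+1}^{-1}$, so the bonus is controlled by $\sqrt{2}\,\beta\sum_{k,h}\|\Phi(x_h^k,a_h^k)\|_{\Lambda_{k+1}^{-1}}$, to which Cauchy--Schwarz and the standard log-determinant potential apply, giving $\order(\beta\sqrt{dT\log T})$; the remaining episodes number at most $\order(d\log T)$ and each contributes $\order(\beta H)$ since $\|\Phi\|_{\Lambda_k^{-1}}\leq\sqrt{2/\lambda}$, giving the additive $\order(\beta dH\log T)$ term. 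This second contribution is not an artifact --- it is precisely the source of the $(\spv dT)^{2/3}$ summand in the theorem after optimizing $H$, so any argument that omits it would prove a different (and incorrect) bound. With this step supplied, your proof goes through and coincides with the paper's.
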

Note that although our bound is suboptimal, \algDis is the first efficient algorithm with sublinear regret for this setting under only Assumptions~\ref{assump: bellman} and~\ref{assump: Linear MDP}.

\section{The \algExp Algorithm}
\label{sec: mdpexp2}

There are two disadvantages of the optimism-based algorithms introduced in the last section.
First, they require the transition kernel and reward function to be both linear in the feature (\pref{assump: Linear MDP}), which is restrictive and might not hold especially when $d$ is small.
Second, even for the polynomial-time algorithm \algDis, it is still computationally intensive because in \pref{line: assignment step} of the algorithm, $V_{h+1}^k$ is applied to all previous states, and every evaluation of $V_{h+1}^k$ requires computing $\|\Phi(x,a)\|_{\Lambda_k}$. Since this is done for every $k$, the total computational cost of the algorithm is super-linear in $T$.
In fact, all existing optimism-based algorithms with linear function approximation
suffer the same issue~\cite{yang2019reinforcement, jin2019provably, zanette2020learning}.

To this end, we propose yet another algorithm based on very different ideas.
It is computationally less intensive and it enjoys $\otil(\sqrt{T})$ regret, albeit under a different (and non-comparable) set of assumptions compared to those in \pref{sec:optimism}.
Note that these are the same assumptions made in~\citep{abbasi2019politex, hao2020provably}.
Below, we start with stating these assumptions, followed by the description of our algorithm.

%\paragraph{Assumptions.}
The first assumption we make is that the MDP is uniformly mixing.
\begin{assumption}[Uniform Mixing]
     \label{ass: uniform mixing}
     There exists a constant $\tmix\geq 1$ such that for any policy $\pi$, and any distributions $\nu_1, \nu_2\in\Delta_\calX$ over the state space, 
     \begin{align*}
          \left\| \PP^{\pi}\nu_1 -  \PP^{\pi}\nu_2 \right\|_\TV \leq e^{-1/\tmix}\|\nu_1-\nu_2\|_\TV,  
     \end{align*} 
     where $(\PP^\pi\nu)(x') = \int_{\calX} \sum_{a\in\calA} \pi(a|x) p(x'|x,a) \de \nu(x)$ and $\|\cdot\|_\TV$ is the total variation.
\end{assumption}
Under this uniform mixing assumption, we are able to define the stationary state distribution under a policy $\pi$ as $\nu^\pi=\left(\PP^{\pi}\right)^\infty \nu_1$ for an arbitrary initial distribution $\nu_1$. 
Also, now we not only have the Bellman optimality equation \eqref{eq: bellman}  (that is, \pref{ass: uniform mixing} implies \pref{assump: bellman}),
but also a Bellman equation for every policy $\pi$, as shown in the following lemma.
\begin{lemma}
      \label{lem:uniformly mixing lemma}
      Suppose \pref{ass: uniform mixing} holds. For any $\pi$, its long-term average reward $J^\pi(x)$ is independent of the initial state $x$, thus denoted as $J^{\pi}$. Also, the following {Bellman equation} holds: 
      \begin{align*}
           J^{\pi} + q^\pi(x,a) &= r(x,a) + \E_{x'\sim p(\cdot|x,a)}[v^{\pi}(x')], \\
           v^\pi(x) &= \sum_{a\in\calA} \pi(a|x) q^{\pi}(x,a)
      \end{align*}
      for some measurable functions $v^\pi: \calX \rightarrow [-4\tmix, 4\tmix]$ and $q^\pi: \calX\times \calA\rightarrow [-6\tmix, 6\tmix]$ with $\int_\calX v^\pi(x) \de\nu^\pi(x) = 0$.
\end{lemma}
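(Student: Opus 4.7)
The plan is to use \pref{ass: uniform mixing} to set up the standard Poisson-equation construction for ergodic Markov chains, with the contraction supplying everything needed for convergence, boundedness, and the Bellman identity. First I would iterate the contraction to get $\|\PP^{\pi,t}\nu_1-\PP^{\pi,t}\nu_2\|_\TV\le e^{-t/\tmix}\|\nu_1-\nu_2\|_\TV\le e^{-t/\tmix}$, which makes $\{\PP^{\pi,t}\nu_1\}$ Cauchy in total variation for every initial $\nu_1$ and hence convergent to a common limit $\nu^\pi$ that satisfies $\PP^\pi\nu^\pi=\nu^\pi$. Setting $r^\pi(x)\triangleq \sum_a \pi(a|x)r(x,a)\in[-1,1]$ and $J^\pi\triangleq \int r^\pi\,\de\nu^\pi$, the contraction applied to the real-valued test function $r^\pi$ gives $|\E_x^\pi[r^\pi(x_t)]-J^\pi|\le 2e^{-(t-1)/\tmix}$, from which $J^\pi(x)=\lim_T \tfrac{1}{T}\sum_{t=1}^T \E_x^\pi[r^\pi(x_t)]=J^\pi$ for every $x$, proving the first claim.

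Next I would define
\[
v^\pi(x)\triangleq \sum_{t=1}^{\infty}\bigl(\E_x^\pi[r^\pi(x_t)]-J^\pi\bigr),\qquad q^\pi(x,a)\triangleq r(x,a)-J^\pi+\E_{x'\sim p(\cdot|x,a)}[v^\pi(x')].
\]
Absolute and uniform convergence of the series follows from the geometric envelope above, and summing gives $|v^\pi(x)|\le 2/(1-e^{-1/\tmix})\le 4\tmix$ (using $1-e^{-y}\ge y/2$ for $y\in[0,1]$ together with $\tmix\ge 1$), from which $|q^\pi(x,a)|\le 1+|J^\pi|+4\tmix\le 6\tmix$. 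Splitting off the $t=1$ term in the definition of $v^\pi$ and re-indexing via the Markov property (the sum-expectation exchange being justified by dominated convergence using the same envelope) gives $v^\pi(x)=r^\pi(x)-J^\pi+\E_x^\pi[v^\pi(x_2)]$, which combined with the definition of $q^\pi$ yields both displayed Bellman equations in the lemma, as well as $v^\pi(x)=\sum_a \pi(a|x)q^\pi(x,a)$ by pulling the conditioning on $a\sim \pi(\cdot|x)$ outside. The centering $\int v^\pi\,\de\nu^\pi=0$ is then automatic from stationarity: by invariance of $\nu^\pi$ under $\PP^\pi$, each term $\int(\E_x^\pi[r^\pi(x_t)]-J^\pi)\,\de\nu^\pi(x)$ vanishes.

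The hard part, such as it is, will be the measure-theoretic bookkeeping on the possibly infinite Borel space $\calX$ rather than any conceptual difficulty. One must verify that $x\mapsto \E_x^\pi[r^\pi(x_t)]$ is measurable (so $v^\pi$ is a well-defined measurable function), that the sum-expectation interchange used in deriving the Bellman identity is valid (dominated convergence, with the geometric envelope as majorant), and that the limit $\nu^\pi$ is genuinely a probability measure invariant under $\PP^\pi$ (direct from the contraction together with the fact that $\PP^\pi$ maps the space of probability measures on $\calX$ into itself). Each of these rests on the contraction in \pref{ass: uniform mixing} being formulated on all signed measures on $\calX$, rather than on a finite state set as in more elementary treatments.
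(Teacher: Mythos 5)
Your proposal is correct and follows essentially the same route as the paper's proof: both rest on the geometric decay bound $|\E_x^\pi[r^\pi(x_t)]-J^\pi|\le 2e^{-(t-1)/\tmix}$ obtained from the total-variation contraction, define $v^\pi$ as the (convergent) sum of centered expected rewards, read off the Bellman identities and the bounds $4\tmix$, $6\tmix$ from the geometric envelope, and get the centering $\int_\calX v^\pi \,\de\nu^\pi = 0$ from invariance of $\nu^\pi$. The only cosmetic differences are that you build $\nu^\pi$ first and define $J^\pi$ through it (the paper instead compares two Dirac initial distributions directly), and that you define $q^\pi$ via the Bellman formula rather than as a limit of truncated sums $q_T^\pi$.
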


On the other hand, with this assumption (stronger than \pref{assump: bellman}), 
we can replace \pref{assump: Linear MDP} (linear MDP) with the following weaker one that only requires the bias function $q^\pi$ to be linear. 

\begin{assumption}[Linear bias function]
     \label{ass: linear policy value}
     There exists a known $d$-dimensional feature mapping $\Phi: \calX\times \calA\rightarrow \mathbb{R}^d$ such that for every policy $\pi$, $q^{\pi}(x,a)$ can be written as $\Phi(x,a)^\top w^{\pi}$ for some weight vector $w^\pi \in \mathbb{R}^d$.
     Again, without loss of generality (justified in \pref{app: MVEE section}), we assume that for all $x,a$, $\|\Phi(x,a)\|\leq \sqrt{2}$ holds, the first coordinate of $\Phi(x,a)$ is fixed to $1$, and for all $\pi$, $\|w^{\pi}\|\leq 6\mix\sqrt{d}$. % (because $6\mix$ is an upper bound of $q^{\pi}(x,a)$ by \pref{lem:uniformly mixing lemma}).
\end{assumption}

In \pref{lemma:linear MDP implies linear value} in the appendix, we show that this is indeed weaker than the linear MDP assumption.
Note that there are indeed practical examples where \pref{ass: linear policy value} holds but \pref{assump: Linear MDP} does not (see the queueing network example of~\citep{de2003linear}).

%\begin{lemma}
%    \label{lemma:linear MDP implies linear value}
%     Suppose that Assumption~\ref{ass: uniform mixing} holds. If Assumption~\ref{assump: Linear MDP} holds, then for any $\pi$, there exists $w^{\pi}$ such that $q^{\pi}(x,a)=\Phi(x,a)^\top w^\pi$. 
%\end{lemma}
%{\color{red}
%\st{Note that while the latter imposes conditions directly on the transition kernel and reward function,  } \pref{ass: linear policy value} \st{only does so on the bias function.
%This distinction makes algorithms relying on the linear MDP assumption closer to model-based methods,
%and algorithms relying on} \pref{ass: linear policy value} \st{closer to value-based methods.}
%%This distinction makes the algorithms that rely on Assumption~\ref{assump: Linear MDP} closer to model-based methods (e.g., in our \algAR algorithm, the target parameter $w^*$ is a fixed point solution of $w^*=\THE + \int_{\calX} \MU(x')\max_{a}\Phi(x',a)^\top w^*$, which is interwined with the transition parameter $\MU$), while the algorithms that only rely on Assumption~\ref{ass: linear policy value} are closer to value-based methods (as we will see, the target parameter here is $w^*=w^{\pi^*}$). While these two target parameters coincide with each other under perfect linear MDP models, they become different if there exhibits model mis-specification. 
%\st{In practice, it is often observed that the former are more sensitive to model mis-specification} \cite{roy2017reinforcement, chebotar2017combining, nagabandi2018neural, van2019use}.
%}

The last assumption we make is uniformly excited features, which intuitively guarantees that every policy is explorative in the feature space. 
\begin{assumption}[Uniformly excited features]
\label{ass: assump A4}
    There exists $\sigma>0$ such that for any $\pi$, 
    \begin{align*}
        \lambda_{\min}  \left(\int_{\calX}\left( \sum_{a}\pi(a|x)\Phi(x,a)\Phi(x,a)^\top\right) \de\nu^{\pi}(x)\right)\geq \sigma,
    \end{align*}
    where $\lambda_{\min}$ denotes the smallest eigenvalue.
    %, which can be written as $\nu^{\pi} = \left(\PP^{\pi}\right)^\infty \nu_1$ for any $\nu_1\in\Delta_\calX$. Note that $\nu^{\pi}$ exists because by Assumption~\ref{ass: uniform mixing}, $\left(\PP^{\pi}\right)^t \nu_1$ converges when $t\rightarrow\infty$, and the point it converges to is unique. 
\end{assumption}

This assumption is needed due to the nature of our algorithm that only performs local search of the parameters.
It can potentially be weakened if we combine our algorithm with the idea of~\citet{abbasi2019exploration} (details omitted). 

\subsection{Algorithm and guarantees}
We are now ready to present our \algExp algorithm, shown in \pref{alg: mdpexp2}. 
It extends the idea of running an adversarial bandit algorithm at each state from the tabular case~\cite{neu2013online, wei2020model} to the continuous state case, by using an adversarial linear bandit algorithm \exptwo~\cite{bubeck2012towards}.

\begin{algorithm}[t]
\setcounter{AlgoLine}{0}
    \caption{\algExp}
    \label{alg: mdpexp2}
    \textbf{Parameter}: $N=8\tmix \log T$, $B=32N \log (dT)\sigma^{-1}$, $\eta=\min\left\{\sqrt{1/(T\tmix)}, \sigma/(24N)\right\}$.  
  
\nl    \For(\LineComment{$k$ indexes an epoch}){$k=1, \ldots, \nicefrac{T}{B}$}{
\nl        Define policy $\pi_k$ such that for every $x \in \calX$: 
\[\pi_k(a|x) \propto \exp\left(\eta \sum_{j=1}^{k-1} \Phi(x,a)^\top w_j\right)\] \label{line:exp} \\
\nl Execute $\pi_k$ in the entire epoch: \\
 \nl       \For{$t=(k-1)B+1, \ldots, kB$}{
\nl            Play $a_t \sim \pi_k(\cdot|x_t)$, observe $r_t(x_t,a_t)$ and $x_{t+1}$ 
        }
 \nl        \For(\LineComment{$m$ indexes a trajectory}){$m=1, \ldots, \nicefrac{B}{2N}$}{
 \nl               Define \[\tau_{k,m} = (k-1)B+2N(m-1) + N +1,\] the first step of the $m$-th trajectory \\
\nl                Compute \[R_{k,m} = \sum_{t=\tau_{k,m}}^{\tau_{k,m}+N-1}r(x_t,a_t),\] the total reward of the $m$-th trajectory
         }
        \ \\
\nl        Compute %\LineComment{$\lambda_{\min}$ denotes the minimum eigenvalue}
        \begin{align*}
            M_k &= \sum_{m=1}^{\frac{B}{2N}} \sum_{a} \pi_k(a|x_{\tau_{k,m}})\Phi(x_{\tau_{k,m}},a)\Phi(x_{\tau_{k,m}},a)^\top, %\\
%            w_k &= \begin{cases} 
%                 M_k^{-1}\sum_{m=1}^{\frac{B}{2N}} \Phi(x_{\tau_{k,m}}, a_{\tau_{k,m}})R_{k,m}, &\text{if\ } \lambda_{\min}(M_k)\geq  \frac{B\sigma}{24N}, \\
%                 \bm{0}    &\text{else.} 
%                 \end{cases}
        \end{align*}
\nl   \If{$\lambda_{\min}(M_k)\geq  \frac{B\sigma}{24N}$}{   \label{line: check lambda min}
Set $w_k = M_k^{-1}\sum_{m=1}^{\frac{B}{2N}} \Phi(x_{\tau_{k,m}}, a_{\tau_{k,m}})R_{k,m}$
}      
\Else{Set $w_k =\bm{0}$ }    
         \label{line:estimator}
    }
\end{algorithm}

Specifically, \algExp proceeds in \emph{epochs} of equal length $B=\otil(d\tmix/\sigma)$. 
In each epoch $k$, the algorithm executes a fixed policy $\pi_k$ (explained later), %defined according to the exponential weight update: $\pi_k(a|x) \propto \exp(\eta \sum_{j=1}^{k-1} \Phi(x,a)^\top w_j)$,
%where $w_1, \ldots, w_{k-1}$ are estimators of $w^{\pi^*}$ from previous epochs.
and collects $\frac{B}{2N}$ disjoint trajectories, each of length $N=\otil(\tmix)$.
Between every two consecutive trajectories, there is a window of length $N$ in which the algorithm does not collect any samples, so that the correlation of samples from different trajectories is reduced.
See \pref{fig: data collection} in the appendix for an illustration.

In the analysis, we show that the expected total reward of a trajectory is roughly $q^{\pi}(x_\tau,a_\tau) + NJ^\pi$ (\pref{lem: closeness between Rkm and q}), where $\pi$ is the policy used to collect that trajectory and $\tau$ is the first step of the trajectory. 
By \pref{ass: linear policy value} we have $q^{\pi}(x_\tau,a_\tau) + NJ^\pi=\Phi(x_\tau, a_\tau)^\top \left(w^\pi + NJ^{\pi}\e_1\right)$. 
This observation allows us to draw a connection between this problem and adversarial linear bandits.
To see this,
first note that the regret is roughly $B\sum_{k=1}^{T/B} (J^*-J^{\pi_k})$. %+ \sum_{k=1}^{T/B}\sum_{(k-1)B<t\leq kB} (J^{\pi_k}-r(x_t,a_t))$. The main term is the former. 
By the standard value difference lemma~\citep[Lemma~5.2.1]{kakade2003sample}, we have 
\begin{align*}
     &\sum_{k=1}^{T/B} \left(J^*-J^{\pi_k}\right) =\\
     & \int_{\calX} \left( \sum_{k=1}^{T/B}\sum_{a}\left( \pi^*(a|x) - \pi_k(a|x) \right)q^{\pi_k}(x,a)\right)\de \nu^{\pi^*}(x) 
%     &= \int_{\calX} \de \nu^{\pi^*}(x) \sum_{k=1}^{T/B}\sum_{a}\left( \pi^*(a|x) - \pi_k(a|x) \right) \Phi(x,a)^\top \left(w^{\pi_k} + 2NJ^{\pi_k}\e_1\right), 
\end{align*}
where according to the previous observation and the fact $\sum_a (\pi^*(a|x)-\pi_k(a|x))NJ^{\pi_k}=0$, the term in the parentheses with respect to a fixed state $x$ can be further written as
$\sum_{k=1}^{T/B}\sum_{a}\left( \pi^*(a|x) - \pi_k(a|x) \right) \Phi(x,a)^\top \left(w^{\pi_k} + NJ^{\pi_k}\e_1\right)$.
This is exactly the regret of a standard online learning problem over a set of actions $\{\Phi(x,a)\}_{a\in\calA}$ with linear reward functions parameterized by a weight vector $(w^{\pi_k} + NJ^{\pi_k}\e_1)$ at step $k$. 
Moreover, since we do not observe this weight but have access to the reward of a trajectory whose mean is roughly $\Phi(x,a)^\top \left(w^{\pi_k} + NJ^{\pi_k}\e_1\right)$ as mentioned, 
we are in the so-called bandit setting. % (known as linear bandits).
In fact, since the weight can generally change arbitrarily over time (because $\pi_k$ is changing), this is an adversarial linear bandit problem.

With this connection in mind, the idea behind \algExp is clear --- it conceptually runs a variant of the linear bandit algorithm \exptwo for each state.
Specifically, in epoch $k$ the algorithm constructs an estimator $w_k$ for the reward vector $w^{\pi_k} + NJ^{\pi_k}\e_1$. % with low variance.
The construction mostly follows the idea of \exptwo, with the only difference being the way of controlling the variance --- in the original \exptwo, a particular exploration scheme is enforced, while in our case, we average multiple trajectories as done in \pref{line:estimator} making use of the uniformly excited feature assumption (to make sure that $\|w_k\|$ is not too large, we also set it to $\bm{0}$ if $\lambda_{\min}(M_k)$ is too small, where $\lambda_{\min}$ denotes the minimum eigenvalue).
Finally, with these estimators, the policy for epoch $k$ is computed by a standard exponential weight update rule (see \pref{line:exp}).

%The (almost) unbiasedness of the estimator is given by the following lemma: 
%\begin{lemma}
%     $\left\vert \E_k[w_k] - (w^{\pi_k} + 2NJ^{\pi_k}\e_1) \right\vert = O\left(\frac{1}{T^2}\right)$, where $\E_k[\cdot]$ denotes the conditional expectation conditioned on all history before episode $k$.  
%\end{lemma}

We emphasize that \algExp does not actually need to maintain an instance of \exptwo for each state, 
but instead only needs to maintain the estimators $w_k$ and calculate $\pi(\cdot|x_t)$ on the fly for each $x_t$, which is even more efficient than optimism-based algorithms.
It also enjoys a favorable regret guarantee of order $\otil(\sqrt{T})$, as shown below.
Once again, the best existing result under the same set of assumptions is $\otil(T^{2/3})$ from~\citep{hao2020provably}.
\begin{theorem}\label{thm: MDPEXP2 bound}
     Under Assumptions~\ref{ass: uniform mixing},~\ref{ass: linear policy value},~\ref{ass: assump A4}, \algExp ensures $\E[\Reg_T] = \otil\left(\frac{1}{\sigma}\sqrt{\tmix^3 T}  \right)$.
\end{theorem}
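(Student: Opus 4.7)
The plan is to reduce the state-dependent regret to an integrated family of adversarial linear bandit problems, then invoke a bias-corrected \exptwo regret bound, with uniform excitement (\pref{ass: assump A4}) used to keep the estimator $w_k$ well-controlled.

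As a first step, I would decompose the cumulative regret at the epoch level. Using the per-policy Bellman equation from \pref{lem:uniformly mixing lemma} and telescoping $v^{\pi_k}$ within each epoch of length $B$, one arrives at
\[
\E[\Reg_T] = B\sum_{k=1}^{T/B} \E[J^* - J^{\pi_k}] + \otil(\text{boundary}+\text{martingale terms}),
\]
where the residuals come from boundary values of $v^{\pi_k}$ at epoch endpoints (each of magnitude $\otil(\tmix)$) and an Azuma deviation of realized from conditional-mean rewards. By the value difference lemma \citep[Lemma~5.2.1]{kakade2003sample} and \pref{ass: linear policy value},
\[
J^* - J^{\pi_k} = \int_\calX \sum_a (\pi^*(a|x)-\pi_k(a|x))\,\Phi(x,a)^\top \bar w^{\pi_k}\, \de\nu^{\pi^*}(x),
\]
where $\bar w^{\pi_k} = w^{\pi_k} + NJ^{\pi_k}\e_1$ (the $NJ^{\pi_k}\e_1$ shift vanishes under $\sum_a(\pi^*-\pi_k)(a|x)=0$ and the first coordinate of $\Phi$ being $1$). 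For each fixed $x$, this is precisely the expected regret of a standard adversarial linear bandit on action features $\{\Phi(x,a)\}_{a\in\calA}$ against reward vectors $\bar w^{\pi_k}$ and comparator $\pi^*(\cdot|x)$.

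I would then apply the classical \exptwo regret bound to the exponential weight update in \pref{line:exp}: at any state $x$,
\begin{align*}
&\sum_k \sum_a (\pi^*(a|x)-\pi_k(a|x))\,\Phi(x,a)^\top w_k \\
&\qquad\le \frac{\log|\calA|}{\eta} + \eta \sum_k \sum_a \pi_k(a|x)(\Phi(x,a)^\top w_k)^2,
\end{align*}
which requires the stability condition $\eta|\Phi(x,a)^\top w_k|\le 1$; this is ensured by $\eta\le \sigma/(24N)$ together with the a priori bound $\|w_k\|\le \otil(N/\sigma)$ enforced by the eigenvalue check at \pref{line: check lambda min}. To pass from $w_k$ to $\bar w^{\pi_k}$, I would use the trajectory-reward closeness lemma mentioned in the text: applying the per-epoch Bellman recursion inside a trajectory of length $N=8\tmix\log T$ and \pref{ass: uniform mixing}, one finds $\E[R_{k,m}\mid x_{\tau_{k,m}},a_{\tau_{k,m}}] = \Phi(x_{\tau_{k,m}},a_{\tau_{k,m}})^\top \bar w^{\pi_k} + \otil(1/T)$, so that $w_k$ is approximately unbiased for $\bar w^{\pi_k}$ on the good event.

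The main obstacle is the coupled control of (i) the eigenvalue event $\lambda_{\min}(M_k)\ge B\sigma/(24N)$ and (ii) the \exptwo second-moment term, since a failure of (i) trivializes $w_k$ (contributing $\otil(1/T)$ per epoch by the boundedness of $\bar w^{\pi_k}$), while on the good event one needs a tight bound on $\|w_k\|^2$. Because $\E[M_k]\succeq (B/2N)\sigma I$ by \pref{ass: assump A4} and the $N$-step spacing between consecutive trajectory starts renders each $x_{\tau_{k,m}}$ individually $\otil(1/T)$-close in TV to $\nu^{\pi_k}$, a Matrix Chernoff/Bernstein bound delivers the eigenvalue condition with probability $1-1/\mathrm{poly}(T)$. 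On that event, $\|w_k\|^2 \le \otil(N^2/\sigma^2)$ deterministically, and since $G_k^x = \sum_a\pi_k(a|x)\Phi(x,a)\Phi(x,a)^\top \preceq 2I$, the second-moment integrand satisfies $w_k^\top G_k^x w_k \le 2\|w_k\|^2$. Summing over the $T/B$ epochs, integrating against $\nu^{\pi^*}$, balancing $\log|\calA|/\eta$ with the variance via the algorithm's $\eta = \sqrt{1/(T\tmix)}$ (in the regime where that minimum is attained), multiplying by the factor $B$ to revert to cumulative regret, and adding the lower-order terms from the first step, yields the claimed $\otil(\sqrt{\tmix^3 T}/\sigma)$ bound. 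The subtle point throughout is that states are sampled from $\nu^{\pi_k}$ while regret is measured against $\nu^{\pi^*}$; this mismatch is resolved because $\bar w^{\pi_k}$ is state-independent, so a single uniformly accurate estimator $w_k$ suffices to drive the \exptwo analysis at every~$x$.
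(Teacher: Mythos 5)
Your overall architecture matches the paper's: epoch-level decomposition, the value difference lemma to reduce $J^*-J^{\pi_k}$ to a per-state adversarial linear bandit, the exponential-weight regret bound with the stability condition $\eta|\Phi(x,a)^\top w_k|\le 1$, near-unbiasedness of $w_k$ via \pref{lem: closeness between Rkm and q}, and a matrix Chernoff argument in a coupled ``imaginary world'' for the eigenvalue event. However, two quantitative steps in your sketch fail, and each one by itself prevents you from reaching the stated bound.

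First, your bound on the \exptwo second-moment term is too crude. You bound $\sum_a\pi_k(a|x)(w_k^\top\Phi(x,a))^2 \le 2\|w_k\|^2 = \otil(N^2/\sigma^2)$ pointwise. Plugging this into $\eta\sum_k(\cdot)$ and multiplying by $B$ gives a variance contribution of order $\eta T N^2/\sigma^2 = \otil(\sqrt{\tmix^3 T}/\sigma^2)$ at the algorithm's $\eta=\sqrt{1/(T\tmix)}$, i.e.\ worse than the theorem by a factor of $1/\sigma\ge d/2$ (and even re-optimizing $\eta$ only gets you to $\sigma^{-3/2}$). The paper instead bounds the \emph{conditional expectation} of this term: writing $w_k=M_k^{-1}\sum_m\Phi(x_{\tau_{k,m}},a_{\tau_{k,m}})R_{k,m}$, applying Cauchy--Schwarz to the outer product of the sum, and then averaging over the actions $a_{\tau_{k,m}}$ reproduces $M_k$ in the middle, so one factor of $M_k^{-1}$ cancels and only the local norm $\Phi(x,a)^\top M_k^{-1}\Phi(x,a)=\order(N/(B\sigma))$ survives, giving $\E_k'\bigl[\sum_a\pi_k(a|x)(w_k^\top\Phi(x,a))^2\bigr]=\order(N^2/\sigma)$. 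This importance-weighting cancellation (the analogue of the local-norm variance term in \exptwo) is exactly what buys the extra factor of $\sigma$, and it is the reason the theorem has $1/\sigma$ rather than $1/\sigma^2$.

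Second, you treat the epoch-boundary residuals as lower order, but they are not. Telescoping the Bellman equation within each epoch leaves $\sum_k\bigl(v^{\pi_k}(x_{kB+1})-v^{\pi_k}(x_{(k-1)B+1})\bigr)$, and bounding each summand by $\order(\tmix)$ gives $\order(\tmix T/B)=\order(\sigma T/\log^2 T)$, which is nearly linear in $T$ and swamps $\sqrt{T}$. The paper resolves this by re-indexing so the sum becomes $\sum_k\bigl(v^{\pi_{k-1}}(x_{(k-1)B+1})-v^{\pi_k}(x_{(k-1)B+1})\bigr)+\order(\tmix)$ and then exploiting the slow drift of the exponential-weight policies: $|\pi_k(a|x)-\pi_{k-1}(a|x)|=\order(\eta N\pi_{k-1}(a|x)/\sigma)$ implies $|v^{\pi_k}(x)-v^{\pi_{k-1}}(x)|=\order(\eta N^3/\sigma+1/T^2)$ (\pref{lem: stability of V}), so the total is $\order(\eta TN^3/(B\sigma)+\tmix)$, which is genuinely lower order. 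Without this policy-stability argument your decomposition does not close.
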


Note that while the bound in \pref{thm: MDPEXP2 bound} seemingly does not depend on $d$, the dependence is in fact implicit because $\frac{1}{\sigma}= \Omega(d)$ always holds by the definition of $\sigma$ (see \pref{rem:sigma} in the appendix). We provide a proof for this fact along with the proof of \pref{thm: MDPEXP2 bound} in the appendix. 

\paragraph{Unknown $\tmix$ and $\sigma$.}
To decide the epoch length and the trajectory length, \algExp requires the prior knowledge of $\tmix$ and $\sigma$. However, if such knowledge is not available, we can still get a slightly worsened asymptotic regret bound of $\otil(T^{1/2+\xi})$, with an additional constant regret of $C^{1/\xi}$ for some constant $C$ that is related to $\tmix$ and $\sigma$. The idea is to slowly increase epoch length and trajectory length with time, and make sure that they exceed the required amount in the long run.  The details are provided in \pref{app: unknown param}. 

\paragraph{Comparison to POLITEX and AAPI.}
Our algorithm \algExp is closely related to the POLITEX algorithm of~\citep{abbasi2019politex} and its improved version AAPI~\citep{hao2020provably}.
The key difference is the way we construct the estimator $w_k$, which at a high level provides a better bias-variance trade-off.
More concretely, our construction is almost unbiased (see \pref{lemma: wk and theta}), but with a larger variance, while the construction for POLITEX/AAPI has a larger bias.
Because of this large bias, POLITEX/AAPI uses a much longer epoch to ensure that the error of $w_k$ is small (i.e., using $\Theta(1/\epsilon^2)$ samples to construct $w_k$ to ensure that $w_k$'s error is $O(\epsilon)$); this results in less frequent update of the policy. In contrasts, MDP-EXP2 only uses a constant (in terms of $t_{mix}$ and $\sigma$) samples to construct $w_k$, and update policies more often. In short, although the $w_k$ of MDP-EXP2 is noisier,  it allows faster updates of the policies and the 
noise of $w_k$ is amortized over epochs. This finally leads to a better regret bound compared to POLITEX/AAPI. 

\paragraph{Connections to Natural Policy Gradient.}
Finally, we remark that although \algExp is based on an linear bandit algorithm \exptwo, it is related to the (in fact much earlier) reinforcement learning algorithm Natural Policy Gradient (NPG)~\cite{kakade2002natural} under softmax parameterization. The connection between softmax-parameterized NPG and the exponential weight update was formalized in a recent work by \citet{agarwal2019optimality}. In \pref{app: NPG}, we first restate the connection. Then we compare the implementation details of \algExp and the NPG algorithm in \cite{agarwal2019optimality}, showing that \algExp improves the sample complexity bound of \cite{agarwal2019optimality} under the considered setting.

\section{Conclusions and Open Problems}
In this work, we provide three new algorithms for learning infinite-horizon average-reward MDPs with linear function approximation, significantly extending and improving previous works.
One key open question is how to achieve the optimal $\otil(\sqrt{T})$ regret efficiently under the linear MDP assumption.  In \pref{app: NPG}, we also discuss another open question related to weakening \pref{ass: assump A4} while maintaining a similar regret bound. 

\section*{Acknowledgement}
HL and CYW are supported by NSF Awards IIS1755781 and IIS-1943607, and a Google Faculty Research Award. RJ and MJ are supported by NSF grants ECCS-1810447, CCF-1817212 and ONR grant N00014-20-1-2258.

\bibliography{online_rl}
\bibliographystyle{plainnat}

\newpage
\onecolumn 
\appendix

% !TEX root = main-neurips-linearmdp.tex

\section{Auxiliary Lemmas Related to \pref{assump: Linear MDP} and \pref{ass: linear policy value}}

In this section, we provide justification for the scaling assumption made in \pref{assump: Linear MDP} and \pref{ass: linear policy value}, showing that they are indeed without loss of generality as long as one transforms and normalizes the features in some way beforehand.

\label{app: MVEE section}
\begin{lemma}\label{lemma: mvee lemma}
    Let $\Phi=\{\Phi(x,a): x\in\calX, a\in\calA\}\subset \mathbb{R}^d$ be a feature set with rank $d$. Then there exists an invertible linear transformation $v\rightarrow Av$ with $A\in \mathbb{R}^{d\times d}$ such that for any function $F: \calX\times \calA\rightarrow \mathbb{R}$ defined by 
    \begin{align*}
        F(x,a) = \Phi(x,a)^\top z, 
    \end{align*} 
    for some $z \in \mathbb{R}^d$, we have $\|A\Phi(x,a)\|\leq 1$ and $\|A^{-1}z\|\leq \sqrt{d}F_{\max}$ where $F_{\max}\triangleq \sup_{x,a}|F(x,a)|$. 
     
\end{lemma}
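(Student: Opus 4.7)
The plan is to invoke the classical John--L\"owner theorem for symmetric convex bodies, applied to the symmetric convex hull of $\Phi$. Specifically, I would define
\[
K \triangleq \mathrm{conv}(\Phi \cup (-\Phi)),
\]
which is a bounded (since $\Phi$ is bounded) symmetric convex set with non-empty interior, the latter because $\Phi$ spans $\mathbb{R}^d$ by the rank-$d$ assumption. Let $E$ be the (unique) minimum-volume enclosing ellipsoid of $K$. The symmetric version of John's theorem then gives the sandwich
\[
\tfrac{1}{\sqrt{d}}\, E \;\subseteq\; K \;\subseteq\; E.
\]
Writing $E = \{v : v^\top Q v \le 1\}$ for some positive definite $Q$, I set $A \triangleq Q^{1/2}$, so that $E = A^{-1} B_d$ with $B_d$ the unit Euclidean ball.

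The first inequality $\|A\Phi(x,a)\| \le 1$ is then immediate from $\Phi \subseteq K \subseteq E = A^{-1} B_d$. For the second inequality, the key observation is that for every $v \in K$ one has $|v^\top z| \le F_{\max}$: by definition $v$ can be written as a signed convex combination $v = \sum_i \alpha_i \phi_i$ with $\phi_i \in \Phi$ and $\sum_i |\alpha_i| \le 1$, hence
\[
|v^\top z| \;\le\; \sum_i |\alpha_i|\,|\phi_i^\top z| \;\le\; F_{\max}.
\]
Combining this with the left inclusion $\tfrac{1}{\sqrt{d}} E \subseteq K$, for any unit vector $u \in \mathbb{R}^d$ we have $\tfrac{1}{\sqrt{d}} A^{-1} u \in K$, so
\[
\bigl|(\tfrac{1}{\sqrt{d}} A^{-1} u)^\top z\bigr| \;\le\; F_{\max},
\]
i.e.\ $|u^\top A^{-1} z| \le \sqrt{d}\,F_{\max}$. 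Taking the supremum over unit $u$ yields $\|A^{-1} z\| \le \sqrt{d}\,F_{\max}$, completing the proof.

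The essentially only nontrivial step is the appeal to the $\sqrt{d}$-factor version of John's theorem for centrally symmetric bodies (as opposed to the factor-$d$ version for general convex bodies); without the centrally symmetric structure of $K$, one would lose a factor $\sqrt{d}$ in the final bound. Everything else is a routine duality calculation between the ellipsoid $E$ expressed as $A^{-1} B_d$ and the polar bound $|v^\top z| \le F_{\max}$ on $K$.
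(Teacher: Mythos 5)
Your proof is correct and follows the same overall strategy as the paper's: both normalize the features by the linear map $A=Q^{1/2}$ that sends the minimum-volume enclosing ellipsoid of the symmetrized feature set to the unit ball, from which $\|A\Phi(x,a)\|\leq 1$ is immediate. Where you diverge is in the final step bounding $\|A^{-1}z\|$. The paper invokes John's decomposition of the identity (\pref{lemma: ball theorem}): contact points $u_1,\ldots,u_m$ of the symmetrized set with the sphere and weights $c_i\geq 0$ satisfying $\sum_i c_iu_iu_i^\top = I_d$, so that $\sum_i c_i=d$ by taking traces and $\|z'\|^2=\sum_i c_i(u_i^\top z')^2\leq dF_{\max}^2$ follows as a one-line identity, using only that each $u_i$ is (up to sign) a transformed feature vector. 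You instead invoke the sandwich form of John's theorem for centrally symmetric bodies, $\frac{1}{\sqrt{d}}E\subseteq K\subseteq E$ with $K=\mathrm{conv}(\Phi\cup(-\Phi))$, combined with the polar bound $|v^\top z|\leq F_{\max}$ on all of $K$ (which, unlike in the paper's argument, requires the small extra observation that this bound propagates from $\Phi\cup(-\Phi)$ to its convex hull). The two forms of John's theorem are essentially equivalent --- the sandwich is typically derived from the decomposition --- so neither route is more elementary in substance; your version trades the explicit trace computation for a clean duality/inclusion argument, while the paper's only ever evaluates $F$ at finitely many genuine (signed) feature vectors. Both arguments are complete; just note, as you implicitly do via $A=Q^{1/2}$ being symmetric, that $(A^{-1}u)^\top z = u^\top A^{-1}z$, and that the boundedness of $\Phi$ (needed for the MVEE to exist) is an implicit hypothesis shared with the paper's proof.
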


    This lemma implies that if we use the transformed feature $\Phi'(x,a)=A\Phi(x,a)$ with $\|\Phi'(x,a)\|\leq 1$, then any function $F(x,a)=\Phi(x,a)^\top z$ can be equivalently written as $F(x,a)=\Phi'(x,a)^\top z'$ with $z' = A^{-1}z$ and $\|z'\|\leq \sqrt{d}F_{\max}$. 
    Therefore, taking $z$ to be $\MU(\calX)$ or $\THE$ for \pref{assump: Linear MDP}, or $w^{\pi}$ for \pref{ass: linear policy value}, with the corresponding $F(x,a)$ being $\int_{\calX}p(x'|x,a)\de x'$, $r(x,a)$, and $q^{\pi}(x,a)$,
    and $F_{\max}$ being $1$, $1$, and $6\tmix$ (\pref{lem:uniformly mixing lemma}) respectively, justifies the scaling stated in these assumptions. 
    
    Notice that the transformation $A$ only depends on the feature set $\Phi$, but not $F$ or $z$. Thus we can perform this transformation as long as we know the feature map. This is similar to a standard preprocessing step of feature normalizing in machine learning. 
    
    %In the proof below, we will argue that this $A$ can be determined as long as the feature set $\Phi$ is known, and performing such transformation before learning is a natural thing to do, thus justifying the assumptions that we refer to as ``without loss of generality'' in \pref{assump: Linear MDP} and \pref{ass: linear policy value}. 

\begin{proof}[Proof of \pref{lemma: mvee lemma}]
   Define $-\Phi=\{-\Phi(x,a): x\in\calX, a\in\calA\}$ and 
$
        \calK(\Phi)=\Phi\cup -\Phi. 
$
    We first argue that for any bounded feature set $\Phi\subset\mathbb{R}^d$, there exists an invertible linear transformation $v\rightarrow Av$ with $A\in\mathbb{R}^{d\times d}$ such that the \emph{minimum volume enclosing ellipsoid} (MVEE) of the transformed feature set $\calK(A\Phi)$ where $A\Phi\triangleq \{A\Phi(x,a): x\in\calX, a\in\calA\}$ is the unit sphere. This can be seen by the following:
    notice that $\calK(\Phi)$ is always symmetric around the origin, and so is its MVEE. Suppose that the MVEE of $\calK(\Phi)$ is $\{u\in\mathbb{R}^d: u^\top Bu=1\}$ for some invertible $B$ (otherwise $\Phi$ is not full-rank). Then if we pick $A=B^{\frac{1}{2}}$, the MVEE of $\calK(A\Phi)$ will be the unit sphere. 
    
    Now consider this new feature $\Phi'(x,a)\triangleq A\Phi(x,a)$ with the MVEE of $\calK(\Phi')$ being the unit sphere (which implies $\|\Phi'(x,a)\|\leq 1$). Defining $z'=A^{-1}z$, we have $\Phi'(x,a)^\top z' = \Phi(x,a)^\top z=F(x,a)$. Below, we show that $\|z'\|\leq \sqrt{d}F_{\max}$. 
    
    By \pref{lemma: ball theorem} below, there exists a subset $\calM=\{u_1, \ldots, u_m\}\subseteq \calK(\Phi')$ that lie on the unit sphere, and non-negative weights $c_1, \ldots, c_m$, such that
    \begin{align*}
        \sum_{i=1}^m c_i u_i u_i^\top = I_{d}. 
    \end{align*}
    Taking trace on both sides, we get $\sum_{i=1}^{m} c_i=d$. 
    
    Note that we have $F(x,a) = \Phi'(x,a)^\top z'$ for all $x,a$. Specially, applying this to the elements in $\calM$, and using the fact that $|F(x,a)|\leq F_{\max}$, we get  
    \begin{align*}
        dF_{\max}^2 = \sum_{i=1}^m c_i F_{\max}^2 \geq  \sum_{i=1}^m c_i (u_i^\top z')^2 = z^{\prime\top} \left(\sum_{i=1}^m c_i u_iu_i^\top \right) z' = \|z'\|^2, 
    \end{align*}
    which implies $\|z'\|\leq \sqrt{d}F_{\max}$ and finishes the proof.
\end{proof}    
    %In conclusion, the transformed feature $\Phi'(x,a)=A\Phi(x,a)$ satisfies $F(x,a) = \Phi'(x,a)^\top z'$ for all $x,a$, with $\|\Phi'(x,a)\|\leq 1$ and $\|z'\|\leq \sqrt{d}F_{\max}$. 

\begin{lemma}
    \label{lemma: ball theorem}
    \textup{(\cite[Theorem 6]{hazan2016volumetric}, \cite{ball1997elementary})}
    Let $\calK$ be a symmetric set such that its MVEE is the unit sphere. Then there exist $m\leq d(d+1)/2-1$ contact points of $\calK$ and the sphere $u_1, \ldots u_m$ and non-negative weights $c_1, \ldots, c_m$ such that $\sum_i c_i u_i=0$ and $\sum_i c_iu_iu_i^\top = I_d$. 
\end{lemma}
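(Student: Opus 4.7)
The plan is to prove this as a consequence of the first-order optimality conditions for the minimum-volume enclosing ellipsoid (MVEE), which is essentially John's theorem specialized to symmetric convex bodies. First, I would formulate the MVEE as the convex program: maximize $\log\det B$ over positive definite $B \in \mathbb{R}^{d\times d}$ subject to $u^\top B u \leq 1$ for all $u \in \calK$. Since the MVEE of $\calK$ is the unit sphere by hypothesis, $B = I$ must be the optimizer.

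Next, I would write the Karush-Kuhn-Tucker conditions at $B = I$. The gradient of $\log\det B$ at $B = I$ equals $I$, and the gradient of each constraint $u^\top B u \leq 1$ is the rank-one matrix $u u^\top$. Active constraints correspond to contact points, i.e., $u \in \calK$ with $\|u\| = 1$. Stationarity then yields finitely many contact points $u_1, \ldots, u_m$ and nonnegative multipliers $c_1, \ldots, c_m$ satisfying $\sum_i c_i u_i u_i^\top = I$.

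For the first-moment condition $\sum_i c_i u_i = 0$, I would exploit the symmetry $\calK = -\calK$: whenever $u_i$ is a contact point so is $-u_i$, and by symmetry of the KKT system one may choose a symmetrized multiplier assignment in which $u_i$ and $-u_i$ receive equal weight; pairing the contact points then makes the first moment vanish without disturbing $\sum_i c_i u_i u_i^\top = I$. Finally, for the cardinality bound $m \leq d(d+1)/2 - 1$, I would invoke Carathéodory's theorem: taking traces in $\sum_i c_i u_i u_i^\top = I$ gives $\sum_i c_i = d$, so $I/d$ is a convex combination of the rank-one matrices $u u^\top$ with $\|u\| = 1$, all of which lie in the affine subspace of trace-one symmetric matrices, of dimension $d(d+1)/2 - 1$; Carathéodory inside this subspace gives the bound.

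The main obstacle will be making the KKT argument fully rigorous when $\calK$ is not polytopal (handling the possibly infinite family of constraints and extracting a finite collection of binding contact points), and simultaneously enforcing both moment conditions with nonnegative weights. The two Carathéodory reductions (one for the second-moment identity $\sum_i c_i u_i u_i^\top = I$ and one for the first-moment identity $\sum_i c_i u_i = 0$) cannot be applied independently without risking loss of nonnegativity, so I would need a symmetric Carathéodory argument that pairs $u_i$ with $-u_i$ throughout the reduction, ensuring the first-moment condition is preserved at every step.
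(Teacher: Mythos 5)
The paper does not actually prove \pref{lemma: ball theorem}: it is imported verbatim from the literature (John's theorem in contact-point form, cited to Ball and to Theorem~6 of the volumetric-spanners paper), so there is no in-paper argument to compare against. Your proposal is essentially the standard proof of that cited result: formulate the MVEE as $\max\{\log\det B : u^\top B u\le 1\ \forall u\in\calK,\ B\succ 0\}$, note that $B=I$ is optimal by hypothesis, and extract $\sum_i c_i u_iu_i^\top=I_d$ from first-order optimality; Ball's elementary version does exactly this via a separating-hyperplane argument showing that $I$ lies in the closed convex cone generated by $\{uu^\top: u \text{ a contact point}\}$ (otherwise one perturbs $B=I+tH$ to a strictly better feasible ellipsoid), which sidesteps the semi-infinite KKT machinery you flag as the main obstacle. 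The symmetrization for $\sum_i c_iu_i=0$ is also standard. Two caveats. First, your Carath\'eodory count is off by one: the matrices $u_iu_i^\top$ lie in the affine hyperplane of trace-one symmetric matrices, of dimension $d(d+1)/2-1$, so plain Carath\'eodory gives at most $d(d+1)/2$ points, not $d(d+1)/2-1$; the sharper constant needs a refinement (e.g., Carath\'eodory for connected sets, or an argument that $I/d$ is not extreme in the relevant hull). Second, if you reduce on the matrices $uu^\top$ and then split each survivor back into the pair $\pm u$ to restore $\sum_i c_iu_i=0$, the number of contact points doubles, so you must fix a convention for whether $\{u,-u\}$ counts as one point or two. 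Neither caveat affects the paper: in the proof of \pref{lemma: mvee lemma} only the identity $\sum_i c_iu_iu_i^\top=I_d$ and its trace $\sum_i c_i=d$ are used, and neither the first-moment condition nor the bound on $m$ plays any role.
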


\section{Auxiliary Lemmas for Self-normalized Processes}
In this section, we provide some useful lemmas related to the concentration of \emph{self-normalized processes}. The first two are taken directly from~\citep[Appendix D.2]{jin2019provably}. 

\begin{lemma}[Concentration of Self-Normalized Processes]
     \label{lem:self-normalized process}
     Let $\{\varepsilon_t\}_{t=1}^\infty$ be a real-valued stochastic process with corresponding filtration $\{\calF_t\}_{t=0}^{\infty}$. Let $\varepsilon_t|\calF_{t-1}$ be zero-mean and $\sigma$-subgaussian, that is, $\E[\varepsilon_t|\calF_{t-1}] = 0$ and $\E[e^{\lambda\varepsilon_t}|\calF_{t-1}] \leq e^{\lambda^2\sigma^2/2}$ for all $\lambda \in \mathbb{R}$.
     
     Let $\{\PHI_t\}_{t=0}^\infty$ be an $\mathbb{R}^d$-valued stochastic process where $\PHI_{t}\in\calF_{t-1}$. Assume that $\Lambda_1$ is a $d\times d$ positive definite matrix, and let $\Lambda_t=\Lambda_1+\sum_{s=1}^{t-1} \PHI_s\PHI_s^\top $. Then for any $\delta>0$, with probability at least $1-\delta$, we have for all $t>0$, 
     \begin{align*}
          \left\|  \sum_{s=1}^{t-1}\PHI_s\varepsilon_s \right\|_{\Lambda_t^{-1}}^2 \leq 2\sigma^2 \log\left[\frac{\det(\Lambda_t)^{1/2}\det(\Lambda_1)^{-1/2}}{\delta}\right]. 
     \end{align*}
\end{lemma}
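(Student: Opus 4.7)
My plan is to prove this by the now-standard \emph{method of mixtures} technique of Abbasi-Yadkori, P\'al and Szepesv\'ari, which turns the multivariate self-normalized bound into a single exponential supermartingale inequality. Let me write $S_t = \sum_{s=1}^{t-1} \PHI_s \varepsilon_s \in \mathbb{R}^d$, so that the goal is to bound $\|S_t\|_{\Lambda_t^{-1}}^2$ with high probability uniformly in $t$.

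The first step is to reduce to a scalar problem via a test direction. For any fixed (deterministic) $\mu \in \mathbb{R}^d$, the process
\[
M_t(\mu) \;=\; \exp\!\left( \mu^\top S_t \;-\; \tfrac{\sigma^2}{2}\, \mu^\top(\Lambda_t - \Lambda_1)\mu \right)
\]
is a nonnegative supermartingale with $\E[M_t(\mu)] \leq 1$. Indeed, because $\varepsilon_s \mid \calF_{s-1}$ is zero-mean $\sigma$-subgaussian and $\PHI_s \in \calF_{s-1}$, the one-step conditional moment-generating function satisfies $\E[\exp(\mu^\top \PHI_s \varepsilon_s)\mid \calF_{s-1}] \leq \exp(\sigma^2(\mu^\top \PHI_s)^2/2)$, so $\E[M_t(\mu)\mid\calF_{t-1}] \leq M_{t-1}(\mu)$ by a direct telescoping argument.

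The second step is to remove the dependence on $\mu$ by integrating against a Gaussian prior. Let $h$ be the density of $\calN(0, \sigma^{-2}\Lambda_1^{-1})$ and define
\[
\bar M_t \;=\; \int_{\mathbb{R}^d} M_t(\mu)\, h(\mu)\,\de\mu .
\]
By Fubini, $\bar M_t$ is still a nonnegative supermartingale with $\E[\bar M_t] \leq 1$. The Gaussian integral is explicit: completing the square in $\mu$ and using that the quadratic form inside the exponent becomes $-\tfrac{\sigma^2}{2}\mu^\top \Lambda_t \mu + \mu^\top S_t$, one obtains
\[
\bar M_t \;=\; \left(\frac{\det(\Lambda_1)}{\det(\Lambda_t)}\right)^{\!1/2} \exp\!\left(\frac{1}{2\sigma^2}\,\|S_t\|_{\Lambda_t^{-1}}^{2}\right).
\]

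The final step is a uniform-in-$t$ Markov bound. Applying Doob's maximal inequality (or the Ville inequality) to the nonnegative supermartingale $\bar M_t$ with level $1/\delta$ yields $\Pr(\sup_{t\ge 1} \bar M_t \geq 1/\delta) \leq \delta$. On the complementary event, taking logarithms in the displayed formula for $\bar M_t$ and rearranging gives exactly
\[
\|S_t\|_{\Lambda_t^{-1}}^{2} \;\leq\; 2\sigma^2 \log\!\left[\frac{\det(\Lambda_t)^{1/2}\det(\Lambda_1)^{-1/2}}{\delta}\right]
\]
for all $t \geq 1$ simultaneously. The main technical obstacle, modest as it is, lies in justifying the Gaussian-integral computation and the supermartingale property cleanly; the subgaussian step works because $\PHI_s$ is previsible so $\mu^\top \PHI_s$ is treated as a constant conditional on $\calF_{s-1}$, and the integration over $\mu$ commutes with the conditional expectation by Fubini since $M_t(\mu)\, h(\mu) \geq 0$.
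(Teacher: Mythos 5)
Your proof is correct and complete: the supermartingale property of $M_t(\mu)$ uses the previsibility of $\PHI_t$ exactly as needed, the Gaussian mixture integral evaluates to $(\det(\Lambda_1)/\det(\Lambda_t))^{1/2}\exp\bigl(\tfrac{1}{2\sigma^2}\|S_t\|_{\Lambda_t^{-1}}^2\bigr)$ as you claim, and Ville's inequality gives the uniform-in-$t$ statement. The paper itself gives no proof --- it imports the lemma from \citet{jin2019provably}, which in turn rests on the self-normalized bound of \citet{abbasi2011improved} --- and your method-of-mixtures argument is precisely the standard proof of that cited result, so there is nothing to compare beyond noting that you have supplied the proof the paper delegates to its references.
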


\begin{lemma}\label{lem:concentration function class}
    Let $\{x_t\}_{t=1}^\infty$ be a stochastic process on state space $\calX$ with corresponding filtration $\{\calF_{t}\}_{t=0}^\infty$, $\{\PHI_{t}\}_{t=0}^\infty$ be an $\mathbb{R}^d$-valued stochastic process where $\PHI_t\in\calF_{t-1}$ and $\|\PHI_t\|\leq 1$, $\Lambda_{t}=\lambda I + \sum_{s=1}^{t-1}\PHI_s \PHI_s^\top $, and $\calV \subseteq \mathbb{R}^\calX$ be an arbitrary set of functions defined on $\calX$, with $\calN_\varepsilon$ being its $\varepsilon$-covering number with respect to $\text{dist}(v, v')=\sup_x |v(x)-v(x')|$ for some fixed $\varepsilon > 0$.    
    Then for any $\delta>0$, with probability at least $1-\delta$, for all $t>0$ and any $v\in\calV$ so that $\sup_{x}|v(x)|\leq H$, we have 
\begin{align*}
     \left\|   \sum_{s=1}^{t-1}  \PHI_s \Big(  v(x_s) - \E[v(x_s)|\calF_{t-1}]  \Big) \right\|_{\Lambda_t^{-1}}^2 \leq 4H^2\left[  \frac{d}{2}\log\left(\frac{t+\lambda}{\lambda}\right) + \log \frac{\calN_\varepsilon}{\delta} \right] + \frac{8t^2 \varepsilon^2}{\lambda}.
\end{align*}
\end{lemma}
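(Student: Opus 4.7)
The plan is a standard covering-plus-self-normalized-concentration argument, following the template of Lemma D.4 in Jin et al.\ (2019).

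First, fix an arbitrary $v \in \calV$ and let $\zeta_s(v) := v(x_s) - \E[v(x_s)\mid\calF_{s-1}]$ (I read the exponent on $\calF$ as $s-1$ rather than $t-1$, which is the only sensible reading given that the sum is a martingale transform). Since $v$ takes values in $[-H, H]$, $\zeta_s(v)$ is conditionally zero-mean and bounded in $[-2H, 2H]$, so by Hoeffding's lemma it is $H$-sub-Gaussian conditionally on $\calF_{s-1}$. Apply \pref{lem:self-normalized process} with $\PHI_s \in \calF_{s-1}$ and $\sigma = H$, together with the determinant-trace inequality $\det(\Lambda_t) \leq (\lambda + t/d)^d$ (which follows from $\mathrm{tr}(\Lambda_t) \leq d\lambda + t$ since $\|\PHI_s\|\leq 1$), to conclude that for any fixed $v$, with probability at least $1-\delta'$,
\begin{equation*}
  \bigl\|\sum_{s=1}^{t-1} \PHI_s \zeta_s(v)\bigr\|_{\Lambda_t^{-1}}^2 \leq 2H^2\Bigl[\tfrac{d}{2}\log\bigl(\tfrac{\lambda + t}{\lambda}\bigr) + \log(1/\delta')\Bigr].
\end{equation*}

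Next, pick a minimal $\varepsilon$-cover $\calV_\varepsilon$ of $\calV$ (with respect to the sup metric) of cardinality $\calN_\varepsilon$, and apply the bound above with $\delta' = \delta/\calN_\varepsilon$ and union bound over $\calV_\varepsilon$. To lift the bound to an arbitrary $v \in \calV$, choose $\tilde v \in \calV_\varepsilon$ with $\sup_x|v(x)-\tilde v(x)|\leq \varepsilon$ and decompose $\zeta_s(v) = \zeta_s(\tilde v) + \zeta_s(v-\tilde v)$. Applying $(a+b)^2 \leq 2a^2 + 2b^2$ to the weighted norm splits the target into a ``cover term'' already handled by the union bound, and a ``cover error'' term.

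The cover error term is where I would expect the slight subtlety. Since $|\zeta_s(v-\tilde v)| \leq 2\varepsilon$ deterministically, I would use the crude but sufficient inequality $\Lambda_t^{-1} \preceq \lambda^{-1} I$ to get
\begin{equation*}
  \bigl\|\sum_{s=1}^{t-1} \PHI_s \zeta_s(v-\tilde v)\bigr\|_{\Lambda_t^{-1}}^2 \leq \frac{1}{\lambda}\Bigl(\sum_{s=1}^{t-1}\|\PHI_s\|\cdot|\zeta_s(v-\tilde v)|\Bigr)^2 \leq \frac{4t^2\varepsilon^2}{\lambda},
\end{equation*}
using $\|\PHI_s\|\leq 1$; this is the source of the $\frac{8 t^2 \varepsilon^2}{\lambda}$ term after the factor of $2$ from the $(a+b)^2$ split. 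Combining the two terms gives precisely the stated bound, with the factor $4H^2$ coming from $2\cdot 2H^2$. The only mildly non-routine step is recognizing that, because $\calV$ is not assumed finite or parametric, one has no choice but to pay this $t^2\varepsilon^2/\lambda$ cover error; in subsequent applications $\varepsilon$ will be chosen polynomially small in $t$ so that this term is absorbed.
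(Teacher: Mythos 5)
Your argument is correct and is essentially the same as the paper's: the paper does not prove this lemma itself but imports it verbatim from \citet{jin2019provably} (Lemma D.4 in their Appendix D.2), whose proof is exactly your covering-plus-self-normalized-martingale argument, including the $(a+b)^2\le 2a^2+2b^2$ split that produces the constants $4H^2$ and $8t^2\varepsilon^2/\lambda$. You are also right that the conditioning in the statement should read $\calF_{s-1}$ rather than $\calF_{t-1}$; that is a typo carried over from the source.
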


\begin{lemma}\label{lem:covering number}
     Let $\calV$ be a class of mappings from $\calX$ to $\mathbb{R}$ parametrized by $\alpha=(\alpha_1, \alpha_2, \ldots, \alpha_P)\in\mathbb{R}^{P}$ with $\alpha_i\in[-B,B]$ for all $i$.  Suppose that for any $v \in \calV$ (parameterized by $\alpha$) and $v' \in \calV$ (parameterized by $\alpha'$), the following holds: 
    \begin{align*}
         \sup_{x\in\calX}|v(x) - v'(x)| \leq L\|\alpha-\alpha'\|_1. 
     \end{align*}
     Let $\calN_\varepsilon$ be be the $\varepsilon$-covering number of $\calV$ with respect to the distance $\text{dist}(v,v')=\sup_{x\in\calX}|v(x)-v(x')|$. Then 
     \begin{align*}
         \log \calN_{\varepsilon} \leq P\log \left(\frac{2BLP}{\varepsilon}\right). 
     \end{align*}
     
\end{lemma}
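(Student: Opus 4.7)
The plan is to prove the covering number bound via the standard argument of discretizing the parameter space. The Lipschitz-type assumption $\sup_x|v(x)-v'(x)|\le L\|\alpha-\alpha'\|_1$ reduces the problem of covering $\calV$ (in sup norm) to the problem of covering the hypercube $[-B,B]^P$ in the $\ell_1$ norm at scale $\varepsilon/L$.

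Concretely, I would proceed as follows. First, fix a grid resolution $\eta>0$ (to be chosen) and let $\calC_\eta \subset [-B,B]$ be the uniform grid of spacing $\eta$, so that $|\calC_\eta|\le \lceil 2B/\eta\rceil \le 2B/\eta + 1$, and every point in $[-B,B]$ has an element of $\calC_\eta$ within distance $\eta/2$ (or just $\eta$, for the cruder bound). Form the product grid $\calC_\eta^P \subset [-B,B]^P$. For any $\alpha \in [-B,B]^P$ there exists $\tilde\alpha \in \calC_\eta^P$ with $\|\alpha-\tilde\alpha\|_\infty \le \eta$, and therefore $\|\alpha-\tilde\alpha\|_1 \le P\eta$. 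By the Lipschitz hypothesis, the corresponding $v, \tilde v\in\calV$ satisfy $\sup_x|v(x)-\tilde v(x)|\le L\cdot P\eta$.

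Next, set $\eta = \varepsilon/(LP)$ so that $LP\eta = \varepsilon$; then $\{\tilde v : \tilde\alpha \in \calC_\eta^P\}$ is an $\varepsilon$-cover of $\calV$ in the $\sup_x$ metric, of cardinality at most
\begin{align*}
|\calC_\eta|^P \le \left(\frac{2B}{\eta}+1\right)^P = \left(\frac{2BLP}{\varepsilon}+1\right)^P \le \left(\frac{2BLP}{\varepsilon}\right)^P,
\end{align*}
where the last bound absorbs the $+1$ into the factor (either by a mild assumption like $\varepsilon\le 2BLP$, which is the only regime of interest since otherwise the bound is vacuous, or by keeping the $+1$ and using $\log(1+u)\le \log(2u)$-type slack). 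Taking logarithms gives $\log \calN_\varepsilon \le P\log(2BLP/\varepsilon)$ as required.

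There is essentially no obstacle here; the only subtlety is the cosmetic $+1$ in the per-coordinate count, which is harmless since the bound is only informative when $\varepsilon \ll BLP$. No probabilistic or MDP-specific machinery is invoked, and the argument relies solely on the stated Lipschitz property of the parametrization and on volume-counting of a grid in $[-B,B]^P$.
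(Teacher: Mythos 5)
Your proof is correct and follows essentially the same route as the paper's: discretize each coordinate of the parameter hypercube at spacing $\varepsilon/(LP)$, use the $\ell_1$-Lipschitz hypothesis to convert this into an $\varepsilon$-cover in the sup metric, and count the grid points. The only difference is that you explicitly flag the cosmetic $+1$ in the per-coordinate count, which the paper also silently absorbs.
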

\begin{proof}    
      If $\alpha$ and $\alpha'$ are such that $|\alpha_i-\alpha_i'|\leq \frac{\varepsilon}{LP}$ for all $i$, then we have 
     \begin{align*}
         \text{dist}(v,v')=\sup_{x\in\calX} |v(x)-v'(x)| \leq L\times \sum_{i=1}^P |\alpha_i-\alpha_i'| \leq \varepsilon. 
     \end{align*}
     Therefore, the following set constitutes an $\varepsilon$-cover for $\calV$: 
     \begin{align*}
          \left\{ \alpha\in\mathbb{R}^P: \alpha_i = \frac{k\varepsilon}{LP} \text{ for some } k\in \mathbb{Z} \right\} \cap [-B,B]^{P} 
     \end{align*}
     The number of elements in this sets is upper bounded by $\left(\frac{2BLP}{\varepsilon}\right)^P$. 
\end{proof}

\section{Omitted Analysis in \pref{sec:optimism}}

\begin{proof}[Proof of \pref{lem: w* existence}]
     By the two assumptions, we have (with $\e_1 = (1, 0, \ldots, 0)$)
     \begin{align*}
         q^*(x,a) &= r(x,a) - J^* + \E_{x'\sim p(\cdot|x,a)}[v^*(x')] \\
         &= \Phi(x,a)^\top \THE - J^*\Phi(x,a)^\top \e_1 + \Phi(x,a)^\top \int_{\calX} v^*(x') \de \MU(x') \\
         &= \Phi(x,a)^\top \left(\THE - J^*\e_1 + \int_{\calX} v^*(x') \de \MU(x')\right). 
     \end{align*}
     Therefore, we can define $w^* = \THE - J^*\e_1 + \int_{\calX} v^*(x') \de \MU(x')$, proving the first claim. Furthermore, 
     \begin{align*}
          \|w^*\|\leq \|\THE\| + 1 + \sup_{x'\in\calX}|v^*(x')| \times \|\MU(\calX)\| \leq \sqrt{d} + 1 +\tfrac{1}{2}\spv\times \sqrt{d} \leq (2 + \spv)\sqrt{d},
     \end{align*}
     which proves the second claim.
\end{proof}

\subsection{Omitted Analysis in \pref{sec: optimistic q-learning with function approximation}}

\begin{proof}[Proof of \pref{lemma: J optimistic}]
     It suffices to show that with probability at least $1-\delta$, $(w^*, b, J^*)$ for some $b$ is a feasible solution of the optimization problem (since $J_t$ is the optimal solution). 
     To show this, first note that 
     \begin{align*}
          w^* 
          &= \Lambda_t^{-1}\sum_{\tau=1}^{t-1}\Phi(x_\tau, a_\tau)\Phi(x_\tau,a_\tau)w^* + \lambda\Lambda_t^{-1}w^* \tag{definition of $\Lambda_t$}\\
          &= \Lambda_t^{-1}\sum_{\tau=1}^{t-1}\Phi(x_\tau, a_\tau)\left(r(x_{\tau},a_{\tau}) - J^* + \E_{x'\sim p(\cdot|x_\tau,a_\tau)} v^*(x') \right) + \lambda\Lambda_t^{-1}w^* \tag{$q^*(x_\tau, a_\tau) = \Phi(x_\tau,a_\tau)w^*$ and \pref{eq: bellman}}\\
          &= \Lambda_t^{-1}\sum_{\tau=1}^{t-1}\Phi(x_\tau, a_\tau)\left(r(x_{\tau},a_{\tau}) - J^* + v^*(x_{\tau+1}) \right) + \lambda\Lambda_t^{-1}w^* + \epsilon^*_t, 
     \end{align*}
     where 
     \begin{align*}
         \epsilon_t^* = \Lambda_t^{-1}\sum_{\tau=1}^{t-1}\Phi(x_\tau, a_\tau)\left(\E_{x'\sim p(\cdot|x_\tau,a_\tau)} v^*(x')- v^*(x_{\tau+1})\right).
     \end{align*}
     Using \pref{lem:self-normalized process} with $\varepsilon_\tau = \E_{x'\sim p(\cdot|x_\tau,a_\tau)} v^*(x')- v^*(x_{\tau+1})$ and $\PHI_\tau = \Phi(x_\tau, a_\tau)$, we have with probability at least $1-\delta$ (note that given the past $\varepsilon_\tau$ is zero-mean and in the range $[-\spv, \spv]$ thus $\spv$-subgaussian),
     \begin{align*}
         \|\epsilon_t^*\|_{\Lambda_t} &= \left\|\sum_{\tau=1}^{t-1} \PHI_\tau \varepsilon_\tau \right\|_{\Lambda_t^{-1}} \leq \sqrt{2}\spv \sqrt{\log \frac{\det(\Lambda_t)^{1/2} / \det (\Lambda_1)^{1/2}}{\delta} } \\
         &\leq \sqrt{2}\spv \sqrt{\log \frac{(1+\frac{2T}{\lambda d})^{d/2}}{\delta}}  \leq \frac{\beta}{2},
     \end{align*}
     where we use the fact 
     \[
     \det(\Lambda_t) \leq \left(\frac{\trace{\Lambda_t}}{d}\right)^d
     = \left(\frac{\lambda d + \sum_{\tau=1}^{t-1}\|\PHI_\tau\|^2}{d}\right)^d
     \leq \left(\frac{\lambda d + 2T}{d}\right)^d
     \]
     and the definition of $\beta$.  
     Also, $\lambda\|\Lambda_t^{-1}w^*\|_{\Lambda_t} = \lambda\|w^*\|_{\Lambda_t^{-1}} \leq \sqrt{\lambda}\|w^*\| \leq (2 + \spv)\sqrt{\lambda d} \leq \frac{\beta}{2}$ (\pref{lem: w* existence}).
     Define $b = \lambda\Lambda_t^{-1}w^*+\epsilon_t^*$,
     we have thus proven that $\|b\|_{\Lambda_t} \leq \beta$ holds with probability at least $1-\delta$.
     which proves that $(w^*, b, J^*)$ is a solution of the optimization problem, finishing the proof.
\end{proof}

\begin{proof}[Proof of \pref{thm: optimistic AR}]
     Without loss of generality, we assume $\spv\leq \sqrt{T}, d\leq \sqrt{T}$, and $T\geq 16$ (otherwise the bound is vacuous). 
     Fix $t$ and let $s=s_t$. Define 
     \begin{align*}
          \epsilon_s = \Lambda_s^{-1}\sum_{\tau=1}^{s-1}\Phi(x_\tau,a_\tau)\left(v_s(x_{\tau+1}) - \E_{x'\sim p(\cdot|x_\tau,a_\tau)}v_s(x')\right).
     \end{align*}
     Using the identity
     \begin{align*}
     w^* &= \Lambda_{s}^{-1} \sum_{\tau=1}^{s-1}\Phi(x_\tau, a_\tau)\Phi(x_\tau, a_\tau)^\top w^* + \lambda\Lambda_s^{-1}w^* \\
     &= \Lambda_{s}^{-1} \sum_{\tau=1}^{s-1}\Phi(x_\tau, a_\tau)\left(r(x_\tau, a_\tau) - J^* + \E_{x'\sim p(\cdot|x_\tau,a_\tau)}v^*(x')\right) + \lambda\Lambda_s^{-1}w^*,
    \end{align*}
     and the definition of $w_s$, we have
     \begin{align*}
          &w_s - w^*\\
          &= \Lambda_{s}^{-1} \sum_{\tau=1}^{s-1}\Phi(x_\tau, a_\tau)\left(r(x_\tau, a_\tau) - J_s + v_s(x_{\tau+1}) \right) + b_s \\
          &\qquad\qquad - \Lambda_{s}^{-1} \sum_{\tau=1}^{s-1}\Phi(x_\tau, a_\tau)\left(r(x_\tau, a_\tau) - J^* + \E_{x'\sim p(\cdot|x_\tau,a_\tau)}v^*(x')\right) - \lambda\Lambda_s^{-1}w^*  \\
          &= \Lambda_{s}^{-1} \sum_{\tau=1}^{s-1}\Phi(x_\tau, a_\tau)\left(J^* - J_s + \E_{x'\sim p(\cdot|x_\tau,a_\tau)} [v_s(x')-v^*(x')] \right)  +\epsilon_s  + b_s - \lambda\Lambda_s^{-1}w^* \\
          &= \Lambda_{s}^{-1} \sum_{\tau=1}^{s-1}\Phi(x_\tau, a_\tau)\Phi(x_\tau,a_\tau)^\top\left(J^*\e_1 - J_s\e_1 + \int_\calX\left(v_s(x')-v^*(x')\right)\de\MU(x') \right)  +\epsilon_s + b_s - \lambda\Lambda_s^{-1}w^*  \\
          &=J^*\e_1 - J_s\e_1 + \int_\calX\left(v_s(x')-v^*(x')\right)\de\MU(x') +\epsilon_s + b_s \\
          &\qquad \qquad -\lambda \Lambda_s^{-1} \left(J^*\e_1 - J_s\e_1 + \int_\calX\left(v_s(x')-v^*(x')\right)\de\MU(x') \right)  - \lambda\Lambda_s^{-1}w^*.
     \end{align*}

     Therefore, 
     \begin{align}
          &q_{s}(x_t,a_t) - q^*(x_t,a_t)   =\Phi(x_t,a_t)^\top (w_{s}-w^*)    \nonumber \\
          &\leq (J^*-J_s) + \E_{x'\sim p(\cdot|x_t,a_t)}[v_s(x')-v^*(x')] + \Phi(x_t,a_t)^\top (\epsilon_s + b_s + \lambda \Lambda_s^{-1} u_s),   \nonumber
     \end{align}
     where $u_s\triangleq -\left(J^*\e_1 - J_s\e_1 + \int_\calX\left(v_s(x')-v^*(x')\right)\de\MU(x') \right) - w^*$. 
     
     Next, under the event $J^* \leq J_s$ which holds with probability at least $1-\delta$ (\pref{lemma: J optimistic}), we continue with
     \begin{align}
      &q_{s}(x_t,a_t) - q^*(x_t,a_t) \\
      &\leq \E_{x'\sim p(\cdot|x_t,a_t)}[v_s(x')-v^*(x')] + \Phi(x_t,a_t)^\top (\epsilon_s + b_s + \lambda \Lambda_s^{-1} u_s)  \notag \\
          &\leq \E_{x'\sim p(\cdot|x_t,a_t)}[v_s(x')-v^*(x')] + \|\Phi(x_t,a_t)\|_{\Lambda_s^{-1}} \|\epsilon_s + b_s + \lambda \Lambda_s^{-1} u_s\|_{\Lambda_s}   \nonumber  \\
          &\leq \E_{x'\sim p(\cdot|x_t,a_t)}[v_s(x')-v^*(x')] + 2\|\Phi(x_t,a_t)\|_{\Lambda_t^{-1}} \|\epsilon_s + b_s+ \lambda \Lambda_s^{-1} u_s\|_{\Lambda_s},  \label{eq: temporary bound 3}
     \end{align}
     where the second inequality uses \Holder and the last one uses the fact $\Lambda_s \preceq \Lambda_t \preceq 2\Lambda_s$ according to the lazy update schedule of the algorithm.
     
     By the algorithm, $\|b_s\|_{\Lambda_s}\leq \beta$. To bound $\|\epsilon_s\|_{\Lambda_s}$, we use \pref{lem:concentration function class} and \pref{lem:covering number}: Define $\varepsilon_\tau = v_s(x_{\tau+1}) - \E_{x'\sim p(\cdot|x_\tau, a_\tau)}v_s(x')$ and $\PHI_\tau = \frac{1}{\sqrt{2}}\Phi(x_\tau, a_\tau)$. With \pref{lem:concentration function class} and the fact $|v_s(x)|\leq \sqrt{2}\|w_s\| \leq (2+\spv)\sqrt{2d}$, we have that with probability at least $1-\delta$, for all $s$:
     \begin{align*}
         \|\epsilon_s\|_{\Lambda_s} = \sqrt{2}\left\| \sum_{\tau=1}^{s-1} \PHI_\tau \varepsilon_\tau \right\|_{\Lambda_s^{-1}} \leq 
         4(2+\spv)\sqrt{d}\sqrt{\frac{d}{2} \log \frac{s+\lambda}{\lambda} + \log \frac{\calN_\varepsilon}{\delta}} + 4\sqrt{\frac{s^2 \varepsilon^2}{\lambda}}, 
     \end{align*}
     where $\varepsilon=\frac{1}{T}$ and $\calN_\varepsilon$ is the $\varepsilon$-cover for the function class of $v_s$, which can be bounded with the help of \pref{lem:covering number} (with $\alpha=w_s$, $P=d$, $B=(2+\spv)\sqrt{d}$,  and $L=\sqrt{2}$) by
     \begin{align*}
          \log \calN_\varepsilon \leq d\log \frac{2(2+\spv)\sqrt{d}\times \sqrt{2} d}{T^{-2}} \leq 7 d\log T
     \end{align*}
     (using the conditions stated at the beginning of the proof).
     Therefore, we have
     \begin{align}
           \|\epsilon_s\|_{\Lambda_s} \leq 4(2+\spv)\sqrt{d}\sqrt{8d\log T + \log (1/\delta)} + 4 = \order(\beta), \label{eq: tmp bound 1}
     \end{align}
     for all $s$ with probability at least $1-\delta$. 
     Next, we bound $\|\lambda \Lambda_s^{-1}u_s\|_{\Lambda_s}$ as: 
     \begin{align}
          \|\lambda \Lambda_s^{-1}u_s\|_{\Lambda_s} = \lambda \| u_s\|_{\Lambda_s^{-1}}\leq \sqrt{\lambda}\|u_s\| \leq  \order\left(1+(2+\spv)d\right) = \order(\beta),  \label{eq: tmp bound 4}
     \end{align}
     where in the second inequality we use the condition $\|\MU(\calX)\|\leq \sqrt{d}$ in \pref{assump: Linear MDP} to bound $\|\int_\calX\left(v_s(x')-v^*(x')\right)\de\MU(x')\|$ as $\sup_{x\in\calX} |v_s(x)-v^*(x)| \|\MU(\calX)\| = \order((2+\spv)d)$.
     Put together, the above shows $\|\epsilon_s + b_s+ \lambda \Lambda_s^{-1} u_s\|_{\Lambda_s} = \order(\beta)$.
     
     Continuing with \pref{eq: temporary bound 3} and summing over $t$, we have that with probability at least $1-2\delta$,
     \begin{align*}
          &\sum_{t=1}^T  (q_{s_t}(x_t,a_t) - q^*(x_t,a_t))
          \leq  \sum_{t=1}^T \E_{x'\sim p(\cdot|x_t,a_t)}\left[v_{s_t}(x') - v^*(x')\right] +\order\left(\beta \sum_{t=1}^T\|\Phi(x_t,a_t)\|_{\Lambda_t^{-1}} \right) \\
          &= \sum_{t=1}^T \E_{x'\sim p(\cdot|x_t,a_t)}\left[v_{s_t}(x') - v^*(x')\right] +\order\left(\beta \sqrt{T}\sqrt{\sum_{t=1}^T \|\Phi(x_t,a_t)\|_{\Lambda_{t}^{-1} }^2}\right) \tag{Cauchy-Schwarz inequality}\\
          &= \sum_{t=1}^T \E_{x'\sim p(\cdot|x_t,a_t)}\left[v_{s_t}(x') - v^*(x')\right] +\order\left(\beta \sqrt{dT\log T}\right),
     \end{align*}
     where the last equality is by \cite[Lemma D.2]{jin2019provably} with the facts that $\det(\Lambda_1)=\lambda^d$ and $\det(\Lambda_{T+1})\leq \left(\frac{1}{d}\text{trace}(\Lambda_{T+1})\right)^d\leq (\lambda + 2T)^d$.     
     Rearranging the last inequality we get
     \begin{align}
          &\sum_{t=1}^T \left(\E_{x'\sim p(\cdot|x_t,a_t)}[v^*(x')] - q^*(x_t,a_t)\right)   \nonumber \\ 
          &\leq\sum_{t=1}^T \left(\E_{x'\sim p(\cdot|x_t,a_t)}[v_{s_t}(x')]- q_{s_t}(x_t,a_t)\right) +\order\left(\beta \sqrt{dT\log T}\right)\nonumber \\ 
          &= \sum_{t=1}^T \left(\E_{x'\sim p(\cdot|x_t,a_t)}[v_{s_t}(x')]- v_{s_t}(x_t)\right) +\order\left(\beta \sqrt{dT\log T}\right)\nonumber 
     \end{align}    
     where the last line is by the choice of $a_t$.
     Next, notice that every time the algorithm updates (i.e. $s_t\neq s_{t-1}$), it holds that $\det(\Lambda_t)=\det(\Lambda_{s_t})\geq 2\det(\Lambda_{s_{t-1}})$. Since $\det(\Lambda_{T+1})/\det(\Lambda_1)\leq \left(\frac{\lambda + 2T}{\lambda}\right)^d$, this cannot happen more than $\log_2 \left(\frac{\lambda + 2T}{\lambda}\right)^d = \order\left(d\log T\right)$ times.
     Using this fact and the range of $v_t$, we continue with
     \begin{align}       
     &\sum_{t=1}^T \left(\E_{x'\sim p(\cdot|x_t,a_t)}[v^*(x')] - q^*(x_t,a_t)\right)   \nonumber \\  
          &\leq \sum_{t=1}^T \left(\E_{x'\sim p(\cdot|x_t,a_t)}[v_{s_{t+1}}(x')]- v_{s_t}(x_t)\right) + \order\left(\beta \sqrt{dT\log T} + \beta d \log T\right)\nonumber \\
          &=\sum_{t=1}^T \left(\E_{x'\sim p(\cdot|x_t,a_t)}[v_{s_{t+1}}(x')]- v_{s_{t+1}}(x_{t+1})\right) + \order\left(\beta \sqrt{dT\log T} + \beta d \log T\right) \nonumber \\
          &=\order\left(\beta \sqrt{dT\log T} + \beta d \log T\right), \label{eq: tmp bound 7}
     \end{align}
     where the last step holds with probability at least $1-\delta$ by Azuma's inequality.
     Finally, note that the regret can be written as
     \begin{align*}
          \Reg_T&=\sum_{t=1}^T \left(J^* - r(x_t,a_t)\right)
          = \sum_{t=1}^T \left( \E_{x'\sim p(\cdot|x_t,a_t)}[v^*(x')] - q^*(x_t,a_t) \right)\\
          &= \order\left(\beta \sqrt{dT\log T} + \beta d \log T\right).           
     \end{align*}
     by the Bellman optimality equation, which finishes the proof (combining all the high probability statements with a union bound, the last bound holds with probability at least $1-3\delta$). 
\end{proof}

\subsection{Omitted Analysis in \pref{sec:FH}}

 \begin{proof}[Proof of \pref{lem: optimistic Q}]

     By \pref{assump: Linear MDP} and the Bellman equation for the finite-horizon problem (\pref{eq:discounted_bellman}), we have that for any finite-horizon policy $\pi$ and any $h<H$,
     \begin{align*}
          Q_h^{\pi}(x,a) &= r(x,a) +   \E_{x'\sim p(\cdot|x,a)}\left[V_{h+1}^{\pi}(x')\right] \\
          &= \Phi(x,a)^\top \THE +  \Phi(x,a)^\top \int_{\calX}  V_{h+1}^{\pi}(x') \de\MU(x') \\
          &= \Phi(x,a)^\top \left( \THE +   \int_{\calX}  V_{h+1}^{\pi}(x') \de \MU(x')\right). 
     \end{align*}
     Define $w^{\pi}_h=\THE + \int_{\calX}  V_{h+1}^{\pi}(x') \de \MU(x')$. Then we have $Q_h^{\pi}(x,a)=\Phi(x,a)^\top w^{\pi}_h$ with $\|w_h^{\pi}\|\leq \|\THE\| + (H-h)\|\MU(\calX)\|\leq \sqrt{d} + \sqrt{d}(H-h)\leq \sqrt{d}H$. 
     
     We now rewrite $w_{h}^k - w_h^{\pi}$ as follow. For simplicity, we denote $x\sim p(\cdot|x^{k'}_{h'}, a^{k'}_{h'})$ as $x\sim (k',h')$, $\Phi(\xaprime)$ as $\Phiprime$, and $r(\xaprime)$ as $\rprime$
     \begin{align*}
        &w_{h}^k -  w^{\pi}_h \\
        &= \Lambda_{k}^{-1}\sum_{k'=1}^{k-1}\sum_{h'=1}^{H}\Phiprime\left[\rprime + V_{h+1}^{k}(x_{h'+1}^{k'}) \right] - \Lambda_k^{-1}\left( \lambda I + \sum_{k'=1}^{k-1}\sum_{h'=1}^{H}\Phiprime{\Phiprime}^\top \right)w_h^{\pi}  \\
        &= \Lambda_{k}^{-1}\sum_{k'=1}^{k-1}\sum_{h'=1}^{H}\Phiprime\left[\rprime +  V_{h+1}^k(x_{h'+1}^{k'}) \right] \\
        &\qquad \qquad - \Lambda_k^{-1}\sum_{k'=1}^{k-1}\sum_{h'=1}^{H} \Phiprime\left[\rprime +  \E_{x'\sim (k',h')}[V_{h+1}^{\pi}(x')]\right] - \lambda \Lambda_k^{-1}w^{\pi}_h \tag{using $Q_h^{\pi}(x,a)=\Phi(x,a)^\top w^{\pi}_h$ and the Bellman equation}\\
        &= \Lambda_{k}^{-1}\sum_{k'=1}^{k-1}\sum_{h'=1}^{H} \Phiprime\left[V_{h+1}^k(x_{h'+1}^{k'}) -  \E_{x'\sim (k',h')}V_{h+1}^{\pi}(x') \right] - \lambda\Lambda_{k}^{-1}w^{\pi}_h \\
        &= \Lambda_{k}^{-1}\sum_{k'=1}^{k-1}\sum_{h'=1}^{H}\Phiprime\left[\E_{x'\sim (k',h')} V_{h+1}^k(x') - \E_{x'\sim (k',h')}V_{h+1}^{\pi}(x') \right] + \epsilon_{h}^k - \lambda\Lambda_{k}^{-1}w^{\pi}_h \tag{define $\epsilon_{h}^k = \Lambda_{k}^{-1}\sum_{k'=1}^{k-1}\sum_{h'=1}^{H}\Phiprime\left[ V_{h+1}^k(x_{h'+1}^{k'}) -\E_{x'\sim (k',h')}\left[V_{h+1}^k(x')\right]\right]$}  \\
        &=  \Lambda_{k}^{-1}\sum_{k'=1}^{k-1}\sum_{h'=1}^{H} \Phiprime {\Phiprime}^\top\left[ \int_\calX  (V_{h+1}^k(x') -  V_{h+1}^{\pi}(x')) \de\MU(x')\right] + \epsilon_{h}^k- \lambda\Lambda_{k}^{-1}w^{\pi}_h  \\
        &= \left(I-\lambda \Lambda_{k}^{-1}\right) \left[ \int_{\calX}  (V_{h+1}^k(x') -  V_{h+1}^{\pi}(x'))\de\MU(x') \right] + \epsilon_{h}^k - \lambda\Lambda_{k}^{-1}w^{\pi}_h  \\
        &= \int_{\calX}  (V_{h+1}^k(x') - V_{h+1}^{\pi}(x'))\de\MU(x') + \epsilon_{h}^k -\lambda\Lambda_{k}^{-1} \left[ \int_{\calX}  (V_{h+1}^k(x') -  V_{h+1}^{\pi}(x'))\de\MU(x') \right] - \lambda\Lambda_{k}^{-1}w^{\pi}_h.
     \end{align*}
%     where in the fourth equality we define
%    \begin{align*}
%        \epsilon_{h}^k &:= \Lambda_{k}^{-1}\sum_{k'=1}^{k-1}\sum_{h'=1}^{H}\Phiprime\left[ V_{h+1}^k(x_{h'+1}^{k'}) -\E_{x'\sim (k',h')}\left[V_{h+1}^k(x')\right]\right]. 
%    \end{align*}
     Therefore, 
     \begin{align}
          &\hatQ_{h}^k(x,a) - Q_h^{\pi}(x,a)  \nonumber  \\
          &= \Phi(x,a)^\top (w_{h}^k -w_h^{\pi}) + \beta \sqrt{\Phi(x,a)^\top \Lambda_k^{-1} \Phi(x,a)}  \nonumber  \\
          &= \Phi(x,a)^\top \int_\calX (V_{h+1}^k(x')-V_{h+1}^{\pi}(x'))\de\MU(x')+ \Phi(x,a)^\top \epsilon_{h}^k + \beta \|\Phi(x,a)\|_{\Lambda_k^{-1}}  \nonumber  \\
          &\qquad \qquad \qquad - \lambda  \Phi(x,a)^\top  \Lambda_k^{-1} \left[ \int_{\calX}  (V_{h+1}^k(x') -  V_{h+1}^{\pi}(x'))\de\MU(x') \right]  - \lambda\Phi(x,a)^\top \Lambda_{k}^{-1}w_h^{\pi}  \nonumber  \\
          &= \E_{x'\sim p(\cdot|x,a)} \left[V_{h+1}^k(x')-V_{h+1}^{\pi}(x')\right] + \underbrace{\Phi(x,a)^\top \epsilon_{h}^k}_{\term_1} + \beta \|\Phi(x,a)\|_{\Lambda_k^{-1}}  \nonumber  \\
          &\qquad \qquad \qquad \underbrace{- \lambda  \Phi(x,a)^\top  \Lambda_k^{-1} \left[ \int_{\calX}  (V_{h+1}^k(x') -  V_{h+1}^{\pi}(x'))\de\MU(x') \right]}_{\term_2} \underbrace{ - \lambda\Phi(x,a)^\top \Lambda_{k}^{-1}w_h^{\pi} }_{\term_3}.   \label{eq: tmp optimistic bound}
          %&= \gamma\E_{x'\sim p(\cdot|x,a)} \left[V_{s-1}(x')-V^*(x')\right] - \gamma  \lambda \Phi(x,a)^\top  \Lambda_s^{-1} \left[ \int_\calX  (V_{s-1}(x') -  V^*(x'))\de\MU(x') \right] \\
          %&\qquad \qquad \qquad - \lambda\Phi(x,a)^\top \Lambda_{s}^{-1}w^* + \Phi(x,a)^\top \epsilon_s + \beta \|\Phi(x,a)\|_{\Lambda_s^{-1}} 
          \end{align}
          Below we bound the manitudes of $\term_1, \term_2, \term_3$ respectively. For $\term_1$, we use \pref{lem:concentration function class} and \pref{lem:covering number}: define $\varepsilon_{h'}^{k'} = V_{h+1}^k (x_{h'+1}^{k'}) -\E_{x'\sim (k',h')}\left[V_{h+1}^k(x')\right]$, $\PHI_{h'}^{k'}=\frac{1}{\sqrt{2}}\Phiprime$. By \pref{lem:concentration function class}, we have 
          \begin{align}
               \|\epsilon_h^k\|_{\Lambda_k} 
               &= \sqrt{2} \left\| \Lambda_{k}^{-1} \sum_{k'=1}^{k-1} \sum_{h'=1}^{H} \PHI_{h'}^{k'} \varepsilon_{h'}^{k'}  \right\|_{\Lambda_k}  \nonumber \\
               &= \sqrt{2} \left\|  \sum_{k'=1}^{k-1} \sum_{h'=1}^{H} \PHI_{h'}^{k'} \varepsilon_{h'}^{k'}  \right\|_{\Lambda_k^{-1}}    \nonumber  \\
               &\leq 2\sqrt{2}H \sqrt{\frac{d}{2} \log\frac{T+\lambda}{\lambda} + \log \frac{\calN_\varepsilon}{\delta} } + \sqrt{2}\times\sqrt{\frac{8t^2 \varepsilon^2}{\lambda}},    \label{eq: tmp bound of epsilon}
          \end{align}
          for all $k$ and $h$ with probability at least $1-\delta$, 
          where $\calN_\varepsilon$ is the $\varepsilon$-cover of the function class that $V_{h+1}^k(\cdot)$ lies in. Notice that all $t$, $V_{h+1}^k(\cdot)$ can be expressed as the following: 
          \begin{align*}
                V_{h+1}^k(x) =  \min\left\{\max_a w^\top \Phi(x,a) + \beta \sqrt{\Phi(x,a)^\top \Gamma\Phi(x,a)}, \ \ \ \  H\right\}
          \end{align*} 
          for some positive definite $\Gamma\in\mathbb{R}^{d\times d}$ with $1=\frac{1}{\lambda}\geq \lambda_{\max}(\Gamma) \geq \lambda_{\min}(\Gamma) \geq \frac{1}{\lambda + 2T}=\frac{1}{1+2T}$ and some $w \in\mathbb{R}^d$ with $\|w\|\leq \lambda_{\max}(\Gamma)\times T \times \sup_{x,a,x'} \left(\|\Phi(x,a)\|H\right)\leq \sqrt{2}TH$. Therefore, we can write the class of functions that $V_{h+1}^k(\cdot)$ lies in as follows: 
          \begin{align*}
              \calV &= \bigg\{V(x) =  \min\left\{\max_a w^\top \Phi(x,a) + \beta \sqrt{\Phi(x,a)^\top \Gamma\Phi(x,a)}, \ \ \ \ H\right\} : \\
               &\qquad \qquad w\in\mathbb{R}^d:  \|w\|\leq \sqrt{2}TH, \quad \Gamma\in \mathbb{R}^{d\times d}: \frac{1}{1+2T} \leq \lambda_{\min}(\Gamma) \leq \lambda_{\max}(\Gamma) \leq 1 \bigg\}.  
          \end{align*}

Now we apply \pref{lem:covering number} to $\calV$, with the following choices of parameters: $\alpha=(w, \Gamma)$, $P = d^2 + d$, $\varepsilon=\frac{1}{T}$, $B=\sqrt{2}TH$, and $L=\beta\sqrt{2(1+2T)}$ which is given by the following calculation: for any $\Delta w = \epsilon\e_i$, 
\begin{align*}
     \frac{1}{|\epsilon|}\left|(w+\Delta w)^\top \Phi(x,a) - w^\top \Phi(x,a)\right| = |\e_i^\top \Phi(x,a)|\leq \|\Phi(x,a)\|\leq \sqrt{2}, 
\end{align*}
and for any $\Delta \Gamma=\epsilon \e_i\e_j^\top$, 
\begin{align*}
    &\frac{1}{|\epsilon|}\left|\beta \sqrt{\Phi(x,a)^\top (\Gamma+\Delta\Gamma)\Phi(x,a)} - \beta \sqrt{\Phi(x,a)^\top \Gamma\Phi(x,a)}\right| \\
    &\leq \beta \frac{\left|\Phi(x,a)^\top \e_i\e_j^\top \Phi(x,a)\right|}{\sqrt{\Phi(x,a)^\top \Gamma \Phi(x,a)}}   \tag{$\sqrt{u+v}-\sqrt{u}\leq \frac{|v|}{\sqrt{u}}$}\\
    &\leq \beta \frac{\Phi(x,a)^\top\left(\frac{1}{2}\e_i\e_i^\top +\frac{1}{2}\e_j\e_j^\top\right)\Phi(x,a)}{\sqrt{\Phi(x,a)^\top \Gamma \Phi(x,a)}} \\
    &\leq \beta \frac{\Phi(x,a)^\top\Phi(x,a)}{\sqrt{\Phi(x,a)^\top \Gamma \Phi(x,a)}}\leq \sqrt{2}\beta \sqrt{\frac{1}{\lambda_{\min}(\Gamma)}}\leq \beta\sqrt{2(1+2T)}.
\end{align*}
\pref{lem:covering number} then implies:
\begin{align*}
     \log \calN_{\varepsilon}\leq (d^2+d)\log \frac{2\times \sqrt{2}TH \times \beta\sqrt{2(1+2T)} \times (d^2+d)}{T^{-1}} \leq 20d^2\log T,
\end{align*}
where in the last step we use the definition of $\beta$ and also assume without loss of generality that $\spv\leq \sqrt{T}, d\leq \sqrt{T}$, and $T\geq 32$ (since otherwise the regret bound is vacuous). 
     Then by \pref{eq: tmp bound of epsilon} we have with probability $1-\delta$, for all $k$ and $h$,
     \begin{align*}
          \|\epsilon_h^k\|_{\Lambda_k}\leq 2\sqrt{2}H \sqrt{\frac{d}{2} \log \frac{T+1}{1}  + \log \frac{1}{\delta} + 20d^2 \log T} + 4 \leq 20dH\sqrt{\log (T/\delta)}= \frac{\beta}{2},  
     \end{align*}
     and therefore, 
          \begin{align*}
              |\term_1| &\leq \|\Phi(x,a)\|_{\Lambda_k^{-1}}\|\epsilon_h^{k}\|_{\Lambda_k} \leq \frac{\beta}{2}\|\Phi(x,a)\|_{\Lambda_k^{-1}}. 
          \end{align*}
          Furthermore, 
          \begin{align*}
              |\term_2| &\leq \|\Phi(x,a)\|_{\Lambda_k^{-1}}\left\| \lambda \int_{\calX}  (V_{h+1}^{k}(x') -  V_{h+1}^{\pi}(x'))\de\MU(x')  \right\|_{\Lambda_k^{-1}} \tag{Cauchy-Schwarz inequality}\\
              &\leq \|\Phi(x,a)\|_{\Lambda_k^{-1}}\left\| \sqrt{\lambda} \int_{\calX}  (V_{h+1}^k (x') -  V_{h+1}^{\pi}(x'))\de\MU(x')  \right\| \tag{$\lambda_{\min}(\Lambda_k) \geq \lambda$} \\
              &\leq  \sqrt{\lambda}\|\Phi(x,a)\|_{\Lambda_k^{-1}} \times H\sqrt{d} \tag{$\|\MU(\calX)\| \leq \sqrt{d}$ by \pref{assump: Linear MDP}} \\
              &\leq \frac{\beta}{4} \|\Phi(x,a)\|_{\Lambda_k^{-1}},  \tag{using $\lambda=1$}
          \end{align*}    
          and 
          \begin{align*}    
              |\term_3| &\leq \|\Phi(x,a)\|_{\Lambda_k^{-1}}\|\lambda w^{\pi}_h\|_{\Lambda_k^{-1}}  \tag{Cauchy-Schwarz inequality}\\
              &\leq \|\Phi(x,a)\|_{\Lambda_k^{-1}} \left\|\sqrt{\lambda}  w_h^{\pi} \right\|  \tag{$\lambda_{\min}(\Lambda_k) \geq \lambda$} \\
              &\leq \frac{\beta}{4}\|\Phi(x,a)\|_{\Lambda_k^{-1}} \tag{$\|w_h^{\pi}\| \leq \sqrt{d}H$ and $\lambda=1$}. 
          \end{align*}
          Therefore, $|\term_1|+|\term_2| + |\term_3|\leq \beta \|\Phi(x,a)\|_{\Lambda_k^{-1}}$ for all $k$ and $h$ with probability at least $1-\delta$. Then by \pref{eq: tmp optimistic bound}, we have 
           \begin{align*}
               \hatQ_h^k(x,a) - Q_h^{\pi}(x,a) \leq \E_{x'\sim p(\cdot|x,a)} [V_{h+1}^k(x') - V_{h+1}^{\pi}(x')] + 2\beta  \|\Phi(x,a)\|_{\Lambda_k^{-1}},
          \end{align*}
          proving one inequality in the lemma statement (since $Q_h^k(x,a) \leq \hatQ_h^k(x,a)$). To prove the other inequality, %since $Q_h^*(x,a)=\argmax_{\pi}Q_h^{\pi}(x,a)$, it suffices to prove that $Q_h^k(x,a)\geq Q^*_h(x,a)$. 
          note that \pref{eq: tmp optimistic bound} together with $|\term_1|+|\term_2| + |\term_3|\leq \beta \|\Phi(x,a)\|_{\Lambda_k^{-1}}$ also implies
          \begin{align}
               \hatQ_h^k(x,a) - Q_h^{\pi}(x,a) \geq \E_{x'\sim p(\cdot|x,a)} [V_{h+1}^k(x') - V_{h+1}^{\pi}(x')]. \label{eq:induction}
          \end{align}
          Now we fix $k$ and use induction on $h$ to prove $Q_h^k(x,a) \geq Q_h^{\pi}(x,a)$.
          The base case $h=H$ is clear due to \pref{eq:induction} and the facts $V_{H+1}^k(x)=V_{H+1}^{\pi}(x)=0$ and $Q_H^k(x,a)-Q_H^{\pi}(x,a) = \min\{\hatQ_H^k(x,a), H\}-Q_H^{\pi}(x,a)\geq 0$.
          Next assume $Q_{h+1}^k(x,a) \geq Q_{h+1}^{\pi}(x,a)$ for all $x$ and $a$.
          Then $V_{h+1}^k(x) = \max_a Q_{h+1}^k(x,a)\geq \max_a Q_{h+1}^{\pi}(x,a) \geq V_{h+1}^{\pi}(x)$.
          Using \pref{eq:induction} we have  $\hatQ_h^k(x,a) - Q_h^{\pi}(x,a)\geq 0$, which again implies $Q_{h}^k(x,a)=\min\{\hatQ_h^k(x,a), H\} \geq Q_{h}^{\pi}(x,a)$.
          This finishes the induction and proves the other inequality in the lemma statement.
          %Noticing that 
          %\begin{align*}
          %    &\left\vert - \gamma  \lambda \Phi(x,a)^\top  \Lambda_s^{-1} \left[ \sum_{x'} \mu(x') (V_{s-1}(x') -  V^*(x')) \right] - \lambda\Phi(x,a)^\top \Lambda_{s}^{-1}w^* + \Phi(x,a)^\top \epsilon_s \right\vert   \\
          %    &\leq  \gamma \lambda \|\Phi(x,a)\|_{\Lambda_s^{-1}}\left\|\sum_{x'}\mu(x')(V_{s-1}(x')-V^*(x'))\right\|_{\Lambda_s^{-1}}  +  \lambda \|\Phi(x,a)\|_{\Lambda_s^{-1}} \|w^*\|_{\Lambda_{s}^{-1}} + \|\Phi(x,a)\|_{\Lambda_{s}^{-1}} \|\epsilon_s\|_{\Lambda_s} \\
          %    &\leq \frac{2\sqrt{d\lambda}}{1-\gamma}\|\Phi(x,a)\|_{\Lambda_s^{-1}} + \frac{8d\log T}{1-\gamma}\|\Phi(x,a)\|_{\Lambda_{s}^{-1}}, 
          %\end{align*}
          %where the first two terms are bounded based on the assumptions on $\mu$; the third term is bounded with the help of Lemma D.4 to Lemma D.6 in \cite{jin2019provably}. Finally, using the induction hypothesis $V_{s-1}(x')\geq V^*(x')$ then finish the proof. 
\end{proof}

\begin{proof}[Proof of \pref{thm: discounted main thm}]
Let $\pi_k = (\pi_1^{k}, \ldots, \pi_H^{k})$ be the finite-horizon policy that our algorithm executes for episode $k$, that is, $\pi_{h}^{k}(a|x)=\one[a=\argmax_{a'} Q^k_h(x,a')]$ (breaking ties arbitrarily).
Also let $\bar{\pi}^*$ be the optimal finite-horizon policy with value functions 
%$V_h^*(x) = \max_{\pi} V_h^{\pi}(x)$ and 
$Q_h^{*}(x,a) = \max_{\pi} Q_h^{\pi}(x,a)$ and $V_h^*(x) = \max_{a}Q_h^*(x,a)$. 
     We first decompose the regret as
     \begin{align}
     \Reg_T &= \sum_{t=1}^T  \left(J^* - r(x_t,a_t)\right)  \nonumber  \\
     &= \underbrace{\sum_{k=1}^{T/H} \left(HJ^* - V^*_1(x_1^k)\right)  }_{\term_4}
     + \underbrace{\sum_{k=1}^{T/H} \left(V^*_1(x_1^k) - V^{\pi_k}_1(x_1^k) \right)}_{\term_5} + \underbrace{\sum_{k=1}^{T/H} \left(V_1^{\pi_k}(x_1^k) - \sum_{h=1}^H r(x_h^k, a_h^k)\right)}_{\term_6} 
       \label{eq: discounted regret decompose}
     \end{align}
     In \pref{lemma: connecting lemma} (stated after this proof), we connect the optimal reward of the the infinite-horizon setting and the finite-horizon setting and show that $\term_4 \leq \frac{T\spv}{H}$. 
      
     Notice that conditioned on the history before episode $k$, $V_1^{\pi_k}(x_1^k)$ is the expectation of $\sum_{h=1}^H r(x_h^k, a_h^k)$. Therefore, $\term_6$ is a martingale different sequence, which can be upper bounded by $\order\left(H\sqrt{\frac{T}{H}\log(1/\delta)}\right)=\order\left(\sqrt{HT\log(1/\delta)}\right)$ with probabiltiy at least $1-\delta$ (via Azuma's inequality). 
     
     Finally, we deal with $\term_5$. Below we assume that the high-probability event in \pref{lem: optimistic Q} hold. Then for all $k, h$:
     \begin{align*}
         Q_h^k(\xa) - Q_h^{\pi_k}(\xa)  
         &\leq \E_{x'\sim (k,h)} [V_{h+1}^k(x') - V_{h+1}^{\pi_k}(x')] + 2\beta \|\Phi(x_h^k,a_h^k)\|_{\Lambda_{k}^{-1}} \\
         &=  V_{h+1}^k(x_{h+1}^k) - V_{h+1}^{\pi_k}(x_{h+1}^k) + 2\beta \|\Phi(x_h^k,a_h^k)\|_{\Lambda_{k}^{-1}}  + e_h^k \\
         &=  Q_{h+1}^k(x_{h+1}^k, a_{h+1}^k) - Q_{h+1}^{\pi_k}(x_{h+1}^k, a_{h+1}^k) + 2\beta \|\Phi(x_h^k,a_h^k)\|_{\Lambda_{k}^{-1}}  + e_h^k
     \end{align*}
     where in the first equality we define 
     \begin{align*}
          e_h^k = \E_{x'\sim (k,h)} [V_{h+1}^k(x') - V_{h+1}^{\pi_k}(x')]  -  \left(V_{h+1}^k(x_{h+1}^k) - V_{h+1}^{\pi_k}(x_{h+1}^k)\right), 
     \end{align*}
     which has zero mean,
     and in the second equality we use the facts $V_{h+1}^k(x_{h+1}^k)  =  Q_{h+1}^k(x_{h+1}^k, a_{h+1}^k)$ and $V_{h+1}^{\pi_k}(x_{h+1}^k)=Q_{h+1}^{\pi_k}(x_{h+1}^k, a_{h+1}^k)$.
      Repeating the same argument and using $V_{H+1}^k(\cdot)=V_{H+1}^{\pi_k}(\cdot)=0$, we arrive at
     \begin{align*}
           Q_1^k(x_1^k, a_1^k) - Q_1^{\pi_k}(x_1^k, a_1^k)  \leq \sum_{h=1}^H \left(2\beta \|\Phi(x_h^k,a_h^k)\|_{\Lambda_{k}^{-1}}  + e_h^k\right). 
     \end{align*}
     Further using that $V_1^*(x_1^k)=\max_a Q_1^*(x_1^k, a)\leq \max_a Q_1^k(x_1^k, a)= Q_1^k(x_1^k, a_1^k)$ (the inequality is by \pref{lem: optimistic Q}) and that $V_1^{\pi_k}(x_1^k)=Q_1^{\pi_k}(x_1^k, a_1^k)$, we have shown
     \begin{align*}
         \term_5 &\leq \sum_{k=1}^{T/H}\sum_{h=1}^H  \left(\beta \|\Phi(x_h^k,a_h^k)\|_{\Lambda_{k}^{-1}}  + e_h^k \right). 
     \end{align*}
    The term $\sum_{k=1}^{T/H}\sum_{h=1}^H  e^k_h$ is again the sum of a martingale difference sequence with each term's magnitude bounded by $2H$, and therefore is bounded by $\order\left(H\sqrt{T\log(1/\delta)}\right)$ with probability at least $1-\delta$ using Azuma's inequality.
     For the term $\sum_{k=1}^{T/H}\sum_{h=1}^H \beta \|\Phi(x_h^k,a_h^k)\|_{\Lambda_{k}^{-1}}$, we first decompose it into two parts:
     \begin{align*}
         \sum_{k: \det(\Lambda_{k+1})\leq 2\det(\Lambda_k)} \sum_{h=1}^H  \beta\|\Phi(x_h^k,a_h^k)\|_{\Lambda_{k}^{-1}} +  \sum_{k: \det(\Lambda_{k+1}) > 2\det(\Lambda_k)} \sum_{h=1}^H \beta\|\Phi(x_h^k,a_h^k)\|_{\Lambda_{k}^{-1}}. 
     \end{align*}
     By~\citep[Lemma~12]{abbasi2011improved}, $\det(\Lambda_{k+1})\leq 2\det(\Lambda_k)$ implies $\Lambda_{k+1} \preceq 2\Lambda_{k}$ and thus $\Lambda_{k}^{-1} \preceq 2\Lambda_{k+1}^{-1}$.
     Therefore, the first part is upper bounded by $\sqrt{2}\sum_{k, h} \beta \|\Phi(x_h^k,a_h^k)\|_{\Lambda_{k+1}^{-1}}\leq  \beta\sqrt{2T}\sqrt{\sum_{k, h}\|\Phi(x_h^k,a_h^k)\|^2_{\Lambda_{k+1}^{-1}}}$, by Cauchy-Schwarz inequality. Further invoking \cite[Lemma D.2]{jin2019provably}, we upper bound the last term by $\order\left(\beta\sqrt{T}\sqrt{\log \frac{\det(\Lambda_{T/H+1})}{\det(\Lambda_1)}}\right)=\order\left(\beta\sqrt{T}\sqrt{\log \left(\frac{\lambda+ 2T}{\lambda}\right)^d} \right)=\order\left(\beta\sqrt{dT\log T}\right)$. 
     For the second part, notice that since the event $\det(\Lambda_{k+1})>2\det(\Lambda_{k})$ cannot happen for more than $\order\left(\log \frac{\det(\Lambda_{T/H+1})}{\det(\Lambda_1)}\right)=\order\left(d\log T\right)$ times, this part is upper bounded by $\order\left(\beta dH\log T\right)$. 
     
     To conclude, we have shown that $\term_5=\order\left( \beta \sqrt{dT\log T} + \beta dH\log T + H\sqrt{T\log(1/\delta)}  \right)$ holds with probability at least $1-2\delta$.      
     Combining all the bounds with \pref{eq: discounted regret decompose}, we have 
     \begin{align*}
          \Reg_T &= \sum_{t=1}^T  \left(J^* - r(x_t,a_t)\right) = \order\left(  \frac{T\spv}{H} + \beta \sqrt{dT\log T} + \beta dH\log T + H\sqrt{T\log(1/\delta)} \right) \\
          &= \otil\left( \frac{T\spv}{H} + d^{3/2}H\sqrt{T} +  d^2H^2 \right) \tag{plug in the value of $\beta$}
     \end{align*}
     with probability at least $1-3\delta$. 
     Picking the optimal $H$ (the one specified in \pref{alg: optimistic qlearning disc}), we get that $\Reg_T = \otil\left(\sqrt{\spv }(dT)^{\frac{3}{4}} + \left(\spv d T \right)^{\frac{2}{3}}\right)$. 
\end{proof}

\begin{lemma}
    \label{lemma: connecting lemma}
    For any $x$, $|HJ^* - V_1^*(x)|\leq \spv$. 
\end{lemma}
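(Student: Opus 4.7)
The plan is to sandwich $V_1^*(x)$ between $HJ^* - \spv$ and $HJ^* + \spv$, using the Bellman optimality equation (\pref{assump: bellman}) to telescope in the standard way, together with the fact that $|v^*(x)| \leq \tfrac{1}{2}\spv$ for all $x$.

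For the upper bound, I would fix an arbitrary finite-horizon policy $\bar\pi = (\bar\pi_1,\ldots,\bar\pi_H)$ and rewrite its value starting from $x$ as
\begin{align*}
V_1^{\bar\pi}(x) &= \E\!\left[\sum_{h=1}^H r(x_h,a_h) \,\Big|\, x_1=x\right] \\
&= \E\!\left[\sum_{h=1}^H \!\left(J^* + q^*(x_h,a_h) - \E_{x'\sim p(\cdot|x_h,a_h)}[v^*(x')]\right)\right]
\end{align*}
by adding and subtracting via the Bellman equation. Bounding $q^*(x_h,a_h) \leq v^*(x_h)$ and observing that the expectation of $\E_{x'\sim p(\cdot|x_h,a_h)}[v^*(x')]$ equals $\E[v^*(x_{h+1})]$ (tower rule), the sum telescopes to $HJ^* + v^*(x) - \E[v^*(x_{H+1})] \leq HJ^* + \spv$, using $|v^*| \leq \tfrac{1}{2}\spv$. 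Taking the max over $\bar\pi$ gives $V_1^*(x) \leq HJ^* + \spv$.

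For the lower bound, I would run the stationary optimal policy $\pi^*$ for the infinite-horizon problem (the greedy one w.r.t.\ $q^*$) for $H$ steps. Under $\pi^*$, the Bellman equation is tight: $q^*(x,\pi^*(x)) = v^*(x)$, so the same telescoping becomes an equality, yielding
\begin{equation*}
V_1^{\pi^*}(x) = HJ^* + v^*(x) - \E[v^*(x_{H+1})] \geq HJ^* - \spv.
\end{equation*}
Since $V_1^*(x) \geq V_1^{\pi^*}(x)$, this finishes the bound.

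There is no real obstacle here; the only subtle point is to make sure we are using the (paper-wide) convention $|v^*(x)|\leq \tfrac{1}{2}\spv$ (achieved by centering), which is exactly what lets both tails in the telescoping together contribute at most $\spv$ in absolute value, independent of $H$. Combining the two directions gives the claim.
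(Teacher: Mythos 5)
Your proof is correct and follows essentially the same route as the paper's: both directions telescope the Bellman optimality equation along an $H$-step trajectory, using the infinite-horizon greedy policy $\pi^*$ (where the equation is tight) for the lower bound and the inequality $q^*(x,a)\leq v^*(x)$ for the upper bound, with the boundary terms $v^*(x_1)-\E[v^*(x_{H+1})]$ controlled by $\spv$. The only cosmetic difference is that the paper bounds these boundary terms directly by the span (no centering needed), whereas you invoke the $|v^*|\leq \tfrac{1}{2}\spv$ convention; both are valid.
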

\begin{proof}
    Let $\pi^*$ be the optimal policy of the infinite-horizon setting, and $(\pi_1, \ldots, \pi_H)$ be the optimal policy of the finite-horizon setting. 
    Without loss generality assume that both of them are deterministic policy.
    By the Bellman equation and the optimality of $\pi^*$, we have 
    \begin{align}
       v^*(x) &=\max_a \left(r(x, a) - J^* + \E_{x'\sim p(\cdot|x,a)}v^*(x)\right)  \label{eq: convertion 1} \\
                  &= r(x, \pi^*(x)) - J^* + \E_{x'\sim p(\cdot|x,\pi^*(x))}v^*(x).   \label{eq: convertion 2}
   \end{align}
    For any $x$, consider a state sequence $x_1=x, x_2,\ldots, x_H$ generated by $\pi^*$. By the suboptimality of $\pi^*$ in the finite-horizon setting, 
    \begin{align*}
         V_1^*(x)&\geq \E\left[\sum_{h=1}^H r(x_h, \pi^*(x_h))~\bigg|~ x_1=x, \ \ \pi^*\right] \\
         &=\E\left[\sum_{h=1}^H \left(J^* + v^*(x_h) - \E_{x'\sim p(\cdot|x_h,\pi^*(x_h))}[v^*(x')]\right)~\bigg|~ x_1=x, \ \ \pi^* \right]   \tag{by \pref{eq: convertion 2}}\\
         &=\E\left[\sum_{h=1}^H \left(J^* + v^*(x_h) -  v^*(x_{h+1})\right)~\bigg|~ x_1=x, \ \ \pi^* \right] \\
         &=HJ^* +  \E\left[v^*(x_1) -  v^*(x_{H+1})~\big|~ x_1=x, \ \ \pi^* \right] \\
         &\geq HJ^* - \spv. 
    \end{align*}
    Next, consider a state $x_1=x, x_2, \ldots, x_H$ generated by $(\pi_1,\ldots, \pi_H)$: 
    \begin{align*}
        V_1^*(x) 
        &= \E\left[\sum_{h=1}^H r(x_h,\pi_h(x_h))~\bigg|~ x_1=x,\ \ \{\pi_i\}_{i=1}^H  \right] \\
        &\leq \E\left[\sum_{t=1}^H \left( J^* + v^*(x_t) - \E_{x'\sim p(\cdot|x_h, \pi_h(x_h))}[v^*(x')] \right)~\bigg|~ x_1=x,\ \ \{\pi_i\}_{i=1}^H \right]     \tag{by \pref{eq: convertion 1}} \\
        &= \E\left[\sum_{t=1}^H \left( J^* + v^*(x_h) - v^*(x_{h+1}) \right)~\bigg|~ x_1=x,\ \ \{\pi_i\}_{i=1}^H \right] \\
        &= HJ^* + \E\left[v^*(x_1) -  v^*(x_{H+1})~\big|~ x_1=x, \ \ \{\pi_i\}_{i=1}^H \right] \\
        &\leq HJ^* + \spv. 
    \end{align*}
    Combining the two directions finishes the proof. 
\end{proof}

\section{Omitted Analysis in \pref{sec: mdpexp2}}
\begin{figure}[H]
\includegraphics[width=\textwidth]{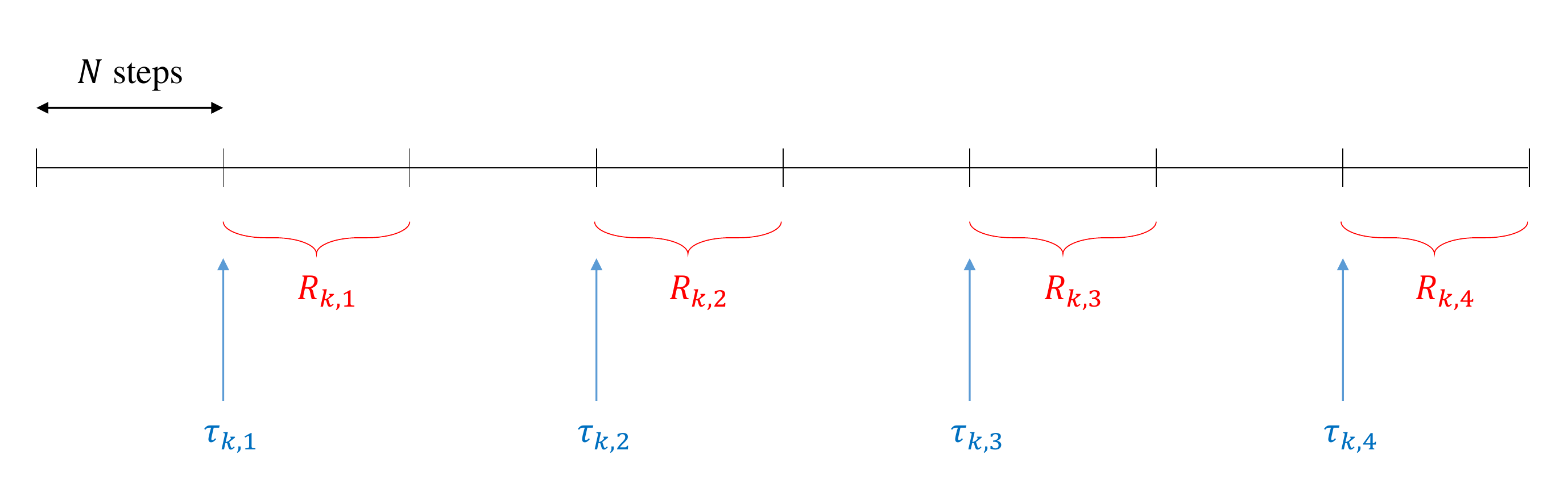} 
\caption{An illustration for the data collection process of \algExp. In the figure, we show how the algorithm collects $4$ trajectories of length $N$ (the red intervals) in an epoch with length $B=8N$.}
\label{fig: data collection}
\end{figure}

\pref{fig: data collection} is an illustration of the data collection scheme of \algExp.
Below, we first provide the proof for \pref{lem:uniformly mixing lemma}.

 \begin{proof}[Proof of \pref{lem:uniformly mixing lemma}]
Denote $\E[\cdot|x_1=x, a_t\sim \pi(\cdot|x_t), x_{t+1}\sim p(\cdot|x_t,a_t) \text{ for all $t\geq 1$}]$ by $\E[\cdot|x_1=x, \pi]$.  For any two initial states $u, u'\in\calX$, let $\delta_{u}$ and $\delta_{u'}$ be the Dirac measures with respect to $u$ and $u'$. Writing $\PP^\pi$ as $\PP$ for simplicity, we have for any time $t$, 
     \begin{align}
         &\left\vert \E\left[ r(x_t,a_t) ~|~ x_1=u, \pi \right] - \E\left[ r(x_t,a_t) ~|~ x_1=u', \pi \right]\right\vert \nonumber \\
         &=\left\vert \int_\calX \sum_{a\in\calA} \pi(a|x)r(x,a) \de \PP^{t-1} \delta_{u}(x)  - \int_\calX \sum_{a\in\calA} \pi(a|x)r(x,a) \de \PP^{t-1} \delta_{u'}(x) \right\vert \nonumber  \\
         &\leq 2\|\PP^{t-1} \delta_{u} - \PP^{t-1} \delta_{u'}\|_\TV \nonumber \\
         &\leq 2e^{-\frac{t-1}{\tmix}}\|  \delta_{u} -   \delta_{u'}\|_\TV  \tag{\pref{ass: uniform mixing}} \\
         &\leq 2e^{-\frac{t-1}{\tmix}}.   \label{eq: ergodic useful}
     \end{align}
Therefore, by the definition of $J^{\pi}(u)$ in \pref{sec: preliminaries}, we have
    \begin{align*}
        |J^\pi(u) - J^\pi(u')| \leq \lim_{T\rightarrow \infty} \frac{2}{T}\sum_{t=1}^T e^{-\frac{t-1}{\tmix}} = 0,
    \end{align*}
   proving that $J^{\pi}(u)$ is a fixed value independent of the initial state $u$ and can thus be denoted as $J^{\pi}$.

   Next, define the following two quantities: 
\begin{equation}\label{eq: v and q pi}
\begin{split}
     v_T^{\pi}(x) &= \E\left[ \sum_{t=1}^T \left(r(x_t,a_t) - J^{\pi}\right) ~\Big|~ x_1=x, \pi \right], \\
     q_T^{\pi}(x,a) &= \E\left[ \sum_{t=1}^T \left(r(x_t,a_t) - J^{\pi}\right) ~\Big|~ (x_1, a_1)=(x,a),  x_{t}\sim p(\cdot|x_{t-1},a_{t-1}),  a_t\sim \pi(\cdot|x_t) \text{\ for\ }t\geq 2 \right].
\end{split}
\end{equation}
We will show that $v^{\pi}(x)\triangleq \lim_{T\rightarrow \infty} v^{\pi}_T(x)$ and $q^{\pi}(x,a)\triangleq \lim_{T\rightarrow \infty} q^{\pi}_T(x,a)$ satisfy the conditions stated in \pref{lem:uniformly mixing lemma}. 
First we argue that they do exist.
Note that $J^{\pi}$ can be written as $\int_{\calX} \sum_{a} r(x,a) \pi(a|x) \de\nu^{\pi}(x)$ where $\nu^{\pi}$ is the stationary distribution under $\pi$.
Therefore, for any $T$, we have
\begin{align}
& \left\vert\E\left[ r(x_{T+1},a_{T+1}) - J^{\pi}  ~\Big|~ x_1=x, \pi \right]\right\vert \notag \\
& = \left\vert \int_\calX \sum_{a\in\calA} \pi(a|x')r(x',a) \de \PP^{T} \delta_{x}(x')  - \int_\calX \sum_{a\in\calA} \pi(a|x)r(x,a) \de \nu^{\pi}(x) \right\vert \notag \\
& \leq 2\|\PP^{T} \delta_{x} - \nu^\pi \|_\TV   \notag\\
& =  2\|\PP^{T} \delta_{x} - \PP^{T} \nu^\pi \|_\TV   \tag{by the definition of $\nu^\pi$}\\
& \leq 2e^{-\frac{T}{\tmix}}\|\delta_{x} - \nu^\pi \|_\TV \tag{by \pref{ass: uniform mixing}} \\
& \leq 2e^{-\frac{T}{\tmix}}, \label{eq:v_increment}
\end{align}
and thus
\[
|v_T^{\pi}(x)-v_{T+1}^{\pi}(x)|  = \left\vert\E\left[ r(x_{T+1},a_{T+1}) - J^{\pi}  ~\Big|~ x_1=x, \pi \right]\right\vert
\leq 2e^{-\frac{T}{\tmix}},
\]
which goes to zero and implies that $v^{\pi}(x) = \lim_{T\rightarrow \infty} v^{\pi}_T(x)$ exists.
On the other hand, by the definition we have 
     \begin{align*}
          q^{\pi}_T(x,a) = r(x,a) - J^{\pi} + \E_{x'\sim p(\cdot|x,a)} v^{\pi}_{T-1}(x'),
     \end{align*}
     and taking the limit on both sides shows that $q^{\pi}(x,a)= \lim_{T\rightarrow \infty} q^{\pi}_T(x,a)$ exists and satisfies the Bellman equation in the lemma statement:
     \begin{align*}
          q^{\pi}(x,a) = r(x,a) - J^{\pi} + \E_{x'\sim p(\cdot|x,a)} v^{\pi}(x').
     \end{align*} 
     Finally, \pref{eq:v_increment} also shows that 
     \[
     |v^{\pi}_T(x)|  \leq 2\sum_{t=1}^T e^{-\frac{t-1}{\tmix}} \leq  \frac{2}{1-e^{-\frac{1}{\tmix}}} \leq \frac{2}{1-\left(1-\frac{1}{2\tmix} \right)}= 4\tmix, \tag{using $e^{-x}\leq 1-\frac{1}{2}x$ for $x\in[0, 1]$ and $\tmix \geq 1$}
     \]
     and thus the range of $v^{\pi}$ is $[-4\tmix, 4\tmix]$ while the range of $q^{\pi}$ is $[-6\tmix, 6\tmix]$ since $|q^{\pi}(x,a)|\leq |r(x,a)| + |J^{\pi}|+\sup_{x'}|v^{\pi}(x')|\leq 2+4\tmix\leq 6\tmix$. 
     The last statement $\int_{\calX} v^{\pi}(x)\de\nu^{\pi}(x) = 0$ in the lemma is also clear since
$\int_{\calX} v_T^{\pi}(x)\de\nu^{\pi}(x) = 0$ for all $T$ by the equality $J^{\pi} = \int_{\calX} \sum_{a} r(x,a) \pi(a|x) \de\nu^{\pi}(x)$ and the fact that $x_1, \ldots, x_T$ all have marginal distribution $\nu^{\pi}$ when $x_1 =x$ is drawn from $\nu^{\pi}$.
\end{proof}

In \pref{sec: mdpexp2}, we mention that \pref{ass: linear policy value} is weaker than \pref{assump: Linear MDP} when \pref{ass: uniform mixing} holds. Below we provide a proof for this statement.

\begin{lemma}
\label{lemma:linear MDP implies linear value}
    Under \pref{ass: uniform mixing}, \pref{assump: Linear MDP} implies \pref{ass: linear policy value}. 
\end{lemma}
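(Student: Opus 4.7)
The plan is to mimic the derivation of \pref{lem: w* existence} but applied to every policy $\pi$ using the policy-specific Bellman equation from \pref{lem:uniformly mixing lemma} instead of the optimality equation. Under \pref{ass: uniform mixing}, \pref{lem:uniformly mixing lemma} gives us, for every policy $\pi$, a bounded bias function $v^\pi$ with $|v^\pi(x)| \le 4\tmix$ satisfying
\begin{equation*}
q^\pi(x,a) = r(x,a) - J^\pi + \E_{x' \sim p(\cdot|x,a)}[v^\pi(x')].
\end{equation*}
Under \pref{assump: Linear MDP}, the right-hand side is linear in $\Phi(x,a)$: writing $r(x,a) = \Phi(x,a)^\top \THE$, $p(\cdot|x,a) = \Phi(x,a)^\top \MU(\cdot)$, and using that the first coordinate of $\Phi(x,a)$ equals $1$ so that $J^\pi = J^\pi \Phi(x,a)^\top \e_1$, we obtain
\begin{equation*}
q^\pi(x,a) = \Phi(x,a)^\top \left( \THE - J^\pi \e_1 + \int_\calX v^\pi(x')\, \de\MU(x') \right).
\end{equation*}
This identifies the candidate weight vector $w^\pi := \THE - J^\pi \e_1 + \int_\calX v^\pi(x')\, \de\MU(x')$.

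It remains to verify the norm bound $\|w^\pi\| \le 6\tmix \sqrt{d}$. By triangle inequality, \pref{assump: Linear MDP} gives $\|\THE\| \le \sqrt{d}$ and $\|\MU(\calX)\| \le \sqrt{d}$, while $|J^\pi| \le 1$ since rewards are in $[-1,1]$, and $\sup_{x'}|v^\pi(x')| \le 4\tmix$ by \pref{lem:uniformly mixing lemma}. Hence
\begin{equation*}
\|w^\pi\| \le \sqrt{d} + 1 + 4\tmix \cdot \sqrt{d} \le 6\tmix \sqrt{d},
\end{equation*}
using $\tmix \ge 1$ and $d \ge 1$. The conditions $\|\Phi(x,a)\|\le \sqrt{2}$ and first coordinate fixed to $1$ required by \pref{ass: linear policy value} are simply the scaling conventions already in force under \pref{assump: Linear MDP}, so no further work is needed there.

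There is no real obstacle in this proof; the only subtle point is ensuring we use the policy-specific bias bound $4\tmix$ supplied by \pref{lem:uniformly mixing lemma} (which is precisely why \pref{ass: uniform mixing} is assumed) rather than the unbounded bias one could in general obtain from \pref{assump: bellman} alone, and remembering to express the constant reward offset $J^\pi$ as a linear functional of $\Phi$ via the constant coordinate $\e_1$.
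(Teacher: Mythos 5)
Your proof is correct and is essentially identical to the paper's: both invoke the policy-specific Bellman equation from \pref{lem:uniformly mixing lemma}, set $w^\pi = \THE - J^\pi\e_1 + \int_\calX v^\pi(x')\,\de\MU(x')$, and bound $\|w^\pi\| \le \sqrt{d} + 1 + 4\tmix\sqrt{d} \le 6\tmix\sqrt{d}$. No differences worth noting.
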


\begin{proof}%[Proof of \pref{lemma:linear MDP implies linear value}]
     Since \pref{ass: uniform mixing} holds, by \pref{lem:uniformly mixing lemma}, we have 
     \begin{align*}
         q^{\pi}(x,a) 
         &= r(x,a) - J^{\pi} + \E_{x'\sim p(\cdot|x,a)} v^{\pi}(x') \\
         &= \Phi(x,a)^\top \THE - J^{\pi} \Phi(x,a)^\top \e_1 + \Phi(x,a)^\top \int_{\calX} v^{\pi}(x') \de\MU(x') \tag{\pref{assump: Linear MDP}}\\
         &= \Phi(x,a)^\top \left(\THE - J^{\pi}\e_1 + \int_{\calX} v^{\pi}(x') \de\MU(x')\right). 
     \end{align*}
     Taking $w^{\pi}$ to be $\THE - J^{\pi}\e_1 + \int_{\calX} v^{\pi}(x') \de\MU(x')$ and noting that 
     $\|w^\pi\| \leq \|\THE\| + 1 + (\max_{x\in\calX} v^\pi(x)) \|\MU(\calX)\| \leq \sqrt{d} +1 + 4\tmix\sqrt{d}\leq 6\tmix\sqrt{d}$ finishes the proof. 
\end{proof}

%\begin{definition}
%    \begin{align*}
%        Q_N^\pi(s,a) \triangleq 
%        \E\left[ \sum_{t=1}^N r(s_t,a_t) ~\bigg\vert~ s_1=s, a_1=a, a_t\sim \pi(\cdot|s_t) \right]
%    \end{align*}
%\end{definition}

\subsection{Proof of \pref{thm: MDPEXP2 bound}}
To prove \pref{thm: MDPEXP2 bound}, we first show a couple of useful lemmas.

\begin{lemma}
     \label{lem: closeness between Rkm and q}
     Let $k$ be any number in $\{1, 2, \ldots, \frac{T}{B}\}$ and $m$ be any number in $\{1,2,\ldots, \frac{B}{2N}\}$. Let $\E[\cdot~|~ \tau_{k,m}]$ denote the expectation conditioned on $(x_{\tau_{k,m}}, a_{\tau_{k,m}})$ and all history before time $\tau_{k,m}$ (recall the definitions of  $\tau_{k,m}$ and $R_{k,m}$ in \pref{alg: mdpexp2}). Then we have 
     \begin{align*}
          \left|\E[R_{k,m}~|~\tau_{k,m}] - \left(q^{\pi_k}(x_{\tau_{k,m}}, a_{\tau_{k,m}}) + NJ^{\pi_k}\right)\right| \leq \frac{1}{T^7}. 
     \end{align*}
\end{lemma}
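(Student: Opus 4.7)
The plan is to rewrite $\E[R_{k,m}\mid \tau_{k,m}]$ as a finite-horizon analogue of $q^{\pi_k}$ and then bound the approximation error using the mixing rate.

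First, since $\pi_k$ is fixed throughout epoch $k$ and the window preceding $\tau_{k,m}$ ensures the trajectory of length $N$ starting at $\tau_{k,m}$ evolves under $\pi_k$, conditioning on $(x_{\tau_{k,m}}, a_{\tau_{k,m}})$ I can identify the expected sum of rewards with the finite-horizon quantity $q^{\pi_k}_N(x_{\tau_{k,m}}, a_{\tau_{k,m}}) + N J^{\pi_k}$, where $q^{\pi}_T$ is the partial-sum object defined in \pref{eq: v and q pi}. Explicitly,
\begin{align*}
\E[R_{k,m}\mid \tau_{k,m}]
&= \E\!\left[\sum_{t=\tau_{k,m}}^{\tau_{k,m}+N-1} r(x_t,a_t)\,\Big|\,(x_{\tau_{k,m}},a_{\tau_{k,m}}),\pi_k\right] \\
&= q^{\pi_k}_N(x_{\tau_{k,m}}, a_{\tau_{k,m}}) + N J^{\pi_k}.
\end{align*}

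Next, it suffices to show that $|q^{\pi_k}_N(x,a) - q^{\pi_k}(x,a)| \leq 1/T^7$ uniformly in $(x,a)$. Using the one-step Bellman-type identity $q^{\pi}_T(x,a) = r(x,a) - J^\pi + \E_{x'\sim p(\cdot|x,a)}[v^{\pi}_{T-1}(x')]$ together with its limit $q^{\pi}(x,a) = r(x,a) - J^\pi + \E_{x'\sim p(\cdot|x,a)}[v^{\pi}(x')]$, the task reduces to controlling $|v^{\pi}_{N-1}(x') - v^{\pi}(x')|$. Re-using the bound \pref{eq:v_increment} from the proof of \pref{lem:uniformly mixing lemma}, namely $|v^\pi_T(x) - v^\pi_{T+1}(x)| \leq 2 e^{-T/\tmix}$, and summing the geometric tail, I get
\[
|v^{\pi}_{N-1}(x) - v^{\pi}(x)| \leq \sum_{t=N-1}^{\infty} 2e^{-t/\tmix} \leq 4\tmix \, e^{-(N-1)/\tmix}.
\]

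Finally, plugging in $N = 8\tmix \log T$ yields $e^{-(N-1)/\tmix} \leq e \cdot T^{-8}$, so the bound becomes $O(\tmix / T^8) \leq 1/T^7$ for $T$ large enough relative to $\tmix$ (and otherwise the claim is trivial since the target $1/T^7$ is not meaningful, or one absorbs a constant factor into $N$). The main (and only nontrivial) step is the exponential mixing bound on $v^{\pi}_{T}$, which is already available from the earlier lemma, so the rest is routine bookkeeping.
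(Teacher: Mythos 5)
Your proof is correct and follows essentially the same route as the paper: identify $\E[R_{k,m}\mid\tau_{k,m}]$ with $q_N^{\pi_k}+NJ^{\pi_k}$, then bound $|q_N^{\pi}-q^{\pi}|$ via the geometric mixing tail from \pref{eq:v_increment} and plug in $N=8\tmix\log T$. The only cosmetic difference is that you detour through $|v^{\pi}_{N-1}-v^{\pi}|$ while the paper bounds the tail $\sum_{t=N+1}^{\infty}|\E[r(x_t,a_t)]-J^{\pi}|$ directly; both yield the same $O(\tmix e^{-N/\tmix})$ bound, and your slightly weaker constant (an extra factor of $e$) is absorbed by the same without-loss-of-generality assumption that $\tmix$ is small relative to $T$.
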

\begin{proof}
    Recalling the definition of $q_N^{\pi_k}$ in \pref{eq: v and q pi}, we have
    \begin{align}
    &\E[R_{k,m}~|~\tau_{k,m}] \nonumber\\
    &= \E\left[\sum_{t=1}^{N} r(x_t,a_t)~\Big|~ (x_1, a_1) = (x_{\tau_{k,m}}, a_{\tau_{k,m}}), \ \  x_{t}\sim p(\cdot|x_{t-1}, a_{t-1}), \ \  a_t\sim \pi_k(\cdot|x_t) \text{\ for\ }t\geq 2 \right]   \nonumber  \\
    &= q_N^{\pi_k}(x_{\tau_{k,m}}, a_{\tau_{k,m}}) + NJ^{\pi_k}. \label{eq: combine q and qN}
    \end{align}
    Then we bound the difference between $q_N^{\pi}(x,a)$ and $q^{\pi}(x,a)$ (which is $\lim_{N\rightarrow\infty}q_N^{\pi}(x,a)$ as shown in the proof of \pref{lem:uniformly mixing lemma}) for any $\pi, x,a$:
    \begin{align*}
         &|q_N^{\pi}(x,a) - q^{\pi}(x,a)|  \\
         &= \left|\E\left[\sum_{t=N+1}^\infty (r(x_t,a_t)-J^{\pi})~\Big|~ (x_1, a_1)=(x,a), x_{t}\sim p(\cdot|x_{t-1}, a_{t-1}), \ \  a_t\sim \pi_k(\cdot|x_t) \text{\ for\ }t\geq 2 \right] \right| \\
         &\leq 2\sum_{t=N+1}^{\infty} e^{-\frac{t-1}{\tmix}} \leq \frac{2e^{-\frac{N}{\tmix}}}{1-e^{-\frac{1}{\tmix}}} \leq 4\tmix e^{-\frac{N}{\tmix}}.    \tag{\pref{eq:v_increment}}
    \end{align*}
    Recall that $N=8\tmix \log T$, and without loss of generality we assum $\tmix\leq T/4$ (otherwise the regret bound is vacuous). Thus we can bound the last expression by $\frac{4\tmix}{T^8}\leq \frac{1}{T^7}$. Combining this with \pref{eq: combine q and qN} finishes the proof. 
\end{proof}

%\begin{lemma}
%    \label{lemma Q pi N}
%    For any $\pi$, $Q^{\pi}_N(s,a)$ can be approximated by $\theta^{\pi}_N \cdot \psi(s,a)$, for some $\theta^{\pi}_N$ with $\|\theta^{\pi}_N\|_\infty=O(N)$. 
%\end{lemma}

%\begin{proof}
%    \begin{align*}
 %       Q^{\pi}_N(s,a) 
%        &= \E\left[ \sum_{t=1}^N \left(J^{\pi} + q^{\pi}(s_t,a_t) - v^{\pi}(s_{t+1})\right) ~\bigg\vert~ s_1=s, a_1=a, a_t\sim \pi(\cdot|s_t) \right] \\
%        &= NJ^{\pi} + q^{\pi}(s,a) - \E\left[ v^{\pi}(s_{N+1}) ~\bigg\vert~ s_1=s, a_1=a, a_t\sim \pi(\cdot|s_t)  \right] \\
%        &\qquad \qquad + \E\left[ \sum_{t=2}^N \left( q^{\pi}(s_t,a_t)-v^{\pi}(s_t) \right) ~\bigg\vert~ s_1=s, a_1=a, a_t\sim \pi(\cdot|s_t)  \right]\\
%        &= NJ^{\pi} + q^{\pi}(s,a) - \E\left[ v^{\pi}(s_{N+1}) ~\bigg\vert~ s_1=s, a_1=a, a_t\sim \pi(\cdot|s_t)  \right]  \\
%        &\approx NJ^{\pi} + q^{\pi}(s,a). \tag{``$\approx$'' means that the equality holds with negligible error, which is of order, e.g., $\frac{1}{T^2}$}
%    \end{align*}
%    The last approximated equality is because at time $N+1$, the state distribution almost becomes $\mu^{\pi}$ by the mixing time assumption. Also, $\E[\sum_{s}\mu^{\pi}(s)v^{\pi}(s)]=0$ as assumed in Definition~\ref{def: bellman}.  
    
%    Combined the above with Assumption~\ref{assump: linear realizability}, we get
%    \begin{align*}
%        Q_N^{\pi}(s,a) \approx NJ^{\pi} + \theta^{\pi}\cdot \psi(s,a) = \theta^{\pi}_N \cdot \psi(s,a) 
%    \end{align*}
%    for all $s,a$, because in Assumption~\ref{assump: linear realizability} we assume $\psi(s,a)$ has a constant feature.  
%\end{proof}

\begin{lemma}
\label{lemma: wk and theta}
Let $\E_k[\cdot]$ denote the expectation conditioned on all history before epoch $k$. Then
\begin{align*}
    \left\|\E_k[w_k] - \left(w^{\pi_k} + NJ^{\pi_k}\e_1 \right)\right\| \leq \frac{1}{T^2}.  
\end{align*}
\end{lemma}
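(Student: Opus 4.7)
The plan is to decompose $\E_k[w_k] - \theta_k$ (where $\theta_k := w^{\pi_k} + NJ^{\pi_k}\e_1$) based on whether the event $\calE_k := \{\lambda_{\min}(M_k)\ge B\sigma/(24N)\}$ holds. Since $w_k = M_k^{-1}Y_k\,\mathbbm{1}\{\calE_k\}$ with $Y_k := \sum_m \Phi(x_{\tau_{k,m}},a_{\tau_{k,m}})R_{k,m}$,
\[
\E_k[w_k] - \theta_k = \E_k\!\left[M_k^{-1}(Y_k - M_k\theta_k)\,\mathbbm{1}\{\calE_k\}\right] - \theta_k\,\Pr(\calE_k^c \mid \calF_k),
\]
and I would bound each piece by $\tfrac{1}{2T^2}$. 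For the good-event term, conditioning on the history $\calF_{k,m}^-$ just before $a_{\tau_{k,m}}$ is drawn gives $a_{\tau_{k,m}}\sim\pi_k(\cdot|x_{\tau_{k,m}})$ there, so combining \pref{lem: closeness between Rkm and q} with $q^{\pi_k}(x,a)+NJ^{\pi_k} = \Phi(x,a)^\top\theta_k$ (from \pref{ass: linear policy value} and $[\Phi]_1\equiv 1$) yields $\E[\Phi(x_{\tau_{k,m}},a_{\tau_{k,m}})R_{k,m}\mid\calF_{k,m}^-] = X_m\theta_k + O(1/T^7)$, where $X_m := \sum_a \pi_k(a|x_{\tau_{k,m}})\Phi(x_{\tau_{k,m}},a)\Phi(x_{\tau_{k,m}},a)^\top$ is the $m$-th summand of $M_k$. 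Summing over $m$ writes $Y_k - M_k\theta_k$ as the sum of a bias term of norm $O((B/N)/T^7)$ and a martingale-difference sum $\sum_m U_m$. On $\calE_k$ one has $\|M_k^{-1}\|\le 24N/(B\sigma)$, so the bias contribution is $O(1/(\sigma T^7))$; for the martingale part, the $N=8\tmix\log T$-step gap between trajectories combined with \pref{ass: uniform mixing} implies that conditioning on all $\{x_{\tau_{k,m'}}\}$ perturbs the law of $a_{\tau_{k,m}}$ from $\pi_k(\cdot|x_{\tau_{k,m}})$ by at most $e^{-N/\tmix}=T^{-8}$ in total variation, which is enough to bound the martingale contribution by $O(\log T/(\sigma T^6))$.

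For the bad-event term, the plan is to apply Tropp's matrix Freedman inequality for PSD random matrices to $M_k=\sum_m X_m$. Using the same mixing argument, $\E[X_m\mid\calF_{k,m-1}^-]\succeq \Sigma_{\pi_k} - T^{-8}I$ where $\Sigma_{\pi_k}:=\int\sum_a\pi_k(a|x)\Phi(x,a)\Phi(x,a)^\top\,d\nu^{\pi_k}(x)\succeq\sigma I$ by \pref{ass: assump A4}. Since $B/(2N)=16\log(dT)/\sigma$, the conditional mean of $\lambda_{\min}(M_k)$ is at least $16\log(dT)(1-o(1))$---an order of magnitude above the threshold $(4/3)\log(dT)$---and because $\lambda_{\max}(X_m)\le 2$, matrix Freedman yields $\Pr(\calE_k^c\mid\calF_k)\le d\,(dT)^{-c}$ for a constant $c>3$ after optimizing the relative deviation. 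Combined with $\|\theta_k\|\le 6\tmix\sqrt{d}+N=\otil(\tmix\sqrt{d})$ and the WLOG assumptions $\tmix\sqrt{d}=\otil(\sqrt{T})$ and $\sigma\ge T^{-1}$ (from the non-vacuousness of \pref{thm: MDPEXP2 bound}), this term is at most $1/(2T^2)$.

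The main obstacle will be the coupling between $M_k^{-1}$ and the martingale differences $U_m$ in the good-event term: $M_k^{-1}$ depends on $x_{\tau_{k,m'}}$ for $m'>m$, which are in the ``future'' of $U_m$, so one cannot pull $M_k^{-1}$ out of the conditional expectation by naive iterated conditioning. Using the mixing assumption to argue that this future-past correlation is polynomially small in $T$ is the one technical point requiring care; the matrix Freedman step, in contrast, is standard once the deliberate choice of $B$ has inflated the expected $\lambda_{\min}(M_k)$ an order of magnitude beyond the threshold. Summing the three controlled contributions then delivers $\|\E_k[w_k]-\theta_k\|\le 1/T^2$.
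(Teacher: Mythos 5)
Your proposal is correct, and its overall architecture coincides with the paper's proof: decompose on the event $\calE_k=\{\lambda_{\min}(M_k)\geq \tfrac{B\sigma}{24N}\}$, use \pref{lem: closeness between Rkm and q} together with \pref{ass: linear policy value} and $\Phi(x,a)^\top\e_1=1$ to turn $\E[\Phi(x_{\tau_{k,m}},a_{\tau_{k,m}})R_{k,m}]$ into $X_m(w^{\pi_k}+NJ^{\pi_k}\e_1)$ plus an $O(T^{-7})$ bias, exploit the cancellation $M_k^{-1}\sum_m X_m=I$, and kill the complement event by showing $\Pr_k(\calE_k^c)\lesssim T^{-3}$ against $\|w^{\pi_k}+NJ^{\pi_k}\e_1\|\leq 6\tmix\sqrt{d}+N$. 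You differ in two sub-steps, both defensibly. First, the paper simply replaces $R_{k,m}$ and then $a_{\tau_{k,m}}$ by their conditional expectations given $(x_{\tau_{k,m}},a_{\tau_{k,m}})$ and $x_{\tau_{k,m}}$ \emph{inside} $\E_k[I_kM_k^{-1}(\cdots)]$, which tacitly treats $I_kM_k^{-1}$ as independent of trajectory $m$ even though $M_k$ depends on the later states $x_{\tau_{k,m'}}$, $m'>m$; you isolate this future--past coupling as a separate martingale term and control it by the $N$-step gap via \pref{ass: uniform mixing}, paying only $O\bigl((N+\tmix\sqrt{d})\sigma^{-1}T^{-8}\bigr)$. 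This is the more careful treatment and is exactly the error analysis needed to make the paper's chain of equalities rigorous. Second, for $\Pr_k(\calE_k^c)$ the paper (in \pref{lemma: least eigen}) couples to an ``imaginary world'' with stationary resets of each $x_{\tau_{k,m}}$ and applies a matrix Chernoff bound for independent matrices, whereas you apply a matrix Freedman/Chernoff bound for adapted sequences directly, using that $\E[X_m\mid \text{past}]\succeq(\sigma-O(T^{-8}))I$ almost surely; both routes rest on the same mixing estimate $e^{-N/\tmix}=T^{-8}$ and the same order-of-magnitude slack between the expected $\lambda_{\min}$ ($16\log(dT)$) and the threshold ($\tfrac{4}{3}\log(dT)$), so they are interchangeable here.
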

\begin{proof}
     %In this proof, we will see the purpose of having ``windows'' in which the algorithm rests and does not collect data (see \pref{alg: mdpexp2} and the illustration in \pref{fig: data collection}). 
     Let $I_k = \one[\lambda_{\min}(M_k)\geq  \frac{B\sigma}{24N}]$. We proceed as follows:
    \begin{align*}
        \E_k[w_k] 
        &= \E_k\left[ I_k M_k^{-1} \sum_{m=1}^{\frac{B}{2N}} \Phi(x_{\tau_{k,m}},a_{\tau_{k,m}}) R_{k,m}\right] \tag{definition of $w_k$} \\
        &= \E_k\left[ I_k M_k^{-1} \sum_{m=1}^{\frac{B}{2N}} \Phi(x_{\tau_{k,m}},a_{\tau_{k,m}})\E_k[R_{k,m}|x_{\tau_{k,m}}, a_{\tau_{k,m}}] \right]    \tag{taking expectation for $R_{k,m}$ conditioned on $(x_{\tau_{k,m}}, a_{\tau_{k,m}})$}\\
         &= \E_k\left[ I_k M_k^{-1} \sum_{m=1}^{\frac{B}{2N}}\Phi(x_{\tau_{k,m}},a_{\tau_{k,m}}) \left(q^{\pi_k}(x_{\tau_{k,m}},a_{\tau_{k,m}}) + NJ^{\pi_k}\right)  \right] \\
        &\qquad \qquad  + \E_k\left[ I_k M_k^{-1} \sum_{m=1}^{\frac{B}{2N}} \Phi(x_{\tau_{k,m}}, a_{\tau_{k,m}})\epsilon_k(x_{\tau_{k,m}}, a_{\tau_{k,m}}) \right]  \tag{define $\epsilon_k(x_{\tau_{k,m}}, a_{\tau_{k,m}}) = \E_k[R_{k,m}|x_{\tau_{k,m}}, a_{\tau_{k,m}}] - \left(q^{\pi_k}(x_{\tau_{k,m}},a_{\tau_{k,m}}) + NJ^{\pi_k}\right)$}  \\
        &= \E_k\left[ I_k M_k^{-1} \sum_{m=1}^{\frac{B}{2N}}\Phi(x_{\tau_{k,m}},a_{\tau_{k,m}})\Phi(x_{\tau_{k,m}},a_{\tau_{k,m}})^\top \left(w^{\pi_k} + NJ^{\pi_k}\e_1\right)  \right] \\
        &\qquad \qquad  + \E_k\left[I_k M_k^{-1} \sum_{m=1}^{\frac{B}{2N}} \Phi(x_{\tau_{k,m}}, a_{\tau_{k,m}})\epsilon_k(x_{\tau_{k,m}}, a_{\tau_{k,m}}) \right]   \tag{by \pref{ass: linear policy value}} \\
        &= \E_k\left[ I_k M_k^{-1} \sum_{m=1}^{\frac{B}{2N}} \sum_{a} \pi_k(a|x_{\tau_{k,m}}) \Phi(x_{\tau_{k,m}},a)\Phi(x_{\tau_{k,m}},a)^\top \left(w^{\pi_k} + NJ^{\pi_k}\e_1\right)  \right] \\
        &\qquad \qquad  +\E_k\left[I_k M_k^{-1} \sum_{m=1}^{\frac{B}{2N}} \sum_{a}\pi_k(a|x_{\tau_{k,m}})\Phi(x_{\tau_{k,m}}, a)\epsilon_k(x_{\tau_{k,m}}, a) \right]  \tag{taking expectation for $a_{\tau_{k,m}}$ conditioned on $x_{\tau_{k,m}}$} \\
        &= \E_k\left[ I_k  \left(w^{\pi_k} + NJ^{\pi_k}\e_1\right)  \right] + \bm{\epsilon} \tag{define $\bm{\epsilon}=\E_k\left[I_k M_k^{-1} \sum_{m=1}^{\frac{B}{2N}} \sum_a\pi_k(a|x_{\tau_{k,m}})\Phi(x_{\tau_{k,m}}, a)\epsilon_k(x_{\tau_{k,m}}, a) \right]$} \\
        %&= \E_k\left[ \one[M_k\text{\ is invertible}] \left(w^{\pi_k} + NJ^{\pi_k}\e_1\right) + \one[M_k\text{\ is uninvertible}]M_k^{\dagger}M_k \left(w^{\pi_k} + NJ^{\pi_k}\e_1\right) \right] + \bm{\epsilon} \\
        &= w^{\pi_k} + NJ^{\pi_k}\e_1 - \E_k\left[(1-I_k)(w^{\pi_k}+NJ^{\pi_k}\e_1)\right]+ \bm{\epsilon}. 
    \end{align*}
    By \pref{lem: closeness between Rkm and q}, we have $|\epsilon_k(x_{\tau_{k,m}}, a_{\tau_{k,m}})| \leq 1/T^7$ and thus 
    \begin{align*}
        \|\bm{\epsilon}\| 
        &\leq \E_k \left[\sum_{m=1}^{\frac{B}{2N}}\left\| I_k  M_k^{-1} \sum_{a}\pi_k(a|x_{\tau_{k,m}})\Phi(x_{\tau_{k,m}}, a)\epsilon_k(x_{\tau_{k,m}}, a_{\tau_{k,m}})\right\|\right]\\
&= \E_k \left[\sum_{m=1}^{\frac{B}{2N}}\left\| I_k  M_k^{-1} \sum_{a}\pi_k(a|x_{\tau_{k,m}})\Phi(x_{\tau_{k,m}}, a)\Phi(x_{\tau_{k,m}}, a)^\top \e_1 \epsilon_k(x_{\tau_{k,m}}, a_{\tau_{k,m}})\right\|\right]\\
        &= \E_k\left[\sum_{m=1}^{\frac{B}{2N}}\left\| I_k  \e_1 \epsilon_k(x_{\tau_{k,m}}, a_{\tau_{k,m}}) \right\|\right]
        \leq \E_k\left[\sum_{m=1}^{\frac{B}{2N}}\frac{1}{T^7}\right]\leq \frac{1}{T^6}. 
    \end{align*}
    On the other hand, we also have
  \begin{align*}
\left\|\E_k\left[(1-I_k)(w^{\pi_k}+NJ^{\pi_k}\e_1)\right]\right\| \leq \E_k\left[(1-I_k)\right] (6\tmix\sqrt{d}+N)
\leq \frac{6\tmix \sqrt{d} + N}{T^3}.
\end{align*}  
where the last step is by \pref{lemma: least eigen} (stated after this proof).
    Finally, combining everything proves
    \begin{align*}
        \left\|\E_k[w_k] - (w^{\pi_k} + NJ^{\pi_k}\e_1)\right\|\leq \frac{1}{T^6} + \frac{6\tmix \sqrt{d} + N}{T^3} \leq \frac{1}{T^2},    
    \end{align*}
    where we assume $6\tmix \sqrt{d} + N = 6\tmix \sqrt{d} + 8\tmix\log T$ is at most $\frac{T}{2}$ (otherwise the regret bound is vacuous). 
\end{proof}

\begin{lemma}
\label{lemma: least eigen}
For any $k \in \{1, \ldots, T/B\}$, conditioning on the history before epoch $k$, we have with probability at least $1-\frac{1}{T^3}$, 
$\lambda_{\min}(M_k)\geq  \frac{B\sigma}{24N}$. 
\end{lemma}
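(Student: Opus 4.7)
Fix the history before epoch $k$ so that $\pi_k$ is deterministic, and abbreviate $n = B/(2N) = 16\log(dT)/\sigma$, so the target is $\lambda_{\min}(M_k) \geq n\sigma/12$ with probability at least $1-1/T^3$. My plan is a coupling argument followed by a matrix Chernoff bound: couple each sampled state $x_{\tau_{k,m}}$ with an iid draw $\tilde x_m \sim \nu^{\pi_k}$ using the mixing gap between trajectories, then apply a concentration bound on the resulting iid PSD matrix sum with the minimum-eigenvalue floor supplied by \pref{ass: assump A4}.

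For the coupling step, let $\mathcal{G}_m$ be the $\sigma$-algebra generated by everything up to and including time $\tau_{k,m-1}+N-1$ (the end of trajectory $m-1$), with $\mathcal{G}_1$ going through time $(k-1)B$. Since $x_{\tau_{k,m}}$ is obtained from the last state in $\mathcal{G}_m$ by exactly $N+1$ transitions under $\pi_k$, \pref{ass: uniform mixing} together with $N=8\tmix\log T$ yields $\|\PP(x_{\tau_{k,m}}\in\cdot \mid \mathcal{G}_m)-\nu^{\pi_k}\|_\TV \leq e^{-(N+1)/\tmix} \leq 1/T^8$. I then use fresh external randomness independent across $m$ to build a sequential maximal coupling producing $\tilde x_m \sim \nu^{\pi_k}$, jointly independent across $m$, with $\PP(\tilde x_m \neq x_{\tau_{k,m}} \mid \mathcal{G}_m) \leq 1/T^8$. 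A union bound makes the failure probability of the event $E = \bigcap_m\{\tilde x_m = x_{\tau_{k,m}}\}$ at most $n/T^8 \leq 1/T^7$.

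On $E$, $M_k = \sum_{m=1}^n \tilde Y_m$ where $\tilde Y_m := \sum_a \pi_k(a\mid\tilde x_m)\Phi(\tilde x_m,a)\Phi(\tilde x_m,a)^\top$ are iid PSD matrices with $\|\tilde Y_m\|\leq 2$ (since $\|\Phi\|\leq\sqrt{2}$) and expectation $\E\tilde Y_m = \int_\calX \sum_a \pi_k(a|x)\Phi(x,a)\Phi(x,a)^\top\de\nu^{\pi_k}(x) \succeq \sigma I$ by \pref{ass: assump A4}. Tropp's matrix Chernoff inequality applied with deviation parameter $\delta=11/12$ then gives
\[
\PP\!\left(\lambda_{\min}\!\left(\sum_{m=1}^n \tilde Y_m\right)\leq n\sigma/12\right) \leq d \cdot \phi(11/12)^{n\sigma/2}, \qquad \phi(\delta) = \frac{e^{-\delta}}{(1-\delta)^{1-\delta}};
\]
plugging in $n\sigma=16\log(dT)$ reduces this to $d\cdot(dT)^{-c}$ for an absolute constant $c>4$. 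A final union bound over the coupling failure and this Chernoff tail delivers the desired $1/T^3$ bound.

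The main technical subtlety lies in the coupling construction: the conditional distribution of $x_{\tau_{k,m}}$ given $\mathcal{G}_m$ depends on the whole past, so the $\tilde x_m$ must be built sequentially via maximal couplings using fresh randomness, yet I need them to be jointly iid draws from $\nu^{\pi_k}$ in order to feed a clean iid matrix Chernoff bound in the second step. The $N$-step idle gaps between trajectories that \algExp deliberately inserts are precisely what drives the per-step coupling error down to $1/T^8$, which is what lets the union-bound loss be negligible relative to the Chernoff tail; without these gaps the argument would collapse to a martingale Bernstein bound with a $n\sigma^2$ rate that degrades badly when $\sigma$ is small.
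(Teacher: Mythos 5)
Your proposal is correct and follows essentially the same route as the paper: the paper's ``imaginary world'' argument is precisely a change-of-measure version of your maximal coupling (both rest on the same $e^{-N/\tmix}\le T^{-8}$ total-variation bound between the conditional law of $x_{\tau_{k,m}}$ and $\nu^{\pi_k}$, accumulate the same $\frac{B}{2N}\cdot T^{-8}\le T^{-7}$ error, and then invoke a matrix Chernoff bound with the identical parameters $\delta=\tfrac{11}{12}$, $R=2$, $r=\tfrac{B\sigma}{2N}=16\log(dT)$). The only cosmetic differences are that you phrase the reduction as an explicit sequential coupling rather than a bound on $|\E_k[f(M_k)]-\E_k'[f(M_k)]|$ for indicator $f$, and you use the Tropp form of the Chernoff tail rather than the sub-Gaussian form the paper cites; both yield the stated $1-T^{-3}$ guarantee.
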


\begin{proof}
    We consider a fixed $k$. Notice that since $N$ is larger than $\tmix$, the state distribution at $\tau_{k,m}$ conditioned on all trajectories collected before (which all happen before $\tau_{k,m}-N$) would be close to the stationary distribution $\nu^{\pi_k}$. For the purpose of analysis, we consider an \emph{imaginary world} where all history before epoch $k$ remains the same as the real world, but in epoch $k$, at time $t=\tau_{k,m}$, $\forall m=1,2,\ldots$, the state distribution is reset according to the stationary distribution, i.e., $x_{\tau_{k,m}}\sim \nu^{\pi_k}$; for other rounds, it follows the state transition driven by $\pi_k$, the same as the real world. we denote the expectation (given the history before epoch $k$) in the imaginary world as $\E'_k[\cdot]$. 
     
     Fro simplicity, define $y_m=x_{\tau_{k,m}}$, $z_m=\{a_{\tau_{k,m}}, R_{k,m}\}$ and $m^*=\frac{B}{2N}$. Note that $M_k$ is a function of $\{y_m\}_{m=1}^{m^*}$ and that $(y_{i-1}, z_{i-1})\rightarrow  y_i \rightarrow z_i$ form a Markov chain.      
     Therefore, by writing $M_k=M_k\left(y_1, \ldots, y_{m^*}\right)$, and considering any function $f$ of $M_k$, we have 
     \begin{align*}
         \E_k[f(M_k)] &= \int f\left(M_k\left(y_1, \ldots, y_{m^*}\right)\right) \de q(y_1)\de q(z_1|y_1)  \de q(y_2|y_1,z_1)  \de q(z_2|y_2) \cdots \\
         &\hspace{15em} \de q(y_{m^*}|y_{m^*-1}, z_{m^*-1})  \de q(z_{m^*}|y_{m^*})  
     \end{align*}
     and 
      \begin{align*}
         \E_k'[f(M_k)] = \int f\left(M_k\left(y_1, \ldots, y_{m^*}\right)\right)  \de q'(y_1) \de q(z_1|y_1) \de q'(y_2) \de q(z_2|y_2) \cdots  \de q'(y_{m^*}) \de q(z_{m^*}|y_{m^*})  
     \end{align*}
     where $q$ and $q'$ denote the probability measure in the real and the imaginary worlds respectively (conditioned on the history before epoch $k$). Note that by our construction, in the imaginary world $y_i$ is independent of $(y_1, z_1, \ldots, y_{i-1}, z_{i-1})$, while $z_i|y_i$ follows the same distribution as in the real world. 
     %Then we note that $q'(y_m)$ is the stationary state distribution under $\pi_k$; $q(y_m|y_{m-1}, z_{m-1})$ is the state distribution at time $\tau_{k,m}$ conditioned on events before time $\tau_{k,m}-N$. 
     By the uniform-mixing assumption, we have that 
     \begin{align*}
         \| q'(y_m) - q(y_m|y_{m-1}, z_{m-1}) \|_{\text{TV}}\leq e^{-\frac{N}{\tmix}}\leq \frac{1}{T^8}, 
     \end{align*}
     implying that 
     \begin{align}
         \left| \E_k[f(M_k)] - \E_k'[f(M_k)]\right| \leq \frac{2}{T^8}\times \frac{B}{2N} \times f_{\max}\leq \frac{f_{\max}}{T^7},     \label{eq: max expectation diff}
     \end{align}
     where $f_{\max}$ is the maximum magnitude of $f(\cdot)$. 
     Picking $f(M) = \one\left[\lambda_{\min}(M)\leq \frac{B\sigma}{24N}\right]$ (with $f_{\max} = 1$ clearly), we have shown that
     \[
     {\Pr}_k\left[\lambda_{\min}\left(M_k\right)\leq  \frac{B\sigma }{24N}\right] 
     \leq {\Pr}'_k\left[\lambda_{\min}\left(M_k\right)\leq  \frac{B\sigma }{24N}\right] + \frac{1}{T^7}.
     \]
     %This implies that for any $(i,j)$-entry of $M_k$, $|\E_k[M_k(i,j)] - \E_k'[M_k(i,j)]| \leq \frac{A^*}{T^7}$, where $A^*$ denotes the maximum possible magnitude of $M_k(i,j)$ and can be upper bounded as: 
     %\begin{align*}
     %     A^*&\leq \sup_{i,j}|\e_i^\top M_k \e_j| \\
     %     &=\sup_{i,j}\left|\e_i^\top \sum_{m=1}^{\frac{B}{2N}}\sum_{a}\pi_k(x_{\tau_{k,m}},a)\Phi(x_{\tau_{k,m}},a)\Phi(x_{\tau_{k,m}},a)^\top \e_j\right| \\
     %     &\leq \frac{B}{2N}\times (\sqrt{2})^2 = \frac{B}{N} \leq T. 
     %\end{align*}
     %Therefore, $|\E_k[M_k(i,j)] - \E_k'[M_k(i,j)]| \leq \frac{1}{T^6}$ for any $i,j$. 
    %We use similar arguments as in the proof of \pref{lemma: wk and theta}: define $\E_k'[\cdot]$ as the expectation in the imaginary world where $x_{\tau_{k,m}}$ is reset according to $\nu^{\pi_k}$ (conditioned on all history before epoch $k$).     
    It remains to bound ${\Pr}'_k\left[\lambda_{\min}\left(M_k\right)\leq  \frac{B\sigma }{24N}\right]$.
    Notice that \[\E_k'[M_k]= \frac{B}{2N}\times \int_{\calX} \sum_{a}\pi_k(a|x)\Phi(x,a)\Phi(x,a)^\top \de\nu^{\pi_k}(x) \succeq \frac{B}{2N} \times \sigma I\] by \pref{ass: assump A4}. 
    %As shown in the proof of \pref{lemma: wk and theta}, $\E_k[M_k]$ and $\E_k'[M_k]$ are close in the sense that $|\e_i^\top\E_k[M_k]\e_j - \e_i^\top \E_k'[M_k]\e_j | \leq \frac{\sup_{i,j} \e_i^\top M_k \e_j}{T^6}\leq \frac{\frac{B}{2N}\times 2}{T^6}\leq \frac{1}{T^5}$, implying that $\E_{k}[M_k]\succeq (\frac{B}{2N}\times \sigma - \frac{1}{T^5})I \succeq \frac{B}{3N}\sigma I$ (without loss of generality we assume $\frac{1}{\sigma}\leq T$; otherwise the regret bound is vacuous).     
    Using standard matrix concentration results (specifically, \pref{lemma: matrix concentration} with $\delta=\frac{11}{12}$, $n=\frac{B}{2N}=\frac{16}{\sigma}\log(dT)$, $X_m=\sum_{a}\pi_k(a|x_{\tau_{k,m}})\Phi(x_{\tau_{k,m}}, a)\Phi(x_{\tau_{k,m}}, a)^\top$, $R=2$, and $r = \frac{B\sigma}{2N} = 16\log(dT)$), we get
    \begin{align*}
        {\Pr}'_k\left[\lambda_{\min}\left(M_k\right)\leq \frac{1}{12}\times \frac{B\sigma }{2N}\right] 
        &\leq d\cdot \exp\left(-\frac{121}{144}\times 16\log (dT)\times \frac{1}{4}\right) \\
        &\leq d\cdot \exp\left(-3.3\log(dT)\right)\leq \frac{1}{T^{3.3}}. 
    \end{align*}
    In other words, we have shown
    \begin{align*}
    {\Pr}_k\left[\lambda_{\min}\left(M_k\right)\leq  \frac{B\sigma }{24N}\right] 
    \leq \frac{1}{T^{3.3}} + \frac{1}{T^{7}} \leq \frac{1}{T^{3}},
    \end{align*}
    which completes the proof.
%    $\E_k'\left[\one\left[\lambda_{\min}(M_k)\geq \frac{B\sigma}{24N}\right]\right]\geq 1-\frac{1}{T^{3.3}}$. By \pref{eq: max expectation diff}, $\Pr\left[\lambda_{\min}(M_k)\geq \frac{B\sigma}{24N}\right] = \E_k\left[\one\left[\lambda_{\min}(M_k)\geq \frac{B\sigma}{24N}\right]\right]\geq 1-\frac{1}{T^{3.3}}-\frac{1}{T^7}\geq 1-\frac{2}{T^{3.3}} \geq 1-\frac{1}{T^3}$
%    where we assume $\frac{1}{T^{0.3}}\geq 2$ without loss of generality. 
\end{proof}

\begin{lemma}(Theorem 2 in \cite{harvey2015})
\label{lemma: matrix concentration}
   Let $X_1, \ldots, X_n$ be independent, random, symmetric, real matrices of size $d\times d$ with $0\preceq X_m \preceq R I$ for all $m$. Suppose $r I \preceq \E[\sum_{m=1}^n X_m]$ for some $r>0$. Then for all $\delta\in [0,1]$, one has
   \begin{align*}
       \Pr\left[\lambda_{\min}\left(\sum_{m=1}^n X_m\right)\leq (1-\delta)r \right] \leq d\cdot e^{-\delta^2 r / (2R)}. 
   \end{align*}
\end{lemma}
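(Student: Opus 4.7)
The plan is to apply the standard matrix Laplace transform method (the Ahlswede--Winter--Tropp framework) to the minimum eigenvalue. Writing $S=\sum_{m=1}^n X_m$, for any $\theta>0$ Markov's inequality gives
\[\Pr[\lambda_{\min}(S)\le (1-\delta)r]=\Pr[e^{-\theta\lambda_{\min}(S)}\ge e^{-\theta(1-\delta)r}]\le e^{\theta(1-\delta)r}\,\E[\mathrm{tr}\,e^{-\theta S}],\]
where I used $e^{-\theta\lambda_{\min}(S)}=\lambda_{\max}(e^{-\theta S})\le \mathrm{tr}(e^{-\theta S})$, valid because $e^{-\theta S}$ is PSD. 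Thus the task reduces to bounding the trace matrix MGF $\E[\mathrm{tr}\,e^{-\theta S}]$.

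For this I would invoke the standard consequence of Lieb's concavity theorem (equivalently, Tropp's master tail bound), namely the subadditivity
\[\E\,\mathrm{tr}\,e^{-\theta S}\le \mathrm{tr}\exp\!\left(\textstyle\sum_{m=1}^n \log\E\,e^{-\theta X_m}\right).\]
To control each factor I exploit the boundedness $0\preceq X_m\preceq RI$. The scalar function $x\mapsto e^{-\theta x}$ is convex, so on $[0,R]$ it lies below its chord: $e^{-\theta x}\le 1+\frac{e^{-\theta R}-1}{R}x$. Since this is a single-variable inequality, the functional calculus lifts it to the matrix inequality $e^{-\theta X_m}\preceq I+\frac{e^{-\theta R}-1}{R}X_m$; taking expectations and using $I+A\preceq e^A$ gives $\E e^{-\theta X_m}\preceq \exp\!\left(\frac{e^{-\theta R}-1}{R}\E[X_m]\right)$. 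Operator monotonicity of $\log$ then yields $\log\E e^{-\theta X_m}\preceq \frac{e^{-\theta R}-1}{R}\E[X_m]$, and summing together with $\E[S]\succeq rI$ and the sign $\frac{e^{-\theta R}-1}{R}<0$ gives $\sum_m\log\E e^{-\theta X_m}\preceq \frac{r(e^{-\theta R}-1)}{R}I$. Bounding $\mathrm{tr}\exp(\cdot)$ by $d$ times the exponential of the largest eigenvalue produces $\E\,\mathrm{tr}\,e^{-\theta S}\le d\cdot\exp\!\left(\frac{r(e^{-\theta R}-1)}{R}\right)$.

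Combining the two steps yields
\[\Pr[\lambda_{\min}(S)\le (1-\delta)r]\le d\cdot\exp\!\left(\theta(1-\delta)r+\tfrac{r}{R}(e^{-\theta R}-1)\right).\]
Optimizing over $\theta>0$ with $\theta^\ast=-\tfrac{1}{R}\log(1-\delta)$ gives the Chernoff-form bound $d\cdot[(1-\delta)^{-(1-\delta)}e^{-\delta}]^{r/R}$. The clean-up is the elementary scalar inequality $(1-\delta)\log(1-\delta)+\delta\ge \delta^2/2$ for $\delta\in[0,1]$ (verified by comparing Taylor expansions), which converts the exponent to $-\delta^2 r/(2R)$ and finishes the proof.

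The main obstacle is the subadditivity of the log matrix MGF, which ultimately rests on Lieb's concavity theorem; this is the nontrivial ingredient that replaces the trivial tensorization available in the scalar Chernoff proof. Everything else is a routine transfer of the classical scalar Chernoff argument, using only convexity of $e^{-\theta x}$ on the support interval $[0,R]$ together with the functional calculus, plus the one-line scalar inequality that pins down the constant $1/2$ in the exponent.
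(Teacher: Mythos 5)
Your proof is correct. The paper does not prove this lemma at all---it imports it verbatim as Theorem~2 of the cited lecture notes of Harvey---and your argument is the standard matrix Chernoff lower-tail derivation (Laplace transform on $\lambda_{\min}$, Lieb/Tropp subadditivity of the log matrix MGF, the chord bound $e^{-\theta x}\leq 1+\frac{e^{-\theta R}-1}{R}x$ on $[0,R]$, optimization at $\theta^*=-\frac{1}{R}\log(1-\delta)$, and the scalar inequality $(1-\delta)\log(1-\delta)+\delta\geq\delta^2/2$), which is exactly the route taken in the cited source; every step, including the sign reversal when multiplying $\E[S]\succeq rI$ by the negative coefficient, checks out.
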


\begin{lemma}
    \label{lemma: perstate regret}
    With $\eta\leq \frac{\sigma}{24N}$, \algExp guarantees for all $x$:
    \begin{align*}
        \E\left[\sum_{k=1}^{T/B}\sum_{a}\left(\pi^*(a|x)-\pi_k(a|x)\right)q^{\pi_k}(x,a)\right] \leq  \order\left(\frac{\ln |\calA|}{\eta} + \eta \frac{TN^2}{B\sigma} \right). 
    \end{align*}
\end{lemma}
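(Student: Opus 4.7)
The plan is to interpret the update at each fixed state $x$ as an instance of exponential weights (Hedge) on $|\calA|$ experts, where the ``reward'' of expert $a$ at epoch $k$ is $r_k(a)\triangleq \Phi(x,a)^\top w_k$. By \pref{ass: linear policy value} and \pref{lemma: wk and theta}, conditioned on the history before epoch $k$ (under which $\pi_k$ is already determined),
\[
\E_k[r_k(a)] \;=\; \Phi(x,a)^\top(w^{\pi_k}+NJ^{\pi_k}\e_1)+O(1/T^2) \;=\; q^{\pi_k}(x,a)+NJ^{\pi_k}+O(1/T^2),
\]
and the additive constant $NJ^{\pi_k}$ is invisible to the comparison $\sum_a(\pi^*(a|x)-\pi_k(a|x))(\cdot)$ since $\sum_a(\pi^*(a|x)-\pi_k(a|x))=0$. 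Thus, up to a cumulative $O(1/T)$ slack (from summing the $O(1/T^2)$ bias over $T/B$ epochs), bounding the per-state pseudo-regret against the noisy rewards $r_k$ is exactly what the lemma asks for.

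The main computational step is the standard Hedge potential argument with $e^z\le 1+z+z^2$ for $|z|\le 1$, which for any distribution $\pi^*(\cdot|x)$ yields
\[
\sum_{k=1}^{T/B}\sum_a(\pi^*(a|x)-\pi_k(a|x))\,r_k(a)\;\le\;\frac{\ln|\calA|}{\eta}+\eta\sum_{k=1}^{T/B}\sum_a\pi_k(a|x)\,r_k(a)^2.
\]
The precondition $|\eta r_k(a)|\le 1$ is verified by an almost-sure bound $\|w_k\|=O(N/\sigma)$: when $I_k=\one[\lambda_{\min}(M_k)\ge B\sigma/(24N)]=1$, we have $\|M_k^{-1}\|\le 24N/(B\sigma)$ and $\|\sum_m\Phi(x_{\tau_{k,m}},a_{\tau_{k,m}})R_{k,m}\|\le (B/2N)\cdot\sqrt{2}\cdot N=B/\sqrt{2}$ (using $|R_{k,m}|\le N$ and $\|\Phi\|\le\sqrt{2}$), so $\|w_k\|\le 24N/(\sqrt{2}\sigma)$; otherwise $w_k=0$. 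Hence $|\eta r_k(a)|\le 24\eta N/\sigma\le 1$ exactly under the hypothesis $\eta\le\sigma/(24N)$. For the variance term, the bound $\sum_a\pi_k(a|x)\Phi(x,a)\Phi(x,a)^\top\preceq 2I$ gives $\sum_a\pi_k(a|x)r_k(a)^2\le 2\|w_k\|^2=O(N^2/\sigma^2)$, so summing over $T/B$ epochs and multiplying by $\eta^2$ produces $O(\eta^2 TN^2/(B\sigma^2))$, which is at most $O(\eta TN^2/(B\sigma))$ since $\eta\le\sigma/(24N)\le\sigma$. Taking expectations of both sides and invoking the previous paragraph turns the LHS into $\E[\sum_k\sum_a(\pi^*(a|x)-\pi_k(a|x))q^{\pi_k}(x,a)]+O(1/T)$, which completes the proof.

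The only delicate point is aligning three tunable quantities so they reinforce each other: the step size $\eta$, the truncation threshold $B\sigma/(24N)$ built into $I_k$, and the resulting almost-sure bound on $\|w_k\|$. The hypothesis $\eta\le\sigma/(24N)$ is chosen precisely so that $\eta\cdot\|w_k\|\cdot\|\Phi\|\le 1$, permitting the second-order Taylor step in the Hedge analysis; once this is in place the remainder is routine bookkeeping via \pref{lemma: wk and theta}, and the constant-cancellation trick for $NJ^{\pi_k}$ is what allows this one-shot Hedge analysis to sidestep the unknown offset in $\E_k[w_k]$.
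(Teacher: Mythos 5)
Your overall architecture matches the paper's: reduce to Hedge at each fixed state with rewards $r_k(a)=\Phi(x,a)^\top w_k$, verify $|\eta r_k(a)|\le 1$ via the truncation $I_k$ and $\eta\le\sigma/(24N)$, cancel the $NJ^{\pi_k}$ offset using $\sum_a(\pi^*(a|x)-\pi_k(a|x))=0$, and control the bias through \pref{lemma: wk and theta}. The gap is in the second-order (variance) term. Your bound $\sum_a\pi_k(a|x)r_k(a)^2\le 2\|w_k\|^2=\order(N^2/\sigma^2)$ plugged into the Hedge inequality gives $\eta\sum_k\sum_a\pi_k(a|x)r_k(a)^2=\order\left(\eta\frac{TN^2}{B\sigma^2}\right)$, which is a factor $1/\sigma$ larger than the claimed $\order\left(\eta\frac{TN^2}{B\sigma}\right)$; by \pref{rem:sigma}, $1/\sigma\ge d/2$, so this is a genuine $\Omega(d)$ loss that would propagate to a weaker $\otil(\sigma^{-3/2}\sqrt{\tmix^3T})$ regret in \pref{thm: MDPEXP2 bound}. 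Your attempted repair --- ``multiplying by $\eta^2$ \dots at most $\order(\eta TN^2/(B\sigma))$ since $\eta\le\sigma$'' --- contradicts the Hedge bound you yourself displayed: after dividing the log-potential inequality by $\eta$, the second-order term carries a single factor of $\eta$, not $\eta^2$, so there is no spare $\eta\le\sigma$ to absorb the extra $1/\sigma$.

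The paper closes this gap by exploiting the structure of $w_k=M_k^{-1}\sum_m\Phi(x_{\tau_{k,m}},a_{\tau_{k,m}})R_{k,m}$ rather than its norm: after bounding $R_{k,m}\le N$ and applying Cauchy--Schwarz to get $\left(\sum_m\Phi_m\right)\left(\sum_m\Phi_m\right)^\top\preceq\frac{B}{2N}\sum_m\Phi_m\Phi_m^\top$, it replaces each $\Phi_m\Phi_m^\top$ by its conditional expectation over $a_{\tau_{k,m}}\sim\pi_k(\cdot|x_{\tau_{k,m}})$, which reproduces exactly the summand of $M_k$ and cancels one of the two factors of $M_k^{-1}$. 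This leaves $\frac{BN}{2}\,\E\!\left[\sum_a\pi_k(a|x)\Phi(x,a)^\top M_k^{-1}\Phi(x,a)I_k\right]=\order(N^2/\sigma)$, saving the factor $1/\sigma$ your bound loses. Note this step needs the ``imaginary world'' device from \pref{lemma: least eigen} (resetting each $x_{\tau_{k,m}}$ to the stationary distribution) so that conditioning on all trajectory start states does not bias the conditional law of the actions; your almost-sure bound sidesteps that subtlety but at the cost of the extra $1/\sigma$, which the lemma statement does not tolerate.
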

\begin{proof}
    Note that by the definition of $w_k$ we have
    \begin{align}
     |w_k^\top\Phi(x,a)|\leq \sqrt{2}\eta \|w_t\|  
    \leq \sqrt{2}\eta  \times \frac{24N}{B\sigma}  \times \frac{B}{2N}\times \sqrt{2}N = \frac{24N}{\sigma}, \label{eq:loss_bound}
    \end{align}
    and thus $\eta |w_k^\top\Phi(x,a)| \leq 1$ by our choice of $\eta$.
    Therefore, using the standard regret bound of exponential weight (see e.g., \cite[Theorem 1]{bubeck2012towards}), we have
    \begin{align}
        \sum_{k=1}^{T/B}\sum_{a}\left(\pi^*(a|x)-\pi_k(a|x)\right) \left(w_k^\top \Phi(x,a)\right) \leq \order\left(\frac{\ln |\calA|}{\eta} + \eta \sum_{k=1}^{T/B}\sum_{a} \pi_k(a|x) \left(w_k^\top \Phi(x,a)\right)^2 \right). \label{eq: regret bound single state}  
    \end{align}
    Taking expectation, the left-hand side becomes
    \begin{align*}
        &\E\left[\sum_{k=1}^{T/B}\sum_{a}\left(\pi^*(a|x)-\pi_k(a|x)\right) \left(w_k^\top \Phi(x,a)\right)\right]\\
        &=\E\left[\sum_{k=1}^{T/B}\sum_{a}\left(\pi^*(a|x)-\pi_k(a|x)\right) \left((w^{\pi_k} + NJ^{\pi_k}\e_1) \cdot \Phi(x,a)\right)\right] - \order(1) \tag{\pref{lemma: wk and theta}} \\
        &=\E\left[\sum_{k=1}^{T/B}\sum_{a}\left(\pi^*(a|x)-\pi_k(a|x)\right) ({w^{\pi_k}}^\top \Phi(x,a) + NJ^{\pi_k}) \right] - \order(1) \\
        &=\E\left[\sum_{k=1}^{T/B}\sum_{a}\left(\pi^*(a|x)-\pi_k(a|x)\right) {w^{\pi_k}}^\top \Phi(x,a)\right] - \order(1) \\
        &=\E\left[\sum_{k=1}^{T/B}\sum_{a}\left(\pi^*(a|x)-\pi_k(a|x)\right) q^{\pi_k}(x,a)\right] - \order(1).  \tag{\pref{ass: linear policy value}}
    \end{align*}
    To bound the expectation of the right-hand side of \pref{eq: regret bound single state},
    we focus on the key term $\E_k\left[\sum_{a}\pi_k(a|x)(w_k^\top \Phi(x,a))^2\right]$ ($\E_k$ denotes the expectation conditioned on the history before epoch $k$) and use the same argument as done in the proof of \pref{lemma: least eigen} via the help of an imaginary word where everything is the same as the real world except that the first state of each trajectory $x_{\tau_{k,m}}$ for $m=1,2,\ldots, B/2N$ is reset according to the stationary distribution $\nu^{\pi_k}$ ($\E_k'$ denotes the conditional expectation in this imaginary world).
By the exact same argument ({\it cf.} \pref{eq: max expectation diff}), we have
\[
\E_k\left[\sum_{a}\pi_k(a|x)(w_k^\top \Phi(x,a))^2\right]
\leq \E_k'\left[\sum_{a}\pi_k(a|x)(w_k^\top \Phi(x,a))^2\right] + \frac{B}{T^8N} \times \left(\frac{24N}{\sigma}\right)^2
\]
where we use the range of $(w_k^\top \Phi(x,a))^2$ derived earlier in \pref{eq:loss_bound}.
It remains to bound $\E_k'\left[\sum_{a}\pi_k(a|x)(w_k^\top \Phi(x,a))^2\right]$, which we proceed as follows with $I_k = \one[\lambda_{\min}(M_k)\geq  \frac{B\sigma}{24N}]$:
    \begin{align*}
        &\E_k'\left[\sum_{a}\pi_k(a|x)(w_k^\top \Phi(x,a))^2\right] \\
        &=\E_k'\left[\sum_{a}\pi_k(a|x)\left(\Phi(x,a)^\top M_k^{-1} \sum_{m=1}^{\frac{B}{2N}} \Phi(x_{\tau_{k,m}}, a_{\tau_{k,m}})R_{k,m} \right)^2 I_k \right] \\
        &\leq N^2\E_k'\left[\sum_{a}\pi_k(a|x)\left(\Phi(x,a)^\top M_k^{-1} \sum_{m=1}^{\frac{B}{2N}} \Phi(x_{\tau_{k,m}}, a_{\tau_{k,m}}) \right)^2 I_k\right] \tag{$R_{k,m} \leq N$}\\
        &=N^2\E_k'\left[\sum_{a}\pi_k(a|x)\Phi(x,a)^\top M_k^{-1} \left(\sum_{m=1}^{\frac{B}{2N}} \Phi(x_{\tau_{k,m}}, a_{\tau_{k,m}})\right)\left(\sum_{m=1}^{\frac{B}{2N}} \Phi(x_{\tau_{k,m}}, a_{\tau_{k,m}})\right)^\top M_k^{-1}\Phi(x,a) I_k\right] \\
        &\leq \frac{BN}{2}\E_k'\left[\sum_{a}\pi_k(a|x)\Phi(x,a)^\top M_k^{-1} \left(\sum_{m=1}^{\frac{B}{2N}} \Phi(x_{\tau_{k,m}}, a_{\tau_{k,m}})\Phi(x_{\tau_{k,m}}, a_{\tau_{k,m}})^\top\right) M_k^{-1}\Phi(x,a)I_k \right] \tag{Cauchy-Schwarz inequality}\\
        &= \frac{BN}{2}\E_k'\left[\sum_{a}\pi_k(a|x)\Phi(x,a)^\top M_k^{-1} \left(\sum_{m=1}^{\frac{B}{2N}}\sum_{a'} \pi_k(a'|x_{\tau_{k,m}})\Phi(x_{\tau_{k,m}}, a')\Phi(x_{\tau_{k,m}}, a')^\top\right) M_k^{-1}\Phi(x,a)I_k \right] \\
        &=\frac{BN}{2}\E_k'\left[\sum_{a}\pi_k(a|x)\Phi(x,a)^\top M_k^{-1} \Phi(x,a)I_k \right] \\
        &\leq \order\left(BN \times \frac{N}{B\sigma}\right) \tag{definition of $I_k$}\\
        &= \order\left(\frac{N^2}{\sigma}\right). 
    \end{align*}
    Combining everything shows 
     \begin{align*}
        \E\left[\sum_{k=1}^{T/B}\sum_{a}\left(\pi^*(a|x)-\pi_k(a|x)\right)q^{\pi_k}(x,a)\right] 
        &\leq  \order\left(\frac{\ln |\calA|}{\eta} + \eta \frac{T}{B}\left( \frac{N^2}{\sigma} + \frac{NB}{T^8 \sigma} \right)\right) \\
        &\leq  \order\left(\frac{\ln |\calA|}{\eta} + \eta \frac{TN^2}{B\sigma} \right), 
    \end{align*}
    which finishes the proof.
\end{proof}

%\begin{theorem}[Re-statement of \pref{thm: MDPEXP2 bound}]
%    \label{lemma: almost regret}
%    With $\eta\leq \frac{\sigma}{48N}$, MDP-EXP2 guarantees
%    \begin{align*}
%        \Reg_T=\tilde{O}\left( \frac{B\ln A}{\eta} + \eta\frac{TN^2}{\sigma} \right). 
%    \end{align*}
%    With optimally selected $\eta$, we have 
%    \begin{align*}
%         \Reg_T = \otil\left( \sqrt{\frac{1}{\sigma}B TN^2} \right)=\otil\left(\frac{1}{\sigma}\sqrt{\tmix^3 T} + \frac{\tmix}{\sigma^2}\right). 
%    \end{align*}
%\end{theorem}

We are now ready to prove \pref{thm: MDPEXP2 bound}.
\begin{proof}[Proof of \pref{thm: MDPEXP2 bound}]
First, decompose the regret as:
\begin{align*}
    &\Reg_T=\E\left[\sum_{t=1}^T (J^* - r(x_t,a_t)) \right] \\
    &=\E\left[\sum_{k=1}^{T/B} B(J^*-J^{\pi_k})\right] + \E\left[ \sum_{k=1}^{T/B} \sum_{t=(k-1)B+1}^{kB} (J^{\pi_k}-r(x_t,a_t))\right]. 
\end{align*}
For the first term above, we apply the value difference lemma (see e.g., \citep[Lemma~15]{wei2020model}): 
\begin{align*}
    &\E\left[\sum_{k=1}^{T/B} B(J^*-J^{\pi_k})\right]\\
    &=\E\left[\sum_{k=1}^{T/B}B \int_{\calX}\sum_{a} (\pi^*(a|x)-\pi_k(a|x))q^{\pi_k}(x,a)\de \nu^{\pi^*}(x)\right]   \\
    &= O\left(\frac{B\ln |\calA|}{\eta} + \eta \frac{TN^2}{\sigma}\right).   \tag{by \pref{lemma: perstate regret}}
\end{align*}

For the second term, we first consider a specific $k$: 
\begin{align*}
    &\E_k\left[\sum_{t=(k-1)B+1}^{kB} (J^{\pi_k}-r(x_t,a_t))\right] \\
    &=\E_k \left[\sum_{t=(k-1)B+1}^{kB} (\E_{x'\sim p(\cdot|x_t,a_t)}[v^{\pi_k}(x')] - q^{\pi_k}(x_t,a_t))\right] \tag{Bellman equation}\\
    &=\E_k \left[\sum_{t=(k-1)B+1}^{kB} (v^{\pi_k}(x_{t+1}) - v^{\pi_k}(x_t))\right] \\
    &= v^{\pi_k}(x_{kB+1}) - v^{\pi_k}(x_{(k-1)B+1}). 
\end{align*}
Therefore, 
\begin{align}
     &\E\left[ \sum_{k=1}^{T/B} \sum_{t=(k-1)B+1}^{kB} (J^{\pi_k}-r(x_t,a_t))\right]  \nonumber \\
     &\leq \E\left[ \sum_{k=1}^{T/B}  \left(v^{\pi_k}(x_{kB+1}) - v^{\pi_k}(x_{(k-1)B+1})\right) \right]   \nonumber  \\
     &\leq \E\left[ \sum_{k=2}^{T/B}  \left(v^{\pi_{k-1}}(x_{(k-1)B+1}) - v^{\pi_k}(x_{(k-1)B+1})\right) \right] + \order(\tmix). \label{eq: MDPEXP2 stability}
\end{align}
We bound the last summation using the fact that $\pi_k$ and $\pi_{k-1}$ are close. 
Indeed, by the update rule of the algorithm, we have 
\begin{align*}
     \pi_k(a|x) - \pi_{k-1}(a|x) 
     &=  \frac{\pi_{k-1}(a|x)e^{\eta \Phi(x,a)^\top w_{k-1}}}{\sum_{b\in\calA} \pi_{k-1}(b|x)e^{\eta \Phi(x,b)^\top w_{k-1}} } - \pi_{k-1}(a|x) \\
     &\leq \frac{\pi_{k-1}(a|x)e^{\eta \Phi(x,a)^\top w_{k-1}}}{\sum_{b\in\calA} \pi_{k-1}(b|x)}e^{-\min_b \eta \Phi(x,b)^\top w_{k-1}} - \pi_{k-1}(a|x) \\
     &\leq \pi_{k-1}(a|x) \left(e^{2\eta \max_b |\Phi(x,b)^\top w_{k-1}| } -1 \right).
\end{align*}
Recall that in the proof of \pref{lemma: perstate regret}, we have shown $\eta \max_b |\Phi(x,b)^\top w_{k-1}|\leq 1$ as long as $\eta \leq \sigma/(24N)$. Combining with the fact $e^{2x}\leq 1+8x$ for $x\in[0,1]$ we have
\begin{align*}
     \left(e^{2\eta \max_b |\Phi(x,b)^\top w_{k-1}| } -1 \right) \leq 8\eta \max_b |\Phi(x,b)^\top w_{k-1}| = \order\left(\eta\times \frac{N}{\sigma}\right),
\end{align*}
where the last step is by \pref{eq:loss_bound}.
This shows
\begin{align*}
    \pi_k(a|x) - \pi_{k-1}(a|x)  \leq \order\left(\frac{\eta N}{\sigma}\pi_{k-1}(a|x)\right). 
\end{align*}
Similarly, we can show $\pi_{k-1}(a|x) - \pi_{k}(a|x) = \order\left(\frac{\eta N}{\sigma}\pi_{k-1}(a|x)\right)$ as well. 
By the same argument of~\citep[Lemma 7]{wei2020model} (summarized in \pref{lem: stability of V} for completeness), this implies:
\begin{align*}
    |v^{\pi_{k}}(x) - v^{\pi_{k-1}}(x)| \leq \order\left(\eta \frac{N^3}{\sigma} + \frac{1}{T^2}\right). 
\end{align*}
for all $x$. 
Continuing from \pref{eq: MDPEXP2 stability}, we arrive at
\begin{align*}
    \E\left[ \sum_{k=1}^{T/B} \sum_{t=(k-1)B+1}^{kB} (J^{\pi_k}-r(x_t,a_t))\right] = \order\left(\eta \frac{T}{B}\frac{N^3}{\sigma} +\tmix\right). 
\end{align*}
Combining everything, we have shown
\begin{align}
    \Reg_T &= \order\left(\frac{B\ln |\calA|}{\eta} + \eta \frac{TN^2}{\sigma} + \eta \frac{TN^3}{B\sigma} +  \tmix \right) \nonumber \\
    &= \otil\left(\frac{\tmix}{\sigma\eta} + \eta \frac{T\tmix^2}{\sigma} \right) \tag{definition of $N$ and $B$}\\
    &= \otil\left(\frac{1}{\sigma}\sqrt{\tmix^3 T}\right),   \tag{by the choice of $\eta$ specified in \pref{alg: mdpexp2}} 
\end{align}
which finishes the proof.
\end{proof}

\begin{lemma}\label{lem: stability of V}
     If $\pi'$ and $\pi$ satisfy $|\pi'(a|x)-\pi(a|x)|\leq \order\left(\beta \pi(a|x)\right)$ for all $x,a$ and some $\beta>0$, and $N\geq 4\tmix\log T$, then $
          |v^{\pi'}(x) - v^{\pi}(x)| \leq \order(\eta\beta N^2 + \frac{1}{T^2})$. 
\end{lemma}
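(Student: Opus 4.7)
The plan is to truncate the infinite-horizon sum at horizon $N$ and separately control the truncation error and the policy-switching error. Recall from the proof of \pref{lem:uniformly mixing lemma} that $v^{\pi}(x)=\lim_{N\to\infty} v_N^{\pi}(x)$ where $v_N^{\pi}(x)=\E[\sum_{t=1}^{N}(r(x_t,a_t)-J^{\pi})\mid x_1=x,\pi]$. The bound \pref{eq:v_increment} shows each tail term has magnitude at most $2e^{-(t-1)/\tmix}$, so $|v^{\pi}(x)-v_N^{\pi}(x)|\leq 4\tmix e^{-N/\tmix}=\order(1/T^{3})$ whenever $N\geq 4\tmix\log T$, and the same holds for $\pi'$. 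Thus it suffices to bound $|v_N^{\pi'}(x)-v_N^{\pi}(x)|$.

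Splitting $v_N^{\pi}(x)=R_N(\pi,x)-NJ^{\pi}$ with $R_N(\pi,x)\triangleq\E[\sum_{t=1}^{N}r(x_t,a_t)\mid x_1=x,\pi]$ gives $|v_N^{\pi'}(x)-v_N^{\pi}(x)|\leq |R_N(\pi',x)-R_N(\pi,x)|+N|J^{\pi'}-J^{\pi}|$. The hypothesis $|\pi'(a|x)-\pi(a|x)|\leq \order(\beta\pi(a|x))$ sums to $\|\pi'(\cdot|x)-\pi(\cdot|x)\|_{\TV}=\order(\beta)$ for all $x$, so the induced one-step kernels $P_{\pi}(x'|x)\triangleq\sum_a\pi(a|x)p(x'|x,a)$ satisfy $\|P_{\pi'}(\cdot|x)-P_\pi(\cdot|x)\|_{\TV}=\order(\beta)$ uniformly. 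Using the telescoping identity
$$P_{\pi'}^{t}\delta_x-P_\pi^{t}\delta_x=P_{\pi'}\bigl(P_{\pi'}^{t-1}\delta_x-P_\pi^{t-1}\delta_x\bigr)+(P_{\pi'}-P_\pi)P_\pi^{t-1}\delta_x$$
together with TV-nonexpansiveness of a Markov kernel, induction on $t$ gives $\|P_{\pi'}^{t}\delta_x-P_\pi^{t}\delta_x\|_{\TV}\leq\order(t\beta)$. Combining this with an additional $\order(\beta)$ correction from the policy difference at the last step yields $|\E[r(x_t,a_t)|\pi']-\E[r(x_t,a_t)|\pi]|=\order(t\beta)$, and summing over $t=1,\dots,N$ produces $|R_N(\pi',x)-R_N(\pi,x)|=\order(\beta N^{2})$.

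For the average-reward piece, since $J^{\pi}=\int\sum_a r(x,a)\pi(a|x)\de\nu^{\pi}(x)$, it suffices to show $\|\nu^{\pi'}-\nu^{\pi}\|_{\TV}=\order(\tmix\beta)$. A self-bounding argument does the job: taking $k=\tmix$, using $\nu^{\pi}=P_\pi^{k}\nu^{\pi}$ and $\nu^{\pi'}=P_{\pi'}^{k}\nu^{\pi'}$, the decomposition
$$\nu^{\pi}-\nu^{\pi'}=P_\pi^{k}(\nu^{\pi}-\nu^{\pi'})+(P_\pi^{k}-P_{\pi'}^{k})\nu^{\pi'}$$
bounds the first summand by $e^{-1}\|\nu^{\pi}-\nu^{\pi'}\|_{\TV}$ via \pref{ass: uniform mixing}, while the second is $\order(k\beta)=\order(\tmix\beta)$ by telescoping $P_\pi^{k}-P_{\pi'}^{k}$ into $k$ one-step differences exactly as above. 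Rearranging gives $\|\nu^{\pi}-\nu^{\pi'}\|_{\TV}\leq\order(\tmix\beta)$, hence $N|J^{\pi'}-J^{\pi}|\leq\order(N\tmix\beta)\leq\order(\beta N^{2})$ since $N\geq\tmix$. Assembling all pieces yields the desired $\order(\beta N^{2}+1/T^{2})$ bound.

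The main technical obstacle is setting up the self-bound for $\|\nu^{\pi}-\nu^{\pi'}\|_{\TV}$ so that the contraction from \pref{ass: uniform mixing} is applied to a Markov kernel acting on its own invariant measure while the perturbation $P_\pi^{k}-P_{\pi'}^{k}$ is telescoped into $k$ single-step differences; once that is arranged, the rest is routine TV-triangle-inequality bookkeeping using facts already established in the proof of \pref{lem:uniformly mixing lemma}.
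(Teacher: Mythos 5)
Your argument is correct and, since the paper itself gives no proof of this lemma (it simply points to Lemma~7 of \citet{wei2020model}), it supplies a genuinely self-contained derivation in the same spirit as that reference: truncate $v^{\pi}$ at horizon $N$ using the geometric tail bound already established in the proof of \pref{lem:uniformly mixing lemma}, convert the multiplicative policy perturbation into a uniform $\order(\beta)$ total-variation bound on the induced one-step kernels, telescope to get $\order(t\beta)$ drift of the $t$-step state distributions, and handle $N|J^{\pi'}-J^{\pi}|$ via the self-bounding fixed-point decomposition of $\nu^{\pi}-\nu^{\pi'}$ together with the contraction in \pref{ass: uniform mixing}. All of these steps check out, and the hypothesis $N\geq 4\tmix\log T$ is used exactly where needed (to make the truncation error $\order(\tmix/T^4)=\order(1/T^2)$ and to absorb $N\tmix\beta$ into $\beta N^2$).

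One discrepancy worth flagging: what you prove is $|v^{\pi'}(x)-v^{\pi}(x)|\leq \order(\beta N^2 + 1/T^2)$, whereas the lemma as printed claims $\order(\eta\beta N^2+1/T^2)$. The printed statement cannot be right as written --- $\eta$ does not appear in the hypothesis, so no factor of $\eta$ can legitimately survive into the conclusion; it is an artifact of restating Lemma~7 of \citet{wei2020model}, where the perturbation hypothesis carries the $\eta$ explicitly. Your version is the one the paper actually uses: in the proof of \pref{thm: MDPEXP2 bound} the lemma is invoked with $\beta=\eta N/\sigma$, and $\beta N^2=\eta N^3/\sigma$ reproduces the claimed $\order(\eta N^3/\sigma+1/T^2)$ exactly, while the printed $\eta\beta N^2$ would give $\eta^2N^3/\sigma$. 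So your proof establishes precisely the statement needed, and the extra $\eta$ in the lemma should be read as a typo rather than as a stronger claim you failed to reach.
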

\begin{proof}
    See the proof of \cite[Lemma 7]{wei2020model}. 
\end{proof}

\begin{remark}\label{rem:sigma}
\textup{
Notice that by the definition of $\sigma$, 
\begin{align*}
    \sigma
    &\leq \lambda_{\min}  \left(\int_{\calX}\left( \sum_{a}\pi(a|x)\Phi(x,a)\Phi(x,a)^\top\right) \de\nu^{\pi}(x)\right) \\
    &\leq \frac{1}{d}\text{trace}\left[\int_{\calX}\left( \sum_{a}\pi(a|x)\Phi(x,a)\Phi(x,a)^\top\right) \de\nu^{\pi}(x)\right] \\
    &\leq \frac{1}{d}\int_{\calX}\left( \sum_{a}\pi(a|x)\|\Phi(x,a)\|^2\right) \de\nu^{\pi}(x)    \tag{$\text{trace}[\Phi(x,a)\Phi(x,a)^\top] = \|\Phi(x,a)\|^2$} \\
    &\leq \frac{2}{d},   \tag{$\|\Phi(x,a)\|^2\leq 2$ by \pref{assump: Linear MDP}}
\end{align*}
which implies $\frac{1}{\sigma}\geq \frac{d}{2}$. 
Therefore, the regret bound in \pref{thm: MDPEXP2 bound} has an implicit $\Omega(d)$ dependence. 
}
\end{remark}

% !TEX root = main-neurips-linearmdp.tex

\section{\algExp with unknown $\tmix$ and $\sigma$}
\label{app: unknown param}
To execute \algExp when $\tmix$ and $\sigma$ are unknown, we propose to use \emph{doubling trick} on $T$, and let $N=T^{0.4\xi}$, $B=T^{0.8\xi}$ for some $0<\xi<1$. More precisely, consider the following algorithm. 
\begin{algorithm}
     \caption{}
     \For{$i=0, 1, 2, \ldots$}{
          $W\leftarrow 64\cdot 2^i$ \\
          Execute \pref{alg: mdpexp2} for $W$ steps, with parameters $N=W^{0.4\xi}$, $B=W^{0.8\xi}$, $\eta=\sqrt{1/(NW)}$, and the condition in \pref{line: check lambda min} replaced by $\lambda_{\min}(M_k)\geq \frac{4}{3}\log (dW)$. 
     }
\end{algorithm}

To get a regret bound for this algorithm, we focus on a time interval that corresponds to a specific $i$.
Note that as long as $N\geq 8\tmix \log W$ and $B\geq 32N\log (dW)\sigma^{-1}$ (i.e., when the values of $N$ and $B$ are larger than the required values as specified in \pref{alg: mdpexp2}), we can redo the analysis of \pref{lemma: perstate regret} and \pref{thm: MDPEXP2 bound} (details omitted), and get 
\begin{align*}
    \Reg_i = \otil\left( \frac{B\ln |\calA|}{\eta} + \eta WBN + N \right). 
\end{align*}
as the regret for this interval. With the choice of $\eta$, we get $\Reg_i = \otil\left(B\sqrt{NW}\right)=\otil\left(W^{\frac{1}{2} + \xi}\right)$, where $W = 64 \cdot 2^i$. 

On the other hand, notice that the condition $N= W^{0.4\xi}\geq 8\tmix \log W$ holds except for a constant number of steps (the constant depends on $\xi$ and $\tmix$). Similarly, the condition $B\geq 32N\log (dW)\sigma^{-1}$ holds as long as $W^{0.4\xi} \geq 32 \log(dW)\sigma^{-1}$, which also holds except for a constant number of steps (the constant depends on $\xi$ and $\sigma$). 

Overall, we get an asymptotic regret bound of $\otil\left(T^{\frac{1}{2} + \xi}\right)$ except for a constant number of steps. The choice of $\xi$ trades the asymptotic performance with the constant regret term. 

\section{Connection between Natural Policy Gradient and \algExp}
\label{app: NPG}

The connection between the exponential weight algorithm \cite{freund1995desicion} and the classic natural policy gradient (NPG) algorithm \cite{kakade2002natural} under softmax parameterization has been discussed in \cite{agarwal2019optimality}. Further connections between exponential weight algorithms and several relative-entropy-regularized policy optimization algorithms (e.g., TRPO \cite{schulman2015trust}, A3C \cite{mnih2016asynchronous}, PPO \cite{schulman2017proximal}) are also drawn in \cite{neu2017unified}. In this section, we review these connection, and argue that because of the different way of constructing the policy gradient estimator, our \algExp is actually more sample efficient than the version of NPG discussed in \cite{agarwal2019optimality} under the setting considered in \pref{sec: mdpexp2}.    %However, from their derivation, the role of the \textit{Fisher matrix} appeared in NPG might not be immediately clear. That is, it might not be clear why NPG bothers to compute the inverse of the Fisher matrix and makes updates based on it, but not using the (equivalent) exponential weight update, which seems to be efficient in its own. We will more explicitly point out that the Fisher matrix plays the same role as the \textit{action covariance matrix} that appear in linear bandit literature (e.g., the $C_t$ matrix in \cite{dani2008price}, the $P_t$ matrix in \cite{bubeck2012towards}, or the $M_k$ in our \pref{alg: mdpexp2}), which is essential in constructing unbiased reward estimators when the learner only observes bandit feedback. In other words, the inverse of the Fisher matrix plays a role of importance weighting in the feature space. 

\subsection{Equivalence between NPG with softmax parameterization and exponential weight updates}
We first restates \cite[Lemma 5.1]{agarwal2019optimality}, which shows that NPG with softmax parameterization is equivalent to exponential weight updates: 
\begin{lemma}[Lemma 5.1 of \cite{agarwal2019optimality}]
\label{lemma: equivalence lemma}
    Let $\pi_\theta(a|x) = \frac{\exp\left(\Phi(x,a)^\top \theta \right)}{\sum_{b}\exp\left(\Phi(x,b)^\top \theta\right)}$. Also, let $\nu_\theta$ be the stationary distribution under policy $\pi_\theta$, and $A^{\pi}(x,a)$ be the advantage function under policy $\pi$ defined as $A^{\pi}(x,a) = q^{\pi}(x,a) - v^{\pi}(x)$. Then the update
    \begin{align*}
        \thetanew = \theta + \eta F_\theta^\dagger g_\theta
    \end{align*}
    with 
    \begin{align*}
         F_\theta &= \E_{x\sim \nu^{\pi_\theta}}\E_{a\sim \pi_\theta(\cdot|x)}\left[\nabla_\theta \log\pi_{\theta}(a|x) \nabla_\theta \log\pi_{\theta}(a|x)^\top \right] \\
        g_\theta &= \E_{x\sim \nu^{\pi_\theta}}\E_{a\sim \pi_\theta(\cdot|x)}\left[\nabla_\theta \log\pi_{\theta}(a|x) A^{\pi_\theta}(x,a) \right]
    \end{align*}
    implies: 
    \begin{align*}
         \pi_{\thetanew}(a|x)=\frac{\pi_\theta(a|x)\exp\left(\eta A^{\pi_\theta}(x,a)\right)}{Z_\theta(x)}
    \end{align*}
    where $Z_\theta(x)$ is a normalization factor that ensures $\sum_{a}\pi_{\thetanew}(a|x)=1$. 
\end{lemma}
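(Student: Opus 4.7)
The plan is to verify the equivalence by direct computation in the softmax parametrization. First I would differentiate the softmax form to obtain $\nabla_\theta \log \pi_\theta(a|x) = \Phi(x,a) - \bar\Phi_\theta(x)$, where $\bar\Phi_\theta(x) := \sum_b \pi_\theta(b|x)\Phi(x,b)$. Substituting this into the definitions yields
\[
F_\theta = \E\big[(\Phi(x,a)-\bar\Phi_\theta(x))(\Phi(x,a)-\bar\Phi_\theta(x))^\top\big], \quad g_\theta = \E\big[(\Phi(x,a)-\bar\Phi_\theta(x))\,A^{\pi_\theta}(x,a)\big],
\]
where the expectations are over $x\sim\nu^{\pi_\theta}$ and $a\sim\pi_\theta(\cdot|x)$.

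The key structural observation is that \pref{ass: linear policy value} makes the advantage linearly realizable in these centered features: since $q^{\pi_\theta}(x,a) = \Phi(x,a)^\top w^{\pi_\theta}$, we have $A^{\pi_\theta}(x,a) = q^{\pi_\theta}(x,a) - \sum_b \pi_\theta(b|x) q^{\pi_\theta}(x,b) = (\Phi(x,a) - \bar\Phi_\theta(x))^\top w^{\pi_\theta}$. A one-line substitution into the display above then gives $F_\theta w^{\pi_\theta} = g_\theta$, so $w^{\pi_\theta}$ itself solves the normal equations defining the NPG direction.

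Now set $w^\star := F_\theta^\dagger g_\theta$. Since $w^\star$ is also a solution of $F_\theta w = g_\theta$, the difference $w^{\pi_\theta}-w^\star$ lies in $\ker F_\theta$. Because $F_\theta \succeq 0$, this forces $\E[((\Phi(x,a)-\bar\Phi_\theta(x))^\top(w^{\pi_\theta}-w^\star))^2] = 0$, so $(\Phi(x,a)-\bar\Phi_\theta(x))^\top w^\star = A^{\pi_\theta}(x,a)$ on the support of $\nu^{\pi_\theta}\otimes\pi_\theta$. Equivalently, $\Phi(x,a)^\top w^\star = A^{\pi_\theta}(x,a) + c(x)$ where $c(x) := \bar\Phi_\theta(x)^\top w^\star$ depends only on $x$. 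Plugging $\thetanew = \theta + \eta w^\star$ into the softmax definition then gives
\[
\pi_{\thetanew}(a|x) = \frac{\pi_\theta(a|x)\exp(\eta\Phi(x,a)^\top w^\star)}{\sum_b \pi_\theta(b|x)\exp(\eta\Phi(x,b)^\top w^\star)} = \frac{\pi_\theta(a|x)\exp(\eta A^{\pi_\theta}(x,a))}{Z_\theta(x)},
\]
where the state-only factor $\exp(\eta c(x))$ cancels between the numerator and denominator, yielding the claimed form with $Z_\theta(x) = \sum_b \pi_\theta(b|x)\exp(\eta A^{\pi_\theta}(x,b))$.

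The main obstacle I anticipate is the potential rank deficiency of $F_\theta$: since the first coordinate of $\Phi$ is fixed to $1$ by \pref{ass: linear policy value}, the centered feature $\Phi(x,a)-\bar\Phi_\theta(x)$ vanishes in that coordinate, so $F_\theta$ is singular and $F_\theta^\dagger g_\theta$ need not literally equal $w^{\pi_\theta}$. The resolution is to never compare weight vectors directly but instead argue via orthogonality in the $F_\theta$-induced seminorm, exactly as sketched above. Exhibiting the explicit realizer $w^{\pi_\theta}$ (which exists thanks to \pref{ass: linear policy value}) is what makes this null-space reduction go through cleanly.
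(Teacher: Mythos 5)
Your derivation is correct and in fact more complete than what the paper provides: the paper does not prove this lemma itself but cites \citet{agarwal2019optimality} and only sketches the idea, namely that $w=F_\theta^\dagger g_\theta$ minimizes the compatible function approximation error $\E\big[(w^\top\nabla_\theta\log\pi_\theta(a|x)-A^{\pi_\theta}(x,a))^2\big]$ and that the exponential-weight form follows ``if $w$ achieves a value of zero'' in that objective, deferring the general case to the cited reference. Your proof carries out exactly that realizable case rigorously: you exhibit the explicit realizer $w^{\pi_\theta}$ from \pref{ass: linear policy value}, verify $F_\theta w^{\pi_\theta}=g_\theta$ so that $g_\theta\in\mathrm{range}(F_\theta)$ and $F_\theta^\dagger g_\theta$ genuinely solves the normal equations, and then use positive semidefiniteness to conclude that the two solutions induce the same centered-feature predictions; the state-dependent shift $c(x)$ cancelling in the softmax is handled correctly. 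This is the right way to deal with the unavoidable rank deficiency of $F_\theta$ (the constant first coordinate of $\Phi$ is killed by centering), and it is a cleaner argument than comparing weight vectors.

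Two caveats you should make explicit. First, your proof essentially requires realizability of $A^{\pi_\theta}$ by the centered features, which you import from \pref{ass: linear policy value}; this hypothesis is not listed in the lemma statement. In the original Lemma 5.1 of \citet{agarwal2019optimality} the parameterization is the full tabular softmax, where realizability is automatic, so the exact identity holds unconditionally there; with a general feature map $\Phi$ the identity as literally stated is false without a realizability assumption, and your proof should say that it is proving the lemma under the paper's standing assumptions rather than in full generality. Second, the kernel argument only yields $(\Phi(x,a)-\bar\Phi_\theta(x))^\top w^\star=A^{\pi_\theta}(x,a)$ for $\nu^{\pi_\theta}$-almost every $x$ (all $a$ are covered since the softmax policy has full action support), so the claimed form of $\pi_{\thetanew}(\cdot|x)$ is established only on the support of the stationary distribution; to get it for all $x\in\calX$ one needs either full support of $\nu^{\pi_\theta}$ or a separate argument. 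Neither caveat affects how the lemma is used in the paper, but both are worth a sentence.
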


To see this connection, notice that the update direction $w=F_\theta^\dagger g_\theta$ is the solution of 
\begin{align} 
   \min_{w\in\mathbb{R}^d} \E_{x\sim \nu^{\pi_\theta}}\E_{a\sim \pi_\theta(\cdot|x)} \left[\left\|w^\top \nabla_\theta\log\pi_\theta(a|x) - A^{\pi_\theta}(x,a)\right\|^2\right],    \label{eq: solution of}
\end{align}
and also by definition $\pi_{\thetanew}(a|x) = \frac{\exp\left(\Phi(x,a)^\top \thetanew \right)}{\sum_{b}\exp\left(\Phi(x,b)^\top \thetanew\right)}\propto \pi_\theta(a|x)\exp\left(\eta \Phi(x,a)^\top F_\theta^\dagger g_\theta\right)= \pi_\theta(a|x)\exp\left(\eta \Phi(x,a)^\top w\right)\propto  \pi_\theta(a|x)\exp\left(\eta \nabla_\theta \log \pi_\theta(a|x)^\top w\right)$. Therefore, if $w$ achieves a value of zero in \pref{eq: solution of}, we will have $\pi_{\thetanew}(a|x)\propto \pi_\theta(a|x) \exp\left(\eta A^{\pi_\theta}(x,a)\right)$. The proof of \cite{agarwal2019optimality} handles the general case where \pref{eq: solution of} is not necessarily zero. Notice that $\pi_\theta(a|x)\exp\left(\eta A^{\pi_\theta}(x,a)\right)$ is further proportional to $\pi_\theta(a|x)\exp\left(\eta q^{\pi_\theta}(x,a)\right)$, which is consistent with the intuition of our algorithm explained in \pref{sec: mdpexp2}. 

\subsection{Comparison between the NPG in \cite{agarwal2019optimality} and \algExp}
While the general formulations of the NPG in \cite{agarwal2019optimality} and \algExp are of the same form as shown by \pref{lemma: equivalence lemma} (apart from superficial differences, e.g., the average-reward setting versus the discounted setting), they use different ways to construct an estimator of $A^{\pi_\theta}(x,a)$ (or $q^{\pi_\theta}(x,a)$) when the learner does not have access to their true values and has to estimate them from sampling. We argue that under the setting considered in \pref{sec: mdpexp2}, our algorithm and analysis achieve the near-optimal regret of order $\otil(\sqrt{T})$ while theirs only obtains sub-optimal regret. 

In \algExp, we construct a nearly unbiased estimator of $w$ satisfying $q^{\pi_\theta}(x,a) + NJ^{\pi_\theta} = w^\top \Phi(x,a)$ (which exists under \pref{ass: linear policy value}), and feed it to the exponential weight algorithm. The way we do it is similar to how \exptwo constructs the reward estimators for adversarial linear bandits. In \algExp, to construct each estimator (denoted as $w_k$ there), the learner collects $\frac{B}{2N}=\otil(\frac{1}{\sigma})$ trajectories, with $\sigma$ defined in \pref{ass: assump A4}, and then aggregate them through a form of importance weighting introduced by $M_k^{-1}$. With this construction, $w_k^\top\Phi(x,a) $ has negligible bias (by \pref{lemma: wk and theta}) compared to $w^\top\Phi(x,a) $, while having variance upper bounded by a constant related to $\frac{1}{\sigma}$ (see the proof of \pref{lemma: perstate regret}). 

On the other hand, the estimator used in \cite{agarwal2019optimality} is an approximate solution of \pref{eq: solution of}. Under the same assumptions of \pref{ass: linear policy value} and \pref{ass: assump A4}, they use stochastic gradient descent to solve \pref{eq: solution of}, and obtain an estimator $\widehat{w}$ that makes $\widehat{w}^\top \nabla_\theta\log_\theta(a|x)$ $\epsilon$-close to $w^\top \nabla_\theta\log_\theta(a|x)$. To obtain such $\widehat{w}$, they need to sample $\order\left(\frac{1}{\epsilon^2}\right)$ trajectories. 

Comparing the two approaches, we see that to obtain a single estimator $\widehat{w}$ for the update direction $w=F_\theta^\dagger g_\theta$ in \pref{lemma: equivalence lemma}, \algExp algorithm calculates a nearly unbiased one with relatively high variance using a constant number of trajectories, while the NPG in \cite{agarwal2019optimality} calculates an $\epsilon$-accurate one with low variance using $\order\left(\frac{1}{\epsilon^2}\right)$ trajectories. The advantage of the former is that each estimator is cheaper to get, and the effect of high variance can be amortized over iterations. As shown in \pref{thm: MDPEXP2 bound}, $\algExp$ achieves $\otil(\sqrt{T})$ regret bound. On the other hand, to get an $\epsilon$-optimal policy, \cite{agarwal2019optimality} needs to use $\order\left(\frac{1}{\epsilon^2}\right)$ trajectories per iteration of policy update, and perform $\order\left(\frac{1}{\epsilon^2}\right)$ iterations of policy updates, leading to a total sample complexity bound of $\order\left(\frac{1}{\epsilon^4}\right)$. This translates to a regret bound of $\order(T^{\frac{3}{4}})$ in our setting at best. In fact, since the algorithms by \cite{abbasi2019politex} and \cite{hao2020provably} are also based on exponential weight, they also can be regarded as variants of NPG. However, the estimators they construct suffer the same issue described above, and can only get $\order(T^{\frac{3}{4}})$ or $\order(T^{\frac{2}{3}})$ regret.

We remark that the version of NPG by \cite{agarwal2019optimality} can also learn the optimal policy with a worse sample complexity $\order\left(\frac{1}{\epsilon^6}\right)$ under a weaker assumption compared to \pref{ass: assump A4} (which replaces $\frac{1}{\sigma}$ with the \emph{relative condition number} $\kappa$ defined in their Section 6.3). It is not clear how our approach can extend to this setting and obtain improved sample complexity. We leave this as a future direction. 

\end{document}

% --- supplement: supplement.tex ---

% If your paper is accepted and the title of your paper is very long,
% the style will print as headings an error message. Use the following
% command to supply a shorter title of your paper so that it can be
% used as headings.
%
%\runningtitle{I use this title instead because the last one was very long}

% If your paper is accepted and the number of authors is large, the
% style will print as headings an error message. Use the following
% command to supply a shorter version of the authors names so that
% they can be used as headings (for example, use only the surnames)
%
%\runningauthor{Surname 1, Surname 2, Surname 3, ...., Surname n}

% Supplementary material: To improve readability, you must use a single-column format for the supplementary material.
\onecolumn
\aistatstitle{Instructions for Paper Submissions to AISTATS 2021: \\
Supplementary Materials}

\section{FORMATTING INSTRUCTIONS}

To prepare a supplementary pdf file, we ask the authors to use \texttt{aistats2021.sty} as a style file and to follow the same formatting instructions as in the main paper.
The only difference is that the supplementary material must be in a \emph{single-column} format.
You can use \texttt{supplement.tex} in our starter pack as a starting point, or append the supplementary content to the main paper and split the final PDF into two separate files.

Note that reviewers are under no obligation to examine your supplementary material.

\section{MISSING PROOFS}

The supplementary materials may contain detailed proofs of the results that are missing in the main paper.

\subsection{Proof of Lemma 3}

\textit{In this section, we present the detailed proof of Lemma 3 and then [ ... ]}

\section{ADDITIONAL EXPERIMENTS}

If you have additional experimental results, you may include them in the supplementary materials.

\subsection{The Effect of Regularization Parameter}

\textit{Our algorithm depends on the regularization parameter $\lambda$. Figure 1 below illustrates the effect of this parameter on the performance of our algorithm. As we can see, [ ... ]}

\vfill